\let\endfigure\undef
\def\endfigure{\end@float}
\let\endtable\undef
\def\endtable{\end@float}
\pgfplotsset{compat=1.18}
\DeclareMathOperator{\st}{\mathrm{subject\ to\ }}
\title{Observability-Aware Control for Quadrotor Formation Flight with Range-only Measurement}
\author{%
    H S Helson Go\footnote{Ph.D. Candidate, Institute for Aerospace Studies, University of Toronto, 4925 Dufferin St., North York, Ontario, Canada, M3H 5T6, \href{mailto:hei.go@mail.utoronto.ca}{\texttt{hei.go@mail.utoronto.ca}}.},%
    Ching Lok Chong\footnote{Ph.D., Oxford University, OCIAM; currently independent researcher, \href{mailto:ching.lok.chong.secondary@gmail.com}{\texttt{ching.lok.chong.secondary@gmail.com}}},%
    Longhao Qian\footnote{Postdoctoral Fellow, Institute for Aerospace Studies, University of Toronto, 4925 Dufferin St., North York, Ontario, Canada, M3H 5T6, \href{mailto:longhao.qian@mail.utoronto.ca}{\texttt{longhao.qian@mail.utoronto.ca}}}, and Hugh H.T. Liu\footnote{Professor, AIAA Associate Fellow, Institute for Aerospace Studies, University of Toronto, 4925 Dufferin St., North York, Ontario, Canada, M3H 5T6, \href{mailto:hugh.liu@utoronto.ca}{\texttt{hugh.liu@utoronto.ca}}}.
}
\affil{University of Toronto Institute for Aerospace Studies, 4925 Dufferin St, North York, Canada, ON M3H 5T6}
\date{December 2023}
\tikzset{%
    cframe/.pic ={%
            \draw[thick, ->, pic actions] (0, 0)-- (0, -1 / 2);
            \draw[thick, ->, pic actions] (0, 0)-- (1 / 2, 0);
            \draw[thick, ->, pic actions] (0, 0)-- ({-sqrt (2) / 4}, {-sqrt (2) / 4});
            \fill[black] (0, 0) circle (0.05);
            \node[\tikzpictextoptions] at (0, 0) {\tikzpictext};
        }
}
\begin{document}

\maketitle

\begin{abstract}
    Cooperative Localization is a promising approach to achieving safe quadrotor formation flight through precise positioning via low-cost inter-drone sensors. This paper develops an observability-aware control principle tailored to quadrotor formation flight with range-only inter-drone measurements. The control principle is based on a novel approximation of the local observability Gramian (LOG), which we name the Short-Term Local Observability Gramian (STLOG). The validity of STLOG is established by proving its link to directional estimation precision in nonlinear systems.
    We propose the Observability Predictive Controller (OPC), a receding-horizon controller that generates optimal inputs to enhance information gain in weakly observable state directions by maximizing the minimum eigenvalue of the STLOG.\@
    This reduces the risk of estimator divergence due to the unbounded growth of uncertainty in weakly observed state components.
    Monte Carlo simulations and flight experiments are conducted with quadrotors in a GNSS-denied ferrying mission, showing that the OPC improves positioning confidence and estimator robustness.
\end{abstract}

\section*{Nomenclature}

\renewcommand\arraystretch{1.0}
\noindent\begin{longtable*}{@{}l @{\quad=\quad} l@{}}
    \(d\in \mathbb{R}^+\) & Distance, in meters, between the leader and the follower, measured by the follower \\
    \(f_l, f_f \in \mathbb{R}^{+}\) & Unitless normalized thrust delivered along each quadrotor's body z-axis \\
    \(\mathbf{H}\) & Observation matrix, which is the Jacobian of the observation model \\
    \(k_l, k_f\) & Thrust coefficients of the leader and follower quadrotors, respectively, in Newtons \\
    \(\check{\mathbf{P}}_k, \hat{\mathbf{P}}_k\) & Prior and posterior error covariance matrix at timestep \(k\) \\
    \(\mathbf{q}\in \mathcal{S}^3\)& Quaternion representing rotation of the follower relative to the leader \\
    \(\mathbf{r} \in\mathbb{R}^{3\times 1}\)& Position of the leader relative to the follower, expressed in the follower's frame \\
    \(T\) & Time horizon over which observability is evaluated \\
    \(T_k, T_{k+1}\) & K\textsuperscript{th} timestep under the OPC discretization scheme \\
    \(\boldsymbol{\Sigma}\) & Sensor noise covariance matrix, assumed to be time-invariant \\
    \(\mathbf{v}\in \mathbb{R}^{3\times 1}\)& Velocity of the leader relative to the follower, expressed in the follower's frame \\
    \(\delta t\) & Correlation timescale of the observation noise \\
    \(\Delta T\) & Discretization step size for the system dynamics and predictive control\\
    \(\phi_t\) & Flow of a dynamical system up to time \(t\) \\
    \(\boldsymbol{\Phi}_{t_0, t_1}\) & State transition matrix evaluated from \(t_0\) up to time \(t_1\) \\
    \(\boldsymbol{\omega}_l, \boldsymbol{\omega}_f \in \mathbb{R}^{3\times 1}\) & Body rate of each quadrotor, expressed in each vehicle's own frame \\
    \(\mathbf{1}, \mathbf{1}_i\) & Identity matrix and its \(i\)\textsuperscript{th} column (\emph{c.\ f.} the \(i\)\textsuperscript{th} standard basis vector) \\
    \(\mathbf{0}_{m \times n}\) & A \(m\)-times-\(n\) matrix filled with zeroes \\
\end{longtable*}

\section{Introduction}

A cooperative location system (CLS) allows multiple drones to work together to determine their positions more accurately than they could individually. It can complement cooperative guidance systems or controllers in applications such as path-following~\cite{xargay2013time}, conflict-free formation flight~\cite{yang2019distributed}, and GNSS-denied navigation as a source of position feedback.
A CLS measures inter-drone range or bearings~\cite{ding2023cooperative} and shares data between drones, improving the precision of positioning over short distances between drones in tight formations, or serving as a low-cost alternative to Global Navigation Satellite Systems (GNSS) when it becomes unreliable in dynamic and complex environments. However, a specific challenge arises in the range-only CLS of the leader-follower configuration: the follower drone(s) cannot determine their spatial position relative to the leader from a one-dimensional range measurement.
Therefore, cooperative motion strategies must be carefully redesigned to ensure that the follower drone can determine its full state from range observations, preventing unbounded growth of localization uncertainty.

Observability-aware trajectory optimization addresses this challenge by generating motion patterns to maximize the information gained from observations for precise state reconstruction.
In the guidance and navigation contexts, observability-aware motion strategies have been applied to beacon-based navigation and bearings-only pursuit scenarios~\cite{yu2015observability,anjaly2018observability,fujiwara2024observability} to improve state estimation performance. Optimizing measures of observability has been proven to contribute to reducing the covariance of EKF estimation~\cite{hinson2013observability} --- a classic goal of cooperative localization~\cite{trawny2004optimized,zhou2011optimized} investigated in our previous work~\cite{go2024trajectory}. The key to quantifying observability lies in the metrics of the LOG, which can be approximated numerically by the empirical local observability Gramian (ELOG)~\cite{krener2009measures}, or through the Lie derivatives of the observation model by the expanded empirical local observability Gramian (E\textsuperscript{2}LOG)~\cite{hausman2017observability,grebe2021observability}.
The latter approach enables observability-aware optimization to solve a critical issue with optimizing first-order metrics like the EKF covariance matrix, the Fisher information matrix (FIM)~\cite{fujiwara2024observability,anjaly2018observability}, and the Cram\'er-Rao Lower Bound (CRLB): if parts of the system state are indirectly observed, their corresponding columns in the observation matrix will be constantly zero.
This leads to rank-deficiency in the FIM, a part of the EKF covariance remaining unchanged from the initial prior covariance, and an undefined CRLB --- a failure to quantify information about indirectly observed states in every case.
A mitigation strategy --- integrating the FIM over a trajectory --- is still less effective than using the LOG to quantify trajectory-dependent information, since the time-integrated FIM only approaches the ELOG under ideal limits~\cite{powel2015empirical}.
The limitations of alternatives justify using the LOG to quantify the informativeness of motion-coupled observations, particularly in partially observed systems, such as range-only quadrotor CLS\@.

Despite the theoretical maturity of the LOG and the development of approximating frameworks, applying observability-aware optimization to quadrotor CLS remains challenging.
This is primarily due to the reliance on mechanically simplified models in existing work.
Solving motion design problems with such models may result in control strategies that are dynamically unfeasible for real quadrotors and unattainable observability metrics.
In range-based relative localization, Salaris~\cite{salaris2017online} applied observability-aware optimization based on metrics of the LOG to range-only landmark-based localization.
The constructibility Gramian (CG) was later proposed~\cite{salaris2019online} to improve information quantification, and De Carli~\cite{de2021online} integrated it into a perception-aware path planner to improve multi-agent positioning.
Lastly, observability-based optimization has been extended beyond offline trajectory design into the area of real-time optimal control~\cite{sharma2014observability,boyinine2022observability}.
However, these approaches rely on simplified dynamics, including planar quadrotors~\cite{salaris2019online}, first-order kinematic systems~\cite{de2021online}, or unicycles~\cite{sharma2014observability,boyinine2022observability}.
Optimizing observability using such simplified models introduces a modeling gap with real quadrotor dynamics and may result in motion that is dynamically infeasible and observability metrics that are unattainable in practice.

These limitations motivate our development of a novel control framework for range-only quadrotor CLS designed to improve system observability through controlled motion, while balancing model fidelity and computational complexity to enable online feedback-based control, and to provide a principled foundation for eventual real-time implementations as computational constraints are relaxed.

This paper presents an observability-aware control framework for quadrotors using a range-only CLS.
Its primary application is leader-follower navigation in GNSS-denied environments, where a leader vehicle with global navigation capability guides a follower that relies solely on inter-vehicle ranging.
Representative mission profiles include ferrying drones through GNSS-denied tunnels, collaborative exploration of caves, or search-and-rescue operations in urban canyons or dense forests.
In these scenarios, the drones maintain a dynamic formation while navigating to a common approximate destination.
The follower is not required to maintain a fixed position in the formation, except for staying within a safe communication and ranging distance, so it is free to maneuver to improve its state observability.

We introduce the concept of short-term local observability Gramian (STLOG), a novel point-wise approximation of the LOG that enables real-time optimization.
We then formulate an Observability-aware control principle that maximizes STLOG-based observability over the trajectory horizon.
Our main contributions address the challenge of performing real-time observability-aware control under realistic quadrotor dynamics and dimensionally-limited range-only sensing:
\begin{enumerate}
    \item We derive the STLOG and incorporate it into an observability-aware optimal control principle, aiming to improve state estimation along the least observable directions, preventing unbounded growth of localization uncertainty.
    \item We apply our controller to quadrotors in formation flight, demonstrating that observability-optimal trajectories improve positioning performance in Monte-Carlo simulations with realistic noise models, and flight experiments.
\end{enumerate}

The remainder of the paper is organized as follows.
In Section~\ref{sec:problemStatement}, we describe the range-only quadrotor CLS and the optimal control problem.
In Section~\ref{sec:observabilityAnalysis}, we study our CLS through classical observability analysis.
We introduce the STLOG in Section~\ref{sec:measurementOfObservability}, then incorporate it into our control principle in Section~\ref{sec:OAC}.
Section~\ref{sec:experimental} presents Monte-Carlo simulation and flight experimental results.
Finally, Section~\ref{sec:conclusion} concludes the paper and discusses future work.

\section{Problem Statement}\label{sec:problemStatement}

\subsection{Mathematical Preliminaries}
Lowercase \(a\) denotes scalars, bold lowercase \(\mathbf{a}\) vectors, and bold uppercase \(\mathbf{A}\) matrices.
\(\mathbf{a}_{i:j}\) denotes a slice of a vector containing elements-\(i\) up to \(j\).
The Euclidean norm is \(\norm{\cdot}\), while \(\norm{\cdot}_{A}\) denotes the norm induced by some space \(A\).
The Euclidean inner product is written using the matrix notation \(\mathbf{a}^\top\mathbf{b}\), while \(\langle \cdot,\cdot\rangle\) exclusively denotes inner products in the Hilbert space of functions.
The cross operator \(\times\) transforms a three-dimensional vector into a skew-symmetric matrix.
We adopt passive body-to-world quaternions: \(\otimes\) denotes the Hamiltonian quaternion product, \(\mathbf{R}(\cdot)\) converts a quaternion to a rotation matrix, and \(\mathcal{I}^\star(\cdot)\) converts a three-dimensional vector to a pure imaginary quaternion.
These operations are defined in Eq.~\eqref{eq:rotationOps}.
\begin{equation}
    \label{eq:rotationOps}
    \mathbf{a}^\times = \begin{bmatrix}
        0    & -a_3 & a_2  \\
        a_3  & 0    & -a_1 \\
        -a_2 & a_1  & 0
    \end{bmatrix}, \quad \mathbf{q} \otimes \mathbf{p} = \begin{bmatrix}
        q_4 \mathbf{p} + \mathbf{q}_{1:3}^\times & \mathbf{q}_{1:3} \\
        -\mathbf{q}_{1:3}                        & q_4
    \end{bmatrix} \begin{bmatrix}
        \mathbf{p}_{1:3} \\ p_4
    \end{bmatrix}, \quad\mathbf{R}(\mathbf{q}) = \mathbf{1} + 2q_4 \mathbf{q}_{1:3}^\times + 2 {\mathbf{q}_{1:3}^\times}^2, \quad  \mathcal{I}^\star(\mathbf{a}) = \begin{bmatrix}
        \mathbf{a} \\
        0
    \end{bmatrix}
\end{equation}

We analyze observability in the context of the continuous-time system in Eqs.~\eqref{eq:dynamicalSystem} and~\eqref{eq:observation}:
\begin{align}
    \dot{\mathbf{x}}(t) & = \mathbf{f}(\mathbf{x}(t), \mathbf{u}(t)), \quad \mathbf{x}(0) = \mathbf{x}_0 \label{eq:dynamicalSystem}
    \\
    \mathbf{y}(t)       & = \mathbf{h}(\mathbf{x}(t)) + \boldsymbol{\eta}(t),\label{eq:observation}
\end{align}
with state space \(\mathcal{X} \subset \mathbb{R}^{n\times 1}\), input space \(\mathcal{U} \subset \mathbb{R}^{m\times 1} \) and observation space \(\mathcal{Y} \subset \mathbb{R}^{p\times 1}\).
The deterministic flow \(\phi_t: \mathcal{X} \rightarrow \mathcal{X}\) describes the solution of the system under a fixed input trajectory $\mathbf{u}(t)$ with \(\mathbf{x}(t) = \phi_t(\mathbf{x}_0)\).
The measurement noise \(\boldsymbol{\eta}(t)\) is modeled as a zero-mean Gaussian process (GP), with a covariance kernel $\mathbf{K}(t, t^\prime) = E\left[\boldsymbol{\eta}(t) \boldsymbol{\eta}{(t^\prime)}^\top \right]$.



\subsection{The Quadrotor Relative Motion Model}

The configuration of a pair of cooperatively localizing quadrotor drones is shown in Fig.~\ref{fig:problemFormulation}.
This figure also illustrates the difference between a weakly-observable static formation and a dynamic formation to be commanded through optimizing measures of observability.

\begin{figure}[htpb]
    \centering

    \subfloat[Observability-aware control improves follower positioning.]{
        \input{problem_formulation_1.tikz}
    }\hspace{0.2cm}
    \subfloat[Uniform trajectory induces ambiguous positioning of the follower.]{
        \input{problem_formulation_2.tikz}
    }
    \caption{Example configurations of a CL-aided leader-follower quadrotor team.}%
    \label{fig:problemFormulation}
\end{figure}

$\mathbf{p}_l, \mathbf{p}_f \in \mathbb{R}^{3 \times 1}$ are the positions of the leader drone and the follower drone, respectively, expressed in the inertial frame.
\(\mathbf{q}_l, \mathbf{q}_f\in \mathcal{S}^3\) are the passive body-to-earth quaternions representing the orientations of the leader and the follower, respectively.
\(\mathbf{v}_l, \mathbf{v}_f\in \mathbb{R}^{3\times 1}\) are the velocities of the leader and the follower, respectively, expressed in the inertial frame.

We define the state components for the relative motion model of the fleet below.
\(\mathbf{r} = \mathbf{R}\left(\mathbf{q}_f\right)\left(\mathbf{p}_l - \mathbf{p}_f\right)  \in \mathbb{R}^{3 \times 1}\) is the relative position between the leader and the follower, expressed in the follower's body frame.
\(\mathbf{q} = {(\mathbf{q}_f)}^{-1} \otimes \mathbf{q}^{l} \in \mathcal{S}^3\) is the quaternion representing the rotation of the follower relative to the leader.
\(\mathbf{v} = \mathbf{R}\left(\mathbf{q}_f\right)\left(\mathbf{v}_l - \mathbf{v}_f\right) \in \mathbb{R}^{3 \times 1}\) is the relative velocity between the leader and the follower, expressed in the follower's body frame.
\(f_l, f_f \in \mathbb{R}^+\) are the normalized thrusts of the leader and the follower, respectively, and they are converted into real thrusts in Newtons through \(k_l, k_f \in \mathbb{R}^+\); \(\boldsymbol{\omega}_l, \boldsymbol{\omega}_f \in \mathbb{R}^{3\times 1}\) are the body rates of the leader and the follower, respectively expressed in their body frames.
The continuous-time dynamics of relative motion, playing the role of \(\mathbf{f}\) in Eq.~\eqref{eq:dynamicalSystem}, are:
\begin{align}
    \label{eq:dynamicsModel}
    \dot{\mathbf{x}} & = \mathbf{f}(\mathbf{x}, \mathbf{u}) \triangleq
    \begin{bmatrix}
        {\mathbf{r}}^\times\boldsymbol{\omega}_f + \mathbf{v}                                                                                         \\
        -\frac{1}{2} \mathcal{I}^\star(\boldsymbol{\omega}_f)\otimes\mathbf{q} + \frac{1}{2}\mathbf{q}\otimes\mathcal{I}^\star(\boldsymbol{\omega}_l) \\
        {\mathbf{v}}^\times\boldsymbol{\omega}_f + k_l f_l\mathbf{R}\left(\mathbf{q}\right)\mathbf{1}_3- k_f f_f\mathbf{1}_3
    \end{bmatrix}, \quad \mathbf{x} = \begin{bmatrix}\mathbf{r} \\ \mathbf{q} \\ \mathbf{v}\end{bmatrix}, \quad \mathbf{u} = \begin{bmatrix}
                                                                                                                                 \mathbf{u}_l \\
                                                                                                                                 \mathbf{u}_f \\
                                                                                                                             \end{bmatrix},
\end{align}
where $\mathbf{x} \in \mathbb{R}^{10 \times 1}$ is the system state and $\mathbf{u} \in \mathbb{R}^{8 \times 1}$ the input, in which \(\mathbf{u}_l = {\begin{bmatrix} f_l & {\boldsymbol{\omega}_l}^\top\end{bmatrix}}^\top \in \mathbb{R}^{4 \times 1}\), and $\mathbf{u}_f = {\begin{bmatrix}f_f & {\boldsymbol{\omega}_f}^\top\end{bmatrix}}^\top \in \mathbb{R}^{4 \times 1}$ are the inputs of the leader and the follower, respectively.
\(\mathbf{u}_l\) and \(\mathbf{u}_f\) are in the thrust and body rate format, a attitude-level setpoint commonly accepted by autopilot systems, and often seen in literature on optimal quadrotor control~\cite{falanga2018pampc} that assumes a downstream attitude controller tracks them sufficiently quickly.

The discrete-time dynamics based on Eq.~\eqref{eq:dynamicsModel} can be discretized by integration of RK4 over a timestep \(\Delta T\):
\begin{equation}
    \mathbf{x}_{k+1} = \mathbf{f}_d(\mathbf{x}_k, \mathbf{u}_{k}) \triangleq \mathbf{x}_k +  \texttt{RK4}(\mathbf{f}(\cdot, \mathbf{u}_k), \mathbf{x}_k, \Delta T),
\end{equation}

Equation~\eqref{eq:dynamicsModel} can be written in a control-affine form \(\dot{\mathbf{x}} = \mathbf{f}_0(\mathbf{x}) + \sum_{i=1}^m \mathbf{f}_i(\mathbf{x}) u_i\) as follows:
\begin{equation}
    \label{eq:dynamicsModelControlAffine}
    \dot{\mathbf{x}} = \underbrace{\begin{bmatrix}
            \mathbf{v} \\
            \mathbf{0} \\
            \mathbf{0}
        \end{bmatrix}}_{\mathbf{f}_0} + \underbrace{
        \begin{bmatrix}
            \mathbf{0} \\
            \mathbf{0} \\
            k_l\mathbf{R}(\mathbf{q})\mathbf{1}_3
        \end{bmatrix}}_{\mathbf{f}_1}f_l
    + \underbrace{\begin{bmatrix}
            \mathbf{0}                 \\
            \mathbf{J}^{+}(\mathbf{q}) \\
            \mathbf{0}
        \end{bmatrix}}_{\mathbf{f}_2}\boldsymbol{\omega}_l
    + \underbrace{
        \begin{bmatrix}
            \mathbf{0} \\
            \mathbf{0} \\
            -k_f\mathbf{1}_3
        \end{bmatrix}}_{\mathbf{f}_3}f_f + \underbrace{\begin{bmatrix}
            {\mathbf{r}}^\times         \\
            -\mathbf{J}^{-}(\mathbf{q}) \\
            {\mathbf{v}}^\times
        \end{bmatrix}}_{\mathbf{f}_4}\boldsymbol{\omega}_f,\quad\text{where}\ \mathbf{J}^{\pm}(\mathbf{q}) = \begin{bmatrix}
        q_4 \mathbf{1} \pm \mathbf{q}_{1:3}^\times \\
        -\mathbf{q}_{1:3}
    \end{bmatrix}.
\end{equation}

\subsection{The Inter-drone Distance-only Observation Model}

In our CLS, the leader determines its position and velocity through GNSS\@.
The follower is equipped with no global positioning sensors except a UWB sensor, which provides only an inter-drone distance measurement \(d = \norm{\mathbf{r}}\).
The attitudes of the leader and the follower are measured by their individual IMU and compasses.
This information is shared between the leader and follower either by extending the UWB system to transmit richer data alongside range measurements or via Wi-Fi, allowing the CLS to compute the relative pose $\mathbf{q}$.
The observation model is as follows:
\begin{equation}
    \label{eq:observationModel}
    \mathbf{y} = \mathbf{h}(\mathbf{x}) \triangleq \begin{bmatrix}
        \frac{1}{2}{d}^2 \\
        \mathbf{q}
    \end{bmatrix} \in
    \mathbb{R}^{5 \times 1}.
\end{equation}
where $\frac{1}{2}{d}^2$ is used to avoid numerical problems without changing the available information about inter-drone distance.

\subsection{The Follower Drone Localization Problem with Range-Only Measurements}

For a nonlinear system such as a quadrotor CLS, state estimators are typically not optimal, and estimation precision can be improved by optimal control.
Formulating the optimization problem requires a principled choice of both the precision metric and the system model.

We chose observability, quantified by the LOG and approximated through Lie Derivatives following Ref.~\cite{hausman2017observability}, as the precision metric because Lie derivatives enable it to reflect how the system configuration, velocity, acceleration, and even higher-order motion lead to informative, evolving observations.
Many alternative metrics of positioning quality, such as EKF covariance, the FIM, and the CRLB, rely on first-order linearizations of the observation model.
They are generally unsuitable for systems with indirectly observed state components, such as inertial navigation models augmented by biases~\cite{hausman2017observability,grebe2021observability} and our CLS, where the inter-drone velocity \(\mathbf{v}\) is absent from Eq.~\eqref{eq:observationModel}.

For example, consider the EKF posterior covariance matrix \(\hat{\mathbf{P}}_k\), used in Refs.~\cite{trawny2004optimized,go2024trajectory} and shown below:
\begin{gather}
    \min_{\mathbf{u}_1,\ldots,N} \gamma\left(\hat{\mathbf{P}}_N\right) \\
    \st\ \hat{\mathbf{P}}_k^{-1} = \mathbf{H}_k^\top \boldsymbol{\Sigma}\mathbf{H}_k + \check{\mathbf{P}}_{k-1},\quad \mathbf{x}_{k+1} = \mathbf{f}_d(\mathbf{x}_k,\mathbf{u}_k) \label{eq:EKFUpdate}
\end{gather}
where \(\gamma\) is a metric of choice, e.g.\, \(\tr, \det\), or minimum eigenvalue \(\lambda_{\min}\); \(\mathbf{H}_k\) is the observation Jacobian and \(\check{\mathbf{P}}_0\) is the initial prior covariance.
Applied to our CLS, \(\mathbf{H}_k\) is computed by differentiating Eq.~\eqref{eq:observationModel} with respect to \(\mathbf{x}_k\), giving:
\begin{equation}
    \mathbf{H}_k = D \mathbf{h}(\mathbf{x}_k) = \begin{bmatrix}
        \mathbf{r}_k^\top & \mathbf{0} & \mathbf{0} \\
        \mathbf{0}        & \mathbf{1} & \mathbf{0}
    \end{bmatrix} \in \mathbb{R}^{5\times 10},
\end{equation}
which causes \(\mathbf{H}_k^\top \boldsymbol{\Sigma}\mathbf{H}_k\), also known as the one-step FIM, to have a constantly zero bottom-right block.
When the Kalman covariance is updated according to Eq.~\eqref{eq:EKFUpdate}, \(\hat{\mathbf{P}}_k\) cannot change from \(\check{\mathbf{P}}_0\) in the velocity subspace, and no change in velocity estimation uncertainty is ever captured.
The broader issue, that first-order metrics cannot capture the coupling between motion and observability in indirectly observed states, is shared by other metrics shown in Table~\ref{tab:firstOrderMetrics}.

\begin{table}[htbp]
    \centering
    \caption{Failure modes of first-order metrics under the presence of indirectly observed states.}%
    \label{tab:firstOrderMetrics}
    \begin{tabularx}{\linewidth}{Xccc} \toprule
        First-Order Metrics & EKF covariance                                     & FIM                                                               & CRLB                      \\ \midrule
        Expression          & \(\hat{\mathbf{P}}_k\), see Eq.~\ref{eq:EKFUpdate} & \(\mathbf{I} = \mathbf{H}_k^\top\boldsymbol{\Sigma}\mathbf{H}_k\) & \(\det{\mathbf{I}}^{-1}\) \\ \midrule
        Failure mode        & Unchanged from prior in subspaces                  & Rank-deficient due to zero blocks                                 & Undefined                 \\ \bottomrule
    \end{tabularx}
\end{table}

We chose inter-drone dynamics as our system model to ensure the realizability of optimized motion strategies.
The unicycle model, widely used in CLS, is not well-suited to quadrotors, since its motion is coupled to heading, but quadrotor motion is governed by thrust through pitch and roll.
Another commonly used simple model is the first-order kinematic system.
Such models have proven effective for offline planning of quadrotor trajectories, where spline parameterization~\cite{salaris2017online,de2021online} or analytic guidance laws~\cite{li2022three} enforce smoothness and dynamical feasibility.
However, the present work considers a receding-horizon optimal control formulation, in which control inputs are optimized directly at each timestep.
Unaided by trajectory parameterization, kinematic models no longer satisfy implicit smoothness.
If observability-based objectives induce sharp variations in the velocity commands to excite the system for informative observations, then quadrotors, which cannot instantaneously change their velocity, may struggle to track them.

We demonstrate this phenomenon in a diagnostic simulation where our leader-follower CLS is instantiated with a first-order kinematic model in a receding-horizon optimal control setting.
The follower's \emph{velocity} commands at each timestep are optimized under the minimum eigenvalue of the ELOG~\cite{krener2009measures} and E\textsuperscript{2}LOG~\cite{hausman2017observability} over one predictive horizon, without trajectory parameterization, and bounded below 10 m/s.
The accelerations implied by the resulting velocity commands are plotted in Fig.~\ref{fig:unattainableAccelerations}.
Despite the bounds on velocity, accelerations peaked above \(50 \mathrm{m/s^2}\) and remained above \(20 \mathrm{m/s^2}\) --- the maximum achievable by our F450 experimental platforms --- for up to 70\% and 90\% of the time horizon under ELOG and E\textsuperscript{2}LOG, respectively.
This is compared to safe acceleration levels below \(5 \mathrm{m/s^2}\), produced by our optimal controller using quadrotor dynamics in subsequent simulations in Sec.~\ref{sec:experimental}, and also displayed in Fig.~\ref{fig:unattainableAccelerations}.

\begin{figure}[htbp]
    \centering
    \includegraphics{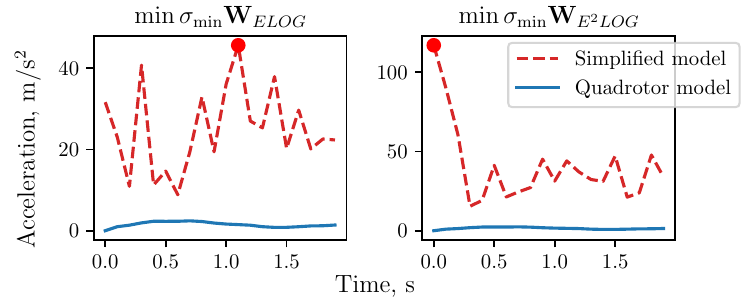}%
    \caption{Translation acceleration required by optimized commands for a first-order kinematic model}%
    \label{fig:unattainableAccelerations}
\end{figure}

These results indicate that, in an optimal control setting without trajectory parameterization, simplified models like first-order kinematic models may yield observability-seeking control commands whose implied accelerations are unattainable by quadrotors.
This motivates our choice of the more realistic quadrotor dynamics in Eq.~\eqref{eq:dynamicsModel} in which thrust and body-rate inputs are optimized directly and are natively compatible with quadrotor low-level controllers.

Subsequently, we start by proving that classical local weak observability is attainable by our range-only quadrotor CLS in Section~\ref{sec:observabilityAnalysis} to establish the viability of this system.
Then, we address the issue of approximating the LOG efficiently and reasonably through the STLOG in Section~\ref{sec:measurementOfObservability}and use it to develop a control principle in Section~\ref{sec:OAC}.


\section{Observability Analysis with Range-Only Measurement}%
\label{sec:observabilityAnalysis}

In this section, we review the classical nonlinear observability analysis and perform it on our CLS to prove that it satisfies weak local observability.
We also introduce a local observability index to prepare for investigating whether weak local observability can be automatically satisfied by observability-aware optimal control.
We use the definition from Ref.~\cite{antsaklis2006linear}, that a system is observable if and only if the Observability Gramian (OG)
\begin{equation}
    \label{eq:observabilityGramian}
    \mathbf{W} = \int_0^T \boldsymbol{\Phi}_t^\top \mathbf{H}^\top \mathbf{H} \boldsymbol{\Phi}_t dt,
\end{equation}
is nonsingular.
For a nonlinear system, \(\mathbf{W}\) has no closed-form expression and varies with the \emph{local} point of evaluation, so it is referred to as the LOG~\cite{krener2009measures}.
In practice, Hermann and Krener's~\cite{hermann1977nonlinear} \emph{rank test} is used to detect the observability of a dynamical system.
For control-affine systems like Eq.~\eqref{eq:dynamicsModelControlAffine}, the r\textsuperscript{th}-order nonlinear Observability Matrix is:
\begin{equation}
    \label{eq:observabilityMatrix}
    \boldsymbol{\mathcal{O}}^{(r)}(\mathbf{x}) \triangleq
    \begin{bmatrix}
        \vdots                                                     \\
        DL^{({k})}_{\mathbf{f}_i, \ldots, j}\mathbf{h}(\mathbf{x}) \\
        \vdots
    \end{bmatrix}\quad i,\ldots,j = 0,\ldots,m,\quad k = 0,\ldots,r
\end{equation}
and the system is \emph{weakly locally observable} at~\(\mathbf{x}\) if $\rank \boldsymbol{\mathcal{O}}^{(r)}(\mathbf{x}) = \dim \mathcal{X}$ for some $r$, indicating the existence of control inputs that distinguish initial conditions locally.
We thereby analyze the observability of our system in Proposition~\ref{prop:observability}.

\begin{prop}\label{prop:observability}
    The range-only quadrotor CLS given by Eqs.~\eqref{eq:dynamicsModel} and~\eqref{eq:observationModel} is weakly locally observable, since its 3\textsuperscript{rd}-order observability matrix may be expressed as:
    \begin{equation}\label{eq:observabilityMatrixCLS}
        \boldsymbol{\mathcal{O}}^{(3)}(\mathbf{x}) = \left[\begin{array}{ccc}
                {\mathbf{r}}^\top                              & \mathbf{0}_{1\times 4}       & \mathbf{0}_{1\times 3} \\
                \mathbf{0}_{4\times 3}                         & \mathbf{1}_{4\times 4}       & \mathbf{0}_{4\times 3} \\\midrule
                {\mathbf{v}}^\top                              & \mathbf{0}_{1\times 4}       & {\mathbf{r}}^\top      \\\midrule
                \mathbf{0}_{1\times 3}                         & \mathbf{0}_{1\times 4}       & 2{\mathbf{v}}^\top     \\\midrule
                \mathbf{1}_3^\top\mathbf{R}{(\mathbf{q})}^\top & {\mathbf{r}}^\top \mathbf{J} & \mathbf{0}_{1\times 3} \\\midrule
                -\mathbf{1}_3^\top                             & \mathbf{0}_{1\times 4}       & \mathbf{0}_{1\times 3} \\\midrule
                \mathbf{0}_{1\times 3}                         & \mathbf{0}_{1\times 4}       & -2\mathbf{1}_3^\top
            \end{array}\right],
    \end{equation}

    This matrix attains a rank of 10.
    Therefore, the system satisfies the observability rank condition.
    \begin{proof}
        The partitions of \(\boldsymbol{\mathcal{O}}^{(3)}(\mathbf{x})\) in Eq.~\eqref{eq:observabilityMatrixCLS} are the nonzero rows of \(DL\mathbf{h}\), \(DL_{\mathbf{f}_0}\mathbf{h}\), \(DL_{\mathbf{f}_0}^2 \mathbf{h}\), \(DL_{\mathbf{f}_3 \mathbf{f}_0}\mathbf{h}\), \(DL_{\mathbf{f}_1 \mathbf{f}_0}\mathbf{h}\), and \(DL_{\mathbf{f}_3}L_{\mathbf{f}_0}^2 \mathbf{h}\), computed by the MATLAB symbolic toolbox\footnote{The relevant code can be found in \url{https://github.com/FSC-Lab/observability_aware_controller/symbolic_algebra}}.
        By row reduction, it is found to have a rank of 10.

    \end{proof}
\end{prop}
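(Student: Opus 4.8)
The plan is to separate the rank computation into the directly observed attitude block and the indirectly observed $(\mathbf{r},\mathbf{v})$ block. Writing the range output as $h_1 = \frac{1}{2}\norm{\mathbf{r}}^2$ and the attitude output as $\mathbf{q}$, I note that $D h_1 = [\,\mathbf{r}^\top\ \ \mathbf{0}_{1\times 4}\ \ \mathbf{0}_{1\times 3}\,]$ and $D\mathbf{q} = [\,\mathbf{0}\ \ \mathbf{1}\ \ \mathbf{0}\,]$ already furnish the first two block rows, and the identity block spans all four $\mathbf{q}$-columns outright. Hence it suffices to show that the higher-order Lie derivatives of the \emph{scalar} output $h_1$ supply six further rows that, after the $\mathbf{q}$-columns are cleared, attain rank $6$ in the remaining $(\mathbf{r},\mathbf{v})$-columns. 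This reduces a $14\times 10$ rank test to the nonsingularity of a $6\times 6$ submatrix.

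First I would build the drift chain. Since $\mathbf{f}_0=[\,\mathbf{v}^\top\ \mathbf{0}\ \mathbf{0}\,]^\top$, I obtain $L_{\mathbf{f}_0}h_1 = \mathbf{r}^\top\mathbf{v}$ with gradient $[\,\mathbf{v}^\top\ \mathbf{0}\ \mathbf{r}^\top\,]$, and iterating, $L_{\mathbf{f}_0}^2 h_1 = \norm{\mathbf{v}}^2$ with gradient $[\,\mathbf{0}\ \mathbf{0}\ 2\mathbf{v}^\top\,]$; these are rows three and four. To inject the constant and rotated thrust directions I would differentiate once more along the thrust fields $\mathbf{f}_1$ (leader) and $\mathbf{f}_3$ (follower), obtaining $L_{\mathbf{f}_1}L_{\mathbf{f}_0}h_1 = \mathbf{r}^\top\mathbf{R}(\mathbf{q})\mathbf{1}_3$, $L_{\mathbf{f}_3}L_{\mathbf{f}_0}h_1 = -\mathbf{r}^\top\mathbf{1}_3$, and $L_{\mathbf{f}_3}L_{\mathbf{f}_0}^2 h_1 = -2\mathbf{v}^\top\mathbf{1}_3$. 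Taking gradients recovers exactly the thrust-coupled rows of Eq.~\eqref{eq:observabilityMatrixCLS}; the only delicate term is the $\mathbf{q}$-block $\mathbf{r}^\top\mathbf{J}$ arising from $\partial\!\left(\mathbf{R}(\mathbf{q})\mathbf{1}_3\right)/\partial\mathbf{q}$, for which the symbolic toolbox is the practical route.

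For the rank itself, I would use the four quaternion identity rows to eliminate the $\mathbf{r}^\top\mathbf{J}$ entry from the thrust-coupled row, after which all nonzero content lives in the six $(\mathbf{r},\mathbf{v})$-columns. Permuting the six $h_1$-derived rows so that those supported only on the $\mathbf{r}$-columns come first renders the resulting $6\times 6$ submatrix block lower-triangular, with diagonal blocks whose rows are $\{\mathbf{r}^\top,\ (\mathbf{R}(\mathbf{q})\mathbf{1}_3)^\top,\ \mathbf{1}_3^\top\}$ and, up to scalar factors, $\{\mathbf{r}^\top,\ \mathbf{v}^\top,\ \mathbf{1}_3^\top\}$. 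Its determinant then factors as $\det[\,\mathbf{r}\ \ \mathbf{R}(\mathbf{q})\mathbf{1}_3\ \ \mathbf{1}_3\,]\cdot\det[\,\mathbf{r}\ \ \mathbf{v}\ \ \mathbf{1}_3\,]$ up to sign and scale, so the full matrix reaches rank $10$ precisely when these two triple products are nonzero.

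The main obstacle is not the algebra but the genericity. The rank equals $10$ only on the open dense set where $\{\mathbf{r},\mathbf{v},\mathbf{1}_3\}$ and $\{\mathbf{r},\mathbf{R}(\mathbf{q})\mathbf{1}_3,\mathbf{1}_3\}$ are each linearly independent, and it drops on the measure-zero degenerate configurations (for instance, when the relative position, the relative velocity, and the body-$z$/gravity axis become coplanar). Since weak local observability demands the rank condition only at a point, I would state these non-degeneracy conditions explicitly and observe that they hold generically, matching the claim that the matrix \emph{attains} rank $10$. The secondary obstacle is merely carrying the quaternion Lie derivatives and the $\mathbf{J}$-Jacobian through without sign errors, which is exactly why deferring that bookkeeping to symbolic computation is reasonable.
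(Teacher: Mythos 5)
Your proposal is correct, and its skeleton coincides with the paper's proof: you compute exactly the same Lie-derivative gradients (of the range output along \(\mathbf{f}_0\), \(\mathbf{f}_0^2\), and the thrust-coupled directions \(\mathbf{f}_1\), \(\mathbf{f}_3\)) that the paper lists as the row partitions of Eq.~\eqref{eq:observabilityMatrixCLS}, and your by-hand derivations of every row agree with the stated matrix. Where you genuinely diverge is in how the rank is established. The paper delegates both the symbolic algebra and the row reduction to MATLAB and simply reports rank 10; you replace that black box with a structural argument: use the \(\mathbf{1}_{4\times 4}\) block to clear the \(\mathbf{q}\)-columns (which, usefully, makes the exact form of \(\mathbf{J}\) irrelevant to the rank), permute the six remaining rows into a block lower-triangular \(6\times 6\) matrix, and factor its determinant, up to nonzero scalars, as \(\det\!\left[\mathbf{r}\ \ \mathbf{R}(\mathbf{q})\mathbf{1}_3\ \ \mathbf{1}_3\right]\cdot\det\!\left[\mathbf{r}\ \ \mathbf{v}\ \ \mathbf{1}_3\right]\). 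This buys something the paper's proof does not state: a precise characterization of \emph{where} the rank-10 claim holds, namely off the degenerate set on which \(\{\mathbf{r},\mathbf{v},\mathbf{1}_3\}\) or \(\{\mathbf{r},\mathbf{R}(\mathbf{q})\mathbf{1}_3,\mathbf{1}_3\}\) become linearly dependent. The paper only gestures at this after the proposition (``the rank \ldots falls below 10 \ldots such as when \(\mathbf{r}\) becomes a scalar multiple of \(\mathbf{v}\)''), and your condition recovers that example as the special case \(\det\!\left[\mathbf{r}\ \ \mathbf{v}\ \ \mathbf{1}_3\right]=0\). Your reading that the matrix \emph{attains} rank 10 only generically is also the correct interpretation of the claim, since the Hermann--Krener rank condition is pointwise; making the non-degeneracy conditions explicit strengthens rather than weakens the proposition.
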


We can identify that the rank of \(\boldsymbol{\mathcal{O}}^{(3)}(\mathbf{x})\) falls below 10 under certain conditions in state \(\mathbf{x}\), such as when \(\mathbf{r}\) becomes a scalar multiple of \(\mathbf{v}\).
Although we do not investigate these conditions in detail, we pose the question of whether observability-aware control can direct the state of the system to satisfy weak local observability.
To this end, we define \emph{local observability index} as the minimum order of Lie derivatives necessary for \(\boldsymbol{\mathcal{O}}^{(r)}(\mathbf{x}, \mathbf{u})\) to attain full rank:
\begin{equation}\label{eq:rstardef}
    r_*(\mathbf{x}, \mathbf{u}) = \inf \left\{ r \ \vert \ \rank \boldsymbol{\mathcal{O}}^{(r)}(\mathbf{x}, \mathbf{u}) = \dim \mathcal{X} \right\},
\end{equation}
where:
\begin{equation}\label{eq:observabilityMatrixControl}
    \boldsymbol{\mathcal{O}}^{(r)}(\mathbf{x}, \mathbf{u}) = \begin{bmatrix}
        \vdots \\  D{L_{\mathbf{f}}^{(k)} \mathbf{h}(\mathbf{x},\mathbf{u})} \\ \vdots
    \end{bmatrix},\quad k = 0,\ldots,r,
\end{equation}
is another observability matrix, parameterized by inputs but free of the control-affine requirement, as seen in~\cite{hausman2017observability,yu2015observability}.

Thus, we proved that the quadrotor CLS satisfies weak local observability, and defined the local observability index to help us investigate whether this property is guaranteed under observability-aware control in Section~\ref{sub:STLOGranktesting}.

\section{The Measure of Observability via STLOG}%
\label{sec:measurementOfObservability}

In this section, we consider the problem of estimating a system's initial condition from noisy observations along its trajectory, according to the definition of observability, and then derive the STLOG based on our findings.

\begin{figure}[htpb]
    \centering
    \begin{tikzpicture}[]

    \pgfmathdeclarefunction{irregular}{1}{%
        \pgfmathparse{0.3*sin(1.1*#1 r) + 0.2*sin(2.3*#1 r + 15) + 0.1*cos(3.7*#1 r)}%
    }

    \begin{scope}
        \clip (0,-2) rectangle (7,2);
        \draw[fill=gray!20, draw=none]
        plot[smooth, domain=0:7, samples=200] (\x, {irregular(\x) + 0.5})
        -- plot[smooth, domain=7:0, samples=200] (\x, {irregular(\x) - 0.5})
        -- cycle;
        \draw[thick, blue]
        plot[smooth, domain=0:9, samples=200] (\x, {irregular(\x)});
    \end{scope}
    \draw[->] (0,-2) -- (0,2) node[above] {$y$};
    \draw[->] (0,-2) --  (0.45\linewidth,-2) node [right] {$t$};
    \coordinate (cutoff) at (7, {irregular(7)});
    \draw (cutoff) -- (7, -2) node [above right] {$T$};

    \foreach \x in {1.5,3,4.5} {
            \pgfmathsetmacro{\y}{irregular(\x)}
            \draw[blue, line width=0.75pt] (\x, \y + 0.7) -- (\x, \y - 0.7);
            \draw[blue, line width=0.75pt] (\x-0.1, \y + 0.7) -- (\x+.1, \y + 0.7);
            \draw[blue, line width=0.75pt] (\x-0.1, \y - 0.7) -- (\x+.1, \y - 0.7);
            \draw[cyan, line width=2.5pt] (\x, \y + 0.5) -- (\x, \y - 0.5);
        }

    \coordinate (labelAnchor) at (1.5,{irregular(1.5)});
    \coordinate (labelAnchor1) at (3.5,{irregular(3.5)});
    \coordinate (labelAnchor2) at (3,{irregular(3)});
    \coordinate (labelAnchor3) at (4.5,{irregular(4.5)});
    \coordinate (labelAnchor6) at (5,{irregular(5)});
    \node[above,align=center, text width=0.4\linewidth] (oursEnd) at ($(labelAnchor1) + (0, 0.8)$) {(Ours) Continuous-time observation process \(\mathbf{y}(t)\) with GP noise \(\boldsymbol{\eta}(t)\) enables inference without relying on discrete observations};
    \node[below, align=center] (classicalEnd) at ($(labelAnchor) + (1, -1.4)$) {(Classical) Discrete observations \\dictate uncertainty propagation};
    \draw [->, PineGreen, thick] (oursEnd) -- (labelAnchor1);

    \foreach \it in {labelAnchor, labelAnchor2, labelAnchor3} {
            \draw[->, red, thick] (\it) -- (classicalEnd);
        }


    \begin{scope}[shift={($(labelAnchor6) + (1.0, -1.4)$)}]
        \coordinate (ellipseCenter) at (0.0,0.);
        \node[draw, fill=white, opacity=1, label={[align=center]below:(Ours) posterior\\inferred at any \(t\in[0, T]\)}, minimum width=1cm, minimum height=0.8cm] (covLabel) at (ellipseCenter) {};
        \fill[green, opacity=0.4] (ellipseCenter) ellipse (0.4 and 0.3);
        \node at (ellipseCenter) {$\hat{\mathbf{x}}_0$};
    \end{scope}
    \draw[thick, ->] (covLabel) -- (labelAnchor6);

\end{tikzpicture}
    \caption{Visualizing initial state estimation by (our) observation process and (classical) discrete observations.}%
    \label{fig:oacViz}
\end{figure}

We model an uncertain initial condition as $\mathbf{x}_0\sim \mathcal{N}\left(\mathbf{x}_0^*, \check{\mathbf{P}}_0\right)$.
Across the horizon \([0, T]\), the system receives observations $\mathbf{y}(t)$.
Since we are deriving the LOG, a continuous-time quantity, we depart from the classical state estimation approach of analyzing a few discrete observations as illustrated in Fig.~\ref{fig:oacViz}, and instead model the observation noise $\boldsymbol{\eta}(t)$ as a GP\@.
We represent the estimate for $\mathbf{x}_0$ given $\mathbf{y}(t)$ by the posterior distribution $p(\mathbf{x}_0 \mid \mathbf{y}(t))$, which is factored by Bayes' rule:
\begin{equation}\label{eq:bayesrule0}
    p(\mathbf{x}_0 \mid \mathbf{y}(t) ) = \frac{p(\mathbf{y}(t) \mid \mathbf{x}_0)}{p(\mathbf{y}(t))} p(\mathbf{x}_0),
\end{equation}

The key challenge in this problem is that the likelihood ratio ${p(\mathbf{y}(t) | \mathbf{x}_0)}/{p(\mathbf{y}(t))}$ is between \emph{infinite-dimensional distributions}, which is less tractable for analysis and computation than the elementary, finite-dimensional case.
We outline these difficulties heuristically and defer a full rigorous treatment using Gaussian measure theory to Appendix~\ref{app:CameronMartin}.

We consider the covariance kernel $\mathbf{K}(t, t^\prime) = E\left[\boldsymbol{\eta}(t) \boldsymbol{\eta}{(t^\prime)}^\top \right]$ of GP $\boldsymbol{\eta}(t)$, and we consider the map from an initial condition to a function in time of motion-coupled observations to be the \emph{system response} $\mathcal{A}: \mathcal{X} \rightarrow C^\infty([0,T], \mathbb{R}^p)$, defined as:
\begin{equation}\label{eq:systemResponse}
    \left( \mathcal{A}(\mathbf{x}_0) \right)(t) = \mathcal{A}_t(\mathbf{x}_0) = \mathbf{h} \left(\phi_t(\mathbf{x}_0)\right) = \mathbf{h} \left(\mathbf{x}(t)\right),
\end{equation}

Subject to the system model in Eq.~\eqref{eq:dynamicalSystem} and~\eqref{eq:observation}, the conditional observation $\mathbf{y}(t) | \mathbf{x}_0$ is also a GP, with mean $\mathcal{A}_t(\mathbf{x}_0)$ and covariance $\mathbf{K}(t, t^\prime)$.
This enables us to consider a subproblem of computing the likelihood ratio between two initial conditions, ${p(\mathbf{y}(t) | \mathbf{x}_0)}/{p(\mathbf{y}(t) | \mathbf{x}_1)}$.
Recall from Mercer's theorem in Ref.~\cite{seeger2004gaussian} that the covariance kernel $\mathbf{K}(t, t^\prime)$, under suitable assumptions, defines an integral operator $\mathcal{K}$ in the Hilbert space $L^2([0,T], \mathbb{R}^p)$ of vector-valued square-integrable functions on $[0,T]$.
The operator $\mathcal{K}$ has a complete set of orthonormal eigenfunctions $\left\{ \mathbf{v}_n(t) \right\}_{n=0}^\infty$ with respect to the inner product \(\langle\cdot,\cdot\rangle\) in \({L^2([0,T], \mathbb{R}^p)}\), with corresponding eigenvalues $\left\{ \lambda_n \right\}_{n=0}^\infty$ such that $(\mathcal{K} \mathbf{v}_n)(t) = \lambda_n \mathbf{v}_n(t)$.

Let $y_{n}\lvert_i =\langle \mathbf{v}_n(t), \mathbf{y}(t) \rangle | \mathbf{x}_i$ be the component of $\mathbf{y}(t) | \mathbf{x}_i$ in the eigenfunction basis defined by $\mathcal{K}$.
Each $y_{n}\lvert_i$ is then a one-dimensional Gaussian variable, with mean $\langle \mathbf{v}_n(t), \mathcal{A}_t(\mathbf{x}_i) \rangle$ and variance $\lambda_n$.
Moreover, $y_{n}\lvert_i$ and $y_{m}\lvert_i$ are independent of $n \neq m$.
An elementary attempt to evaluate ${p(\mathbf{y}(t) | \mathbf{x}_0)}/{p(\mathbf{y}(t) | \mathbf{x}_1)}$ is to treat it as the infinite product of the component-wise likelihood ratios:
\begin{align}\label{eqn:heuristic_likelihood_ratio}
    \frac{p(\mathbf{y}(t) | \mathbf{x}_0)}{p(\mathbf{y}(t) | \mathbf{x}_1)} \overset{heuristic}{=} \frac{\Pi_{n=0}^\infty p(y_{n}\lvert_0)}{\Pi_{n=0}^\infty p(y_{n}\lvert_1)} \overset{heuristic}{=} \Pi_{n=0}^\infty\frac{ p(y_{n}\lvert_0)}{ p(y_{n}\lvert_1)},
\end{align}
where we have annotated the heuristic nature of the relation between each expression.
The heuristic expressions in Eq.~\eqref{eqn:heuristic_likelihood_ratio} elicit questions regarding:
\begin{enumerate}
    \item Whether the `joint probability distribution' $ \Pi_{n=0}^\infty p(y_{n}\lvert_i)$ exists meaningfully; and if it does, whether its samples belong to a sufficiently regular function space, such as \({L^2([0,T], \mathbb{R}^p)}\).
    \item Whether the infinite product of likelihood ratios $\Pi_{n=0}^\infty ({ p(y_{n}\lvert_0)}/{ p(y_{n}\lvert_1)})$ converges.
\end{enumerate}

We address these questions rigorously in Appendix~\ref{app:CameronMartin}, but state the main result for proceeding with the initial condition estimation problem in Proposition~\ref{prop:likelihood_maintextver}.

\begin{prop}\label{prop:likelihood_maintextver}
    Under suitable assumptions on the system response $\mathcal{A}$ in Eq.~\eqref{eq:systemResponse} and the covariance operator $\mathcal{K}$, the likelihood ratio ${p(\mathbf{y}(t) | \mathbf{x}_0)}/{p(\mathbf{y}(t))}$ can be expressed in closed-form as:
    \begin{equation}\label{eqn:likelihoodratio1}
        \frac{p(\mathbf{y}(t) | \mathbf{x}_0)}{p(\mathbf{y}(t))} = C(\mathbf{y}(t)) \exp\left( \langle \mathbf{y}(t), \mathcal{K}^{-1}\mathcal{A}_t(\mathbf{x}_0) \rangle - \frac{1}{2} \langle \mathcal{A}_t(\mathbf{x}_0), \mathcal{K}^{-1}\mathcal{A}_t(\mathbf{x}_0) \rangle \right).
    \end{equation}
    where \(\langle\cdot,\cdot\rangle\) denotes the inner product on \({L^2([0,T], \mathbb{R}^p)}\).

    \begin{proof}
        The conditions for Eq.~\eqref{eqn:likelihoodratio1} to be derived is given in Proposition~\ref{prop:main_likelihood_ratio} and Corollary~\ref{cor:main_likelihood_ratio}
    \end{proof}
\end{prop}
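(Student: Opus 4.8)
The plan is to read Eq.~\eqref{eqn:likelihoodratio1} as the Radon--Nikodym derivative between two Gaussian measures on the function space $L^2([0,T],\mathbb{R}^p)$ that share the single covariance operator $\mathcal{K}$ and differ only in their means. Conditioned on $\mathbf{x}_0$, the observation law $p(\mathbf{y}(t)\mid\mathbf{x}_0)$ is the Gaussian measure $\mu_{\mathbf{x}_0}$ with mean $\mathcal{A}_t(\mathbf{x}_0)$ and covariance $\mathcal{K}$, whereas the marginal $p(\mathbf{y}(t))$ is the non-Gaussian mixture obtained by averaging $\mu_{\mathbf{x}_0}$ over the prior $\mathbf{x}_0\sim\mathcal{N}(\mathbf{x}_0^*,\check{\mathbf{P}}_0)$. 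I would fix the centered noise measure $\mu_0$ (mean zero, covariance $\mathcal{K}$) as a reference and factor $p(\mathbf{y}(t)\mid\mathbf{x}_0)/p(\mathbf{y}(t))=\big(d\mu_{\mathbf{x}_0}/d\mu_0\big)\cdot\big(d\mu_0/dp(\mathbf{y}(t))\big)$, so that every factor independent of $\mathbf{x}_0$---in particular the discrepancy between $p(\mathbf{y}(t))$ and $\mu_0$---is absorbed into the prefactor $C(\mathbf{y}(t))$. The entire $\mathbf{x}_0$-dependence then resides in $d\mu_{\mathbf{x}_0}/d\mu_0$, which the Cameron--Martin theorem delivers in exactly the claimed exponential form.

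To make this rigorous and simultaneously vindicate the heuristic product in Eq.~\eqref{eqn:heuristic_likelihood_ratio}, I would work in the orthonormal eigenbasis $\{\mathbf{v}_n\}$ of $\mathcal{K}$ and truncate to the first $N$ modes. Writing $y_n=\langle\mathbf{v}_n,\mathbf{y}(t)\rangle$ and $a_n=\langle\mathbf{v}_n,\mathcal{A}_t(\mathbf{x}_0)\rangle$, each coordinate $y_n\mid\mathbf{x}_0$ is an independent one-dimensional Gaussian with mean $a_n$ and variance $\lambda_n$, so the truncated ratio against $\mu_0$ is the finite product $\prod_{n=0}^{N}\exp\big(a_n y_n/\lambda_n-a_n^2/(2\lambda_n)\big)$. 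Using the spectral representation $\mathcal{K}^{-1}\mathcal{A}_t(\mathbf{x}_0)=\sum_n\lambda_n^{-1}a_n\mathbf{v}_n$, the two sums in the exponent converge to $\sum_n a_n y_n/\lambda_n=\langle\mathbf{y}(t),\mathcal{K}^{-1}\mathcal{A}_t(\mathbf{x}_0)\rangle$ and $\sum_n a_n^2/\lambda_n=\langle\mathcal{A}_t(\mathbf{x}_0),\mathcal{K}^{-1}\mathcal{A}_t(\mathbf{x}_0)\rangle$, and passing to $N\to\infty$ reproduces Eq.~\eqref{eqn:likelihoodratio1}. The first of the two questions raised above is answered by noting that the truncated measures form a consistent, tight family whose projective limit is a genuine Gaussian measure supported on $L^2([0,T],\mathbb{R}^p)$, so the formal ``joint distribution'' $\prod_n p(y_n\mid_i)$ exists meaningfully.

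The hard part is the convergence posed by the second question, which is precisely the Cameron--Martin admissibility condition. By the Feldman--H\'ajek dichotomy, two Gaussian measures with a common covariance are either equivalent or mutually singular, and $\mu_{\mathbf{x}_0}\ll\mu_0$ (so the infinite product converges to a finite, positive density) if and only if the mean $\mathcal{A}_t(\mathbf{x}_0)$ lies in the Cameron--Martin space $H=\operatorname{range}(\mathcal{K}^{1/2})$, equivalently $\sum_n a_n^2/\lambda_n=\langle\mathcal{A}_t(\mathbf{x}_0),\mathcal{K}^{-1}\mathcal{A}_t(\mathbf{x}_0)\rangle<\infty$. This is exactly where the ``suitable assumptions on $\mathcal{A}$ and $\mathcal{K}$'' enter: the smoothness of the system response $\mathbf{h}(\phi_t(\mathbf{x}_0))$ from Eq.~\eqref{eq:systemResponse} must dominate the decay of the eigenvalues $\lambda_n$, forcing the projected means $a_n$ to vanish fast enough that the quadratic form stays finite. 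I would discharge this by assuming enough regularity on the kernel $\mathbf{K}(t,t')$ to control $\lambda_n$, combining it with the $C^\infty$ regularity of $\mathcal{A}$ to bound $a_n$, and then verifying finiteness of $\langle\mathcal{A}_t(\mathbf{x}_0),\mathcal{K}^{-1}\mathcal{A}_t(\mathbf{x}_0)\rangle$; the full verification of these conditions is carried out in Proposition~\ref{prop:main_likelihood_ratio} and Corollary~\ref{cor:main_likelihood_ratio} in Appendix~\ref{app:CameronMartin}.
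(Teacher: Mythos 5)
Your overall strategy---fixing the centered measure $\mu_{0,\mathcal{K}}$ as reference, obtaining the $\mathbf{x}_0$-dependence from the Cameron--Martin theorem, absorbing the marginal into $C(\mathbf{y}(t))$, and working in the eigenbasis of $\mathcal{K}$---is the same route the paper takes in Appendix~\ref{app:CameronMartin}. However, there is one concrete gap: you assert that the Feldman--H\'ajek/Cameron--Martin condition $\mathcal{A}_t(\mathbf{x}_0)\in\operatorname{range}(\mathcal{K}^{1/2})$, i.e.\ $\sum_n a_n^2/\lambda_n<\infty$, is ``exactly'' what is needed for Eq.~\eqref{eqn:likelihoodratio1}. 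That condition is necessary and sufficient for the \emph{existence} of the density, but it does not justify the \emph{stated closed form}, in which both terms in the exponent are honest inner products on $L^2([0,T],\mathbb{R}^p)$. Under $\sum_n a_n^2/\lambda_n<\infty$ alone, the element $\mathcal{K}^{-1}\mathcal{A}_t(\mathbf{x}_0)=\sum_n \lambda_n^{-1}a_n\mathbf{v}_n$ generally fails to lie in $L^2$ (that requires $\sum_n a_n^2/\lambda_n^2<\infty$), so your identification $\sum_n a_n y_n/\lambda_n=\langle\mathbf{y}(t),\mathcal{K}^{-1}\mathcal{A}_t(\mathbf{x}_0)\rangle$ is not valid: the linear term in the Cameron--Martin density, Eq.~\eqref{eqn:CMtheorem1}, is then only the Paley--Wiener-type functional $\hat{h}(y)$, defined $\mu_{0,\mathcal{K}}$-almost everywhere as an $L^2(\mu_{0,\mathcal{K}})$ limit, not a convergent inner product for each $\mathbf{y}$. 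This is precisely why the paper restricts the means to the smaller space $\mathcal{H}_{\mathcal{K}^2}=\operatorname{range}(\mathcal{K})$ (the Cameron--Martin space of $\mathcal{K}^2$) in Eq.~\eqref{eqn:CMtheorem2}, and why Corollary~\ref{cor:main_likelihood_ratio} imposes the Sobolev bound~\eqref{eqn:SobolevBound}, which is equivalent to the continuous embedding $H^s\hookrightarrow\mathcal{H}_{\mathcal{K}^2}$---strictly stronger than membership in $\mathcal{H}_{\mathcal{K}}$.

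The stronger condition is not a technical luxury; it also repairs the second soft spot in your argument, the factorization through $d\mu_0/dp(\mathbf{y}(t))$. For that factor to be absorbable into $C(\mathbf{y}(t))$ you must know that the mixture marginal $\bar{\gamma}(y)=\int_X \mu_{\mathcal{A}(x),\mathcal{K}}(y)\,d\nu(x)$ is equivalent to $\mu_{0,\mathcal{K}}$ with a positive, finite density. The paper establishes this in Proposition~\ref{prop:main_likelihood_ratio} by building the joint measure~\eqref{def:jointmeasure1} and applying Tonelli's theorem, which requires $(x,y)\mapsto F(\mathcal{A}(x),y)$ to be jointly continuous (hence Borel measurable) on $X\times H$---continuity that holds exactly because the density takes the pointwise inner-product form~\eqref{eqn:CMtheorem2} on $\mathcal{H}_{\mathcal{K}^2}\times H$, and fails in general if one only has the almost-everywhere-defined Cameron--Martin density over $\mathcal{H}_{\mathcal{K}}$. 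So your proof becomes correct once you replace the admissibility condition $\mathcal{A}_t(\mathbf{x}_0)\in\operatorname{range}(\mathcal{K}^{1/2})$ by $\mathcal{A}_t(\mathbf{x}_0)\in\operatorname{range}(\mathcal{K})$ throughout, and use the resulting joint continuity of the density to run the marginalization argument; this is exactly the content of Proposition~\ref{prop:main_likelihood_ratio} and Corollary~\ref{cor:main_likelihood_ratio} that you cite.
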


Next, we introduce a practical noise model, which, when paired with the assumption that the system response $\mathcal{A}: \mathcal{X} \rightarrow C^\infty([0,T], \mathbb{R}^p)$ in Eq.~\eqref{eq:systemResponse} is continuous with respect to the Fr\'{e}chet topology on $C^\infty([0,T], \mathbb{R}^p)$\footnote{
A characterization of the Fr\'{e}chet topology on $C^\infty([0,T], \mathbb{R}^p)$ is as the metric topology of \(d(\mathbf{f}(t), \mathbf{g}(t)) = \sum_{k=0}^\infty 2^{-k} \frac{\rho_k(\mathbf{f} - \mathbf{g} )}{1 + \rho_k(\mathbf{f} - \mathbf{g}) }\)~\cite{reed1980methods} where \(\rho_k(\mathbf{f} - \mathbf{g}) = {\sup}_{t \in [0,T] }\norm{\frac{d^k}{dt^k} (\mathbf{f}(t) - \mathbf{g}(t))}.
\)
In particular, all linear systems will have a continuous system response, owing to the existence of an explicit form $\mathcal{A}_t(\mathbf{x}_1) - \mathcal{A}_t(\mathbf{x}_0) = \mathbf{M}(t)\left(\mathbf{x}_1 - \mathbf{x}_0\right)$, for some analytically known smooth matrix-valued function $\mathbf{M}(t)$ on $[0,T]$.
}
, satisfies the prerequsites of Proposition~\ref{prop:likelihood_maintextver}.
We achieve this by specifying the covariance operator $\mathcal{K}$ spectrally, using the orthonormal Legendre polynomials ${\left\{e_n(t) \right\}}_{n=0}^\infty$ on $[0, T]$.
The detailed rationale is explained in Corollary~\ref{cor:main_likelihood_ratio}.

\begin{defn}[Band-limited white noise model with pink tail]\label{def:eta1}
    \begin{equation}
        \boldsymbol{\eta}(t) = \frac{1}{\sqrt{N_c}} \sum_{n=0}^{N_c-1} \boldsymbol{\chi}_n e_n(t) + \sum_{n = N_c}^{\infty} \sqrt{a_n n^{-\alpha}} \boldsymbol{\chi}_n e_n(t),
    \end{equation}
    where
    \begin{enumerate}
        \item $N_c > 1$: Bandwidth parameter, signifying the number of `active' Legendre components
        \item $\alpha > 1$: High-frequency decay exponent
        \item \({\{\boldsymbol{\chi}_n\}}_0^\infty\): A sequence of independent, vector-valued spatial noise on each Legendre component, each following the Gaussian distribution \(\mathcal{N}(0,\boldsymbol{\Sigma})\), where \(\boldsymbol{\Sigma}\) is a positive-definite covariance matrix.
        \item \({\{a_n\}}_{N_c}^\infty\): A sequence of tail coefficients.
              Positive constants $a_{-}, a_{+}$ must exist such that \(a_n \in [a_{-}, a_{+}]\) for all $n$.
    \end{enumerate}
\end{defn}

Definition~\ref{def:eta1} approximately defines a GP with $N_c$ `active' components, each following the prescribed Gaussian distribution \(\mathcal{N}(0,\boldsymbol{\Sigma})\) and with equal energy $1/N_c$, plus a power-law decay at frequencies above $1/\delta t = N_c/T$.
More precisely, let ${\left\{\sigma_i \right\}}_{i=1}^{p}$ and ${\left\{\mathbf{v}_i \right\}}_{i=1}^{p}$ be the sets of eigenvalues and orthonormal eigenvectors for \(\boldsymbol{\Sigma}\), respectively.
Then ${\left\{\mathbf{v}_i e_n(t) \right\}}$ is a complete set of orthonormal eigenfunctions for the covariance operator $\mathcal{K}$, with eigenvalues:
\begin{equation}\label{eqn:eta2}
    \mathcal{K}\left( \mathbf{v}_i e_n(t)\right) = \begin{cases}
        \frac{1}{N_c} \sigma_i \mathbf{v}_i e_n(t)  & n < N_c,    \\
        a_n n^{-\alpha} \sigma_i\mathbf{v}_i e_n(t) & n \geq N_c.
    \end{cases}
\end{equation}

We follow Corollary~\ref{cor:main_likelihood_ratio} to check that $\mathcal{K}$ in Eq.~\eqref{eqn:eta2} satisfies the relevant hypotheses in Corollary~\ref{cor:main_likelihood_ratio} and apply Proposition~\ref{prop:likelihood_maintextver} to express the likelihood ratio as Eq.~\eqref{eqn:likelihoodratio1}. Then, we further simplify it in Proposition~\ref{prop:likelihood_simplify}.
\begin{prop}\label{prop:likelihood_simplify}
    Using the noise model in Definition~\ref{def:eta1}, for sufficiently large $N_c$, the likelihood ratio in Eq.~\eqref{eqn:likelihoodratio1} at a given $\mathbf{x}_0$ and a given $\mathbf{y}(t)$ can be written as
    \begin{equation}\label{eq:bayeslikelihood}
        \frac{p(\mathbf{y}(t) | \mathbf{x}_0)}{p(\mathbf{y}(t))} = C(\mathbf{y}(t)) \exp\left(\frac{N_c}{T}\left( \int_0^T \mathbf{y}^\top(t) \boldsymbol{\Sigma}^{-1}\mathcal{A}_t(\mathbf{x}_0) - \frac{1}{2} \mathcal{A}_t^T(\mathbf{x}_0) \boldsymbol{\Sigma}^{-1}\mathcal{A}_t(\mathbf{x}_0) \ dt \right) + {O}(N_c^{-k})\right),
    \end{equation}
    for all $k > 0$.
\end{prop}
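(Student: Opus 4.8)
The plan is to start from the closed-form likelihood ratio in Eq.~\eqref{eqn:likelihoodratio1} and evaluate its two inner products by diagonalizing $\mathcal{K}^{-1}$ in the orthonormal eigenbasis $\{\mathbf{v}_i e_n(t)\}$ furnished by Eq.~\eqref{eqn:eta2}. Writing $\mathcal{A}_t(\mathbf{x}_0) = \sum_{n,i} g_{n,i}\mathbf{v}_i e_n(t)$ and $\mathbf{y}(t) = \sum_{n,i} y_{n,i}\mathbf{v}_i e_n(t)$, each inner product $\langle\,\cdot\,,\mathcal{K}^{-1}\mathcal{A}_t(\mathbf{x}_0)\rangle$ collapses to the scalar series $\sum_{n,i}\lambda_{n,i}^{-1}(\cdot)_{n,i}\,g_{n,i}$, where $\lambda_{n,i}$ are the eigenvalues of Eq.~\eqref{eqn:eta2}. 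I would then split every such series at $n=N_c$ into an \emph{active} block ($n<N_c$, with $\lambda_{n,i}^{-1}=N_c/\sigma_i$) and a \emph{pink tail} ($n\geq N_c$, with $\lambda_{n,i}^{-1}=n^{\alpha}/(a_n\sigma_i)$).

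For the active block I would factor out the common $N_c$ and recognize the partial sum $\sum_{n<N_c,i}\sigma_i^{-1}(\cdot)_{n,i}\,g_{n,i}$ as a truncation of the full Parseval series. Since $\boldsymbol{\Sigma}^{-1}\mathbf{v}_i=\sigma_i^{-1}\mathbf{v}_i$ and $\{\mathbf{v}_i e_n\}$ is complete and orthonormal, the \emph{completed} series equals $\int_0^T(\cdot)^\top\boldsymbol{\Sigma}^{-1}\mathcal{A}_t(\mathbf{x}_0)\,dt$; substituting $(\cdot)=\mathbf{y}$ and $(\cdot)=\mathcal{A}_t(\mathbf{x}_0)$ reproduces the two integrals in Eq.~\eqref{eq:bayeslikelihood} with the scaling $N_c/T$ recorded there. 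What then remains is to show that (i) the active-block truncation error $N_c\sum_{n\geq N_c,i}\sigma_i^{-1}(\cdot)_{n,i}\,g_{n,i}$ and (ii) the entire pink-tail series are both $O(N_c^{-k})$ for every $k$, so that they sit inside the exponent as the additive remainder of Eq.~\eqref{eq:bayeslikelihood}.

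The key technical input is the smoothness of the system response. Because $\mathbf{f}$ and $\mathbf{h}$ are smooth and $\mathcal{A}$ is Fr\'echet-continuous into $C^\infty([0,T],\mathbb{R}^p)$ (the footnote to Proposition~\ref{prop:likelihood_maintextver}), the map $t\mapsto\mathcal{A}_t(\mathbf{x}_0)=\mathbf{h}(\phi_t(\mathbf{x}_0))$ is $C^\infty$ on $[0,T]$, so its Legendre coefficients decay faster than any power: $g_{n,i}=O(n^{-k})$ for all $k$ (the standard repeated integration-by-parts estimate against the Legendre recurrence). This alone annihilates the remainders of the smooth/smooth second inner product. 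For the $\mathbf{y}$--$\mathcal{A}$ cross term, where $\mathbf{y}$ is only $L^2$, I would bound both remainders by Cauchy--Schwarz, pairing the bounded energy $(\sum_{n\geq N_c} y_{n,i}^2)^{1/2}\leq\|\mathbf{y}\|_{L^2}$ against the rapidly vanishing factor $(\sum_{n\geq N_c} n^{2\alpha} g_{n,i}^2)^{1/2}$; the hypothesis $a_n\in[a_-,a_+]$ caps $n^\alpha/a_n$ at the polynomial $n^\alpha/a_-$, which the super-polynomial decay of $g_{n,i}$ dominates uniformly in $N_c$.

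I expect the main obstacle to be the pink tail. There $\mathcal{K}^{-1}$ has eigenvalues $n^\alpha/(a_n\sigma_i)$ that \emph{grow} with $n$, so it amplifies exactly the high-frequency content in which the non-smooth observation $\mathbf{y}$ carries appreciable energy, and a crude bound would diverge. The estimate closes only because each such term is multiplied by a smooth-response coefficient $g_{n,i}$ and because $\{a_n\}$ is bounded away from zero, so the quantitative decay of $g_{n,i}$ must beat the polynomial growth $n^\alpha$ at every truncation level. Making this uniform in $N_c$ -- together with the preliminary check that $\mathbf{y}\in L^2$ almost surely, which requires $\alpha>1$ so that $\mathcal{K}$ is trace class -- is the technical crux; by comparison the active-block algebra and the Parseval identification of the quadratic forms are routine.
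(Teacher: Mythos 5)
Your proposal is correct and follows essentially the same route as the paper's proof: the paper likewise uses the smoothness of $\mathcal{A}(\mathbf{x}_0)(t)$ to get super-polynomial decay of its Fourier--Legendre coefficients (via Example~\ref{example:main_likelihood_legendre}), then uses the spectral form of $\mathcal{K}$ in Eq.~\eqref{eqn:eta2} to conclude $\left(\mathcal{K}^{-1}\mathcal{A}(\mathbf{x}_0)\right)(t) = N_c \boldsymbol{\Sigma}^{-1}\mathcal{A}_t(\mathbf{x}_0) + O(N_c^{-k})$ and substitutes into Eq.~\eqref{eqn:likelihoodratio1}. The only difference is presentational --- you carry out the split at $n = N_c$ and the Cauchy--Schwarz control of the truncation and pink-tail terms componentwise inside the inner products, whereas the paper states the same estimate once at the operator level and leaves those details implicit.
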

\begin{proof}
    Since \(\mathcal{A}(\mathbf{x}_0)(t)\) is smooth in $t$, its $n$-th Fourier--Legendre coefficient is ${O}(n^{-k})$ for all $k>0$.
    Using the explicit expression for $\mathcal{K}$ in Eq.~\eqref{eqn:eta2}, we compute that, for sufficiently large $N_c$,
    \begin{align}\label{eqn:likelihoodcal_decay1}
        \left(\mathcal{K}^{-1}\mathcal{A}(\mathbf{x}_0)\right)(t) = N_c \boldsymbol{\Sigma}^{-1}\mathcal{A}_t(\mathbf{x}_0) + {O}(N_c^{-k}),
    \end{align}
    for all $k > 0$. Substituting Eq.~\eqref{eqn:likelihoodcal_decay1} into Eq.~\eqref{eqn:likelihoodratio1} gives the desired result.
\end{proof}
In the following analysis, we shall neglect the ${O}(N_c^{-k})$ terms in Eq.~\eqref{eq:bayeslikelihood}.
We now describe the structure of the posterior logarithmic likelihood \(\log p(\mathbf{x}_0\mid \mathbf{y}(t))\) as a function of $\mathbf{x}_0$, up to constants \(C_1, C_2\) independent of $\mathbf{x}_0$.
We express this equivalent class using Eqs.~\eqref{eq:bayesrule0}and~\eqref{eq:bayeslikelihood}:
\begin{align}\label{eq:logposterior1}
     & \log p(\mathbf{x}_0 |\mathbf{y}(t) ) + C = \log \frac{p(\mathbf{y}(t) | \mathbf{x}_0)}{p(\mathbf{y}(t))}  + \log p(\mathbf{x}_0) + C = \log \left(\frac{p(\mathbf{y}(t) | \mathbf{x}_0)}{p(\mathbf{y}(t))} {\left(\frac{p(\mathbf{y}(t) | \mathbf{x}^*_0)}{p(\mathbf{y}(t))} \right)}^{-1}\right) + \log p(\mathbf{x}_0) + C' \nonumber                        \\
     & = \frac{N_c}{T} \int_0^T \left( {(\mathbf{y}(t) - \mathcal{A}_t(\mathbf{x}_0^*))}^T \boldsymbol{\Sigma}^{-1}(\mathcal{A}_t(\mathbf{x}_0) - \mathcal{A}_t(\mathbf{x}_0^*)) - {(\mathcal{A}_t(\mathbf{x}_0) - \mathcal{A}_t(\mathbf{x}_0^*))}^T\frac{1}{2}\boldsymbol{\Sigma}^{-1}(\mathcal{A}_t(\mathbf{x}) - \mathcal{A}_t(\mathbf{x}_0^*))\right)dt \nonumber \\
     & \quad -\frac{1}{2}{(\mathbf{x}-\mathbf{x}_0^*)}^T \check{\mathbf{P}}_0^{-1}(\mathbf{x}-\mathbf{x}_0^*),
\end{align}
where we introduced the mean of the initial condition \(\mathbf{x}_0^*\) as a reference point and expressed the likelihood ratio relative to it to eliminate the intractable marginal likelihood \(p(\mathbf{y}(t))\).
Since \(\mathbf{x}_0^*\) is fixed, its associated likelihood ratio \(\frac{p(\mathbf{y}(t) | \mathbf{x}^*_0)}{p(\mathbf{y}(t))} \) contributes only an additive constant, which is absorbed into the change from \(C\) to \(C'\).
This preserves the equivalence class of the posterior up to terms independent of \(\mathbf{x}_0\), while allowing the remaining terms to be expressed in a tractable quadratic form.

To simplify Eq.~\eqref{eq:logposterior1}, we linearize around the prior expectation $\mathbf{x}_0^*$.
Such a linearization is valid in the following scaling regime: for some scaling parameter ${\epsilon} \ll 1$,
\begin{equation}
    \delta \mathbf{x} = \mathbf{x_0}^* - \mathbf{x}_0 \thicksim {\epsilon} ,\quad \boldsymbol{\Sigma} \thicksim {\epsilon}^2, \quad\check{\mathbf{P}}_0 \thicksim {\epsilon}^2, \label{eqn:deltaxdef}
\end{equation}
where $\thicksim$ denotes a scaling relationship in orders of ${\epsilon}$.
Then the pathwise observation residual $\mathbf{z}(t) = \mathbf{y}(t) - \mathcal{A}_t(\mathbf{x}_0^*)$ also scales as ${\epsilon}$.
The linearized expression of the log-likelihood in Eq.~\eqref{eq:logposterior1}, expressed in terms of the displacement $\delta \mathbf{x}$ in Eq.~\eqref{eqn:deltaxdef}, is then expressed in a quadratic form:
\begin{equation}\label{eq:logposterior_linear}
    \log p(\delta \mathbf{x} |\mathbf{y}(t)) + C = \frac{N_c}{T}\mathbf{b}^\top\delta\mathbf{x} + \frac{1}{2}\delta \mathbf{x}^\top\left( \check{\mathbf{P}}_0^{-1} + \frac{N_c}{T}\mathbf{W}_{\boldsymbol{\Sigma}^{-1}} \right)\delta\mathbf{x},
\end{equation}
to leading order of ${\epsilon}$, where
\begin{equation}\label{eq:gramiansigmadef}
    \mathbf{W}_{\boldsymbol{\Sigma}^{-1}} = \int_0^T  D\mathcal{A}_t^\top\boldsymbol{\Sigma}^{-1}D\mathcal{A}_t dt, \quad\mathbf{b} = \int_0^T D\mathcal{A}_t^\top\boldsymbol{\Sigma}^{-1}\mathbf{z}(t) dt,
\end{equation}
and \(\mathbf{W}_{\boldsymbol{\Sigma}^{-1}}\) corresponds to the LOG with respect to the metric \(\boldsymbol{\Sigma}^{-1}\) in the observation space $\mathcal{Y}$.

\begin{rem}
    In the special case $\boldsymbol{\Sigma} = \mathbf{1}$ and $\mathcal{A}_t = \mathbf{h},\left(\phi_t(\mathbf{x}_0)\right)$ is smooth, substituting the classical system response map $D\mathcal{A}_t = \mathbf{H} \boldsymbol{\Phi}_t$ into Eq.~\eqref{eq:gramiansigmadef} gives the traditional definition of continuous time OG in Eq.~\eqref{eq:observabilityGramian}.
\end{rem}

Under this linear approximation, $\delta \mathbf{x}$ becomes a normally distributed random variable, with expectation and covariance
\begin{equation}
    {E}[ \delta\mathbf{x} | \mathbf{y}(t)] = \mathbb{V}[\delta \mathbf{x} | \mathbf{y}(t)]\left(\frac{N_c}{T} \mathbf{b}\right), \quad \mathbb{V}[ \delta\mathbf{x} | \mathbf{y}(t)] = {\left(\frac{N_c}{T}{\mathbf{W}_{\boldsymbol{\Sigma}^{-1}}} + \check{\mathbf{P}}_0^{-1} \right)}^{-1}.
\end{equation}

We conclude that in a small-noise regime where the scaling in Eq.~\eqref{eqn:deltaxdef} holds, the posterior of the initial condition estimate under a given observation path $\mathbf{y}(t)$ at times $[0,T]$ is approximately Gaussian $\mathbf{x}_0 \sim \mathcal{N}(\hat{\mathbf{x}}_0, \hat{\mathbf{P}}_0)$, where the posterior mean and covariance are given by
\begin{equation}
    \label{eq:ICestimation}
    \hat{\mathbf{x}}_0 - \mathbf{x}_0^* =  \frac{1}{\delta t}\hat{\mathbf{P}}_0 \mathbf{b},\quad \hat{\mathbf{P}}_0^{-1} = \frac{1}{\delta t}{\mathbf{W}_{\boldsymbol{\Sigma}^{-1}}} + \check{\mathbf{P}}_0^{-1},
\end{equation}
where $\mathbf{W}_{\boldsymbol{\Sigma}^{-1}}$ and $\mathbf{b}$ given in Eq.~\eqref{eq:gramiansigmadef}, and $\delta t = T/N_c$ is the correlation timescale of the noise term $\boldsymbol{\eta}(t)$.
Moreover, the relationship between prior and posterior covariances $\check{\mathbf{P}}_0, \hat{\mathbf{P}}_0$ in Eq.~\eqref{eq:ICestimation} is still valid if the prior distribution of $\mathbf{x}_0$ has a different prior mean $\check{\mathbf{x}}_0$, provided that the scaling $\check{\mathbf{x}}_0 - \mathbf{x}^*_0 \sim \epsilon$ holds, because the linearization argument for the log-posterior in Eq.~\ref{eq:logposterior1} remains valid.
In this regime, the minimum precision (inverse covariance) of the posterior distribution satisfies the following lower bound:
\begin{equation}\label{eq:icbound}
    {\lambda_{\max} (\hat{\mathbf{P}}_0)}^{-1} \geq \frac{\lambda_{\min}(\mathbf{W}_{\boldsymbol{\Sigma}^{-1}})}{\delta t} + {\lambda_{\max} (\check{\mathbf{P}}_0)}^{-1},
\end{equation}
where $\lambda_{\min/ \max}$ denotes the minimum/maximum eigenvalue of the relevant matrix.

Equation~\eqref{eq:icbound} enables us to evaluate arbitrarily long prospective trajectories based on their contribution to improving initial condition estimation precision.
To use this ability in a control context, we derive the STLOG as an approximation of the LOG based on \(\mathcal{A}_t\) from Eq.~\eqref{eq:systemResponse}.
This STLOG approximation is based on the following conditions
\begin{enumerate}
    \item The observation horizon $T$ is sufficiently short, but not shorter than the correlation timescale $\delta t$ of the noise $\boldsymbol{\eta}(t)$ in Definition~\ref{def:eta1}, so that $\delta t \ll T \ll 1$;\footnote{
              The STLOG approximation to the LOG is independent of the correlation timescale $\delta t$ \emph{per se}, but the ordering $\delta t \ll T$ is required for the LOG to emerge from Bayesian state estimation (c.f. Proposition~\ref{prop:likelihood_simplify}).
          }
    \item $\mathbf{u}(t)$ is constant for $t \in \halfopen{0}{T}$, so that our system dynamics in Eq.~\eqref{eq:dynamicalSystem} is autonomous within this horizon.
\end{enumerate}

The STLOG is fundamentally based on the expansion of the system response, defined as \(\mathcal{A}_t(\mathbf{x}_0) = \mathbf{h}(\mathbf{x}(t))\) in Eq.~\eqref{eq:systemResponse} as a Taylor series in $t$ about $t=0$.
Differentiating Eq.~\eqref{eq:systemResponse} with respect to $t$ under the conditions that $\mathbf{u}(t)$ is constant and $\mathbf{f}$ is time-independent gives the derivatives:
\begin{equation}\label{eq:systemResponseDerivatives}
    \frac{d}{dt}\mathcal{A}_t(\mathbf{x}_0) = \left(L_\mathbf{f} \mathbf{h} \right) (\mathbf{x}(t)), \quad
    \frac{d^j}{dt^j} \mathcal{A}_t(\mathbf{x}_0) = \left(L_\mathbf{f}^j \mathbf{h} \right) (\mathbf{x}(t)),
\end{equation}
where $L_\mathbf{f} = \mathbf{f} \cdot \nabla$ is the Lie derivative on functions\footnote{This is consistent with the definition of Lie derivatives in differential geometry if we consider $\mathbf{h}$ as a $p$-tuple of scalar functions in the manifold $\mathcal{X}$, whence no contractions involving $D\mathbf{f}$ are necessary.
    Note, however, that $D\mathbf{h}$ is considered as a $p$-tuple of one-forms on $\mathcal{X}$, and correspondingly $L_\mathbf{f} D\mathbf{h} = \mathbf{f} \cdot \nabla (D\mathbf{h}) + (D\mathbf{h}) (D\mathbf{f})$.
    The usual identity $L_\mathbf{f} D\mathbf{h} = D L_\mathbf{f} \mathbf{h}$ holds under this interpretation.}.
We express \(\mathcal{A}_t(\mathbf{x}_0)\) in terms of the derivatives in Eq.~\eqref{eq:systemResponseDerivatives} taken at $t=0$ to get:
\begin{equation}\label{eq:Aseries}
    \mathcal{A}_t(\mathbf{x}_0) = \sum_{j=0}^{\infty} \frac{t^j}{j!} \left(L_\mathbf{f}^j \mathbf{h} \right) (\mathbf{x}_0),
\end{equation}
which can be differentiated with respect to the state to give:
\begin{equation}\label{eq:DAseries}
    D\mathcal{A}_t = \sum_{j=0}^{\infty} \frac{t^j}{j!} D\left(L_\mathbf{f}^j \mathbf{h} \right).
\end{equation}

Substituting the series in Eq.~\eqref{eq:DAseries} into \(\mathbf{W}_{\boldsymbol{\Sigma}^{-1}}\) in~\eqref{eq:gramiansigmadef} and integrating yields the STLOG in Definition~\ref{def:stlog}.

\begin{defn}%
    \label{def:stlog}
    The order-$r$ \emph{short-term local observability Gramian (STLOG)} is the matrix
    \begin{equation}
        \mathbf{W}^{(r)}_{\boldsymbol{\Sigma}^{-1}} = \sum_{i,j = 0}^{r} \frac{{T}^{i+j+1}}{(i+j+1)i!j!} D{\left(L_\mathbf{f}^i \mathbf{h} \right)}^\top \boldsymbol{\Sigma}^{-1} D\left(L_\mathbf{f}^j \mathbf{h} \right). \label{eq:stlogExpr}
    \end{equation}
    which would be evaluated at \((\mathbf{x}^*, \mathbf{u}; T)\), respectively, the initial condition, control input, and observation time.
    As such, the STLOG is a novel point-wise representation of observability, which enables \emph{local} assessment of how the evolution of the system under the current control input contributes to the observability of the current operating state, and aligns with our goal of leveraging observability within predictive control frameworks designed in Section~\ref{sec:OAC}.

\end{defn}

\section{Observability-aware control}%
\label{sec:OAC}

We begin by deriving our observability-aware controller, the OPC, in Section~\ref{sub:OPC}.
We then investigate whether the resulting control strategy satisfies local weak observability in Section~\ref{sub:STLOGranktesting} to address the question raised in Section~\ref{sec:observabilityAnalysis}.

\subsection{OPC}\label{sub:OPC}

The goal of observability-aware control is to design a control strategy that refines the estimates of \(N\) states $\mathbf{x}(t)$ in any \(T_1, \ldots, T_k, \ldots,T_N\), separated by \(\Delta T\) in a discretization scheme.
We first propose a law that relates the prior and posterior covariance of state estimation at every \(T_k\) in a trajectory, by extending Eq.~\eqref{eq:ICestimation} in Proposition~\ref{prop:STLOGupdate}.
\begin{assum}\label{assum:piecewiseConstantInput}
    We assume the inputs $\mathbf{u}(t)$ are \emph{piecewise-constant} within each stage $\halfopen{T_k}{T_k + \Delta T}$, and moreover that the timescales follow the scaling $\delta t \ll \Delta T \ll T$, so we can employ STLOG in Eq.~\eqref{eq:stlogExpr} to approximate the LOG\@.
\end{assum}
\begin{assum}\label{assum:linearization}
    We assume that, for some scale parameter $\epsilon \ll 1$, both the state estimation covariances and $\boldsymbol{\Sigma}$ scale as $\epsilon^2$, as in Eq.~\eqref{eqn:deltaxdef}, and that displacements $\delta\mathbf{x}_t = \mathbf{x}(t) - \mathbf{x}^*(t)$ from the deterministic state $\mathbf{x}^*(t)$ scale as $\epsilon$, so that the linearized result Eq.~\eqref{eq:ICestimation} remains valid\@.
\end{assum}

\begin{prop}\label{prop:STLOGupdate}
    Under Assumptions~\ref{assum:piecewiseConstantInput} and~\ref{assum:linearization}, the prior and posterior covariances at step \(k\) are related by:
    \begin{equation}\label{eq:kalmanlike1stlog}
        \hat{\mathbf{P}}_{T_k}^{-1} = \frac{\mathbf{W}^{(r)}_{\boldsymbol{\Sigma}^{-1}}\left( \mathbf{x}^*(T_k), \mathbf{u}(T_k) \right)}{\delta t}  + \check{\mathbf{P}}_{T_k}^{-1},
    \end{equation}
    where $\check{\mathbf{P}}_{T_k}$ is the covariance at time $T_k$ given only past observations, and $\hat{\mathbf{P}}_{T_k}$ is the covariance at time $T_k$ given additional observations between $T_k$ and \(T_{k+1}\), and \(\mathbf{W}^{(r)}_{\boldsymbol{\Sigma}^{-1}}\) is the r\textsuperscript{th}-order STLOG, with observation time \(\Delta T\)\@.
    In particular, an analogous bound to Eq.~\eqref{eq:icbound} holds: The posterior precision due to the observations in $\halfopen{T_k}{T_{k+1}}$ is bounded below by
    \begin{equation}\label{eq:Tkbound}
        {\lambda_{\max}^{-1}\left(\hat{\mathbf{P}}_{T_k}\right)} \geq \frac{\lambda_{\min}\left(\mathbf{W}^{(r)}_{\boldsymbol{\Sigma}^{-1}}\left( \mathbf{x}^*(T_k), \mathbf{u}(T_k) \right)\right)}{\delta t} + {\lambda_{\max}^{-1} \left(\check{\mathbf{P}}_{T_k}\right)}.
    \end{equation}

    \begin{proof}
        By analogy to the argument used to derive Eq.~\eqref{eq:ICestimation}, we perform linearization about $\mathbf{x}^*(t) = \phi_t (\mathbf{x}^*_0)$.
        By treating each prior mean $\check{\mathbf{x}}(T_k)$ as an initial condition with stage-wise observation timestep $\Delta T$ and linearizing about $\mathbf{x}^*(T_k)$ in the small noise regime in Eq.~\eqref{eqn:deltaxdef}, we apply Eq.~\ref{eq:ICestimation} to get the following evolution laws for the prior and posterior covariances at each $T_k$:
        \begin{equation}\label{eq:kalmanlike1}
            \check{\mathbf{P}}_{T_{k+1}} = \boldsymbol{\Phi}_{T_k, T_{k+1}}\hat{\mathbf{P}}_{T_k}\boldsymbol{\Phi}_{T_k, T_{k+1}}^\top, \quad \hat{\mathbf{P}}_{T_k}^{-1}  = \frac{\mathbf{W}_{\boldsymbol{\Sigma}^{-1}}\left[\mathbf{x}^*(T_k), \mathbf{u}: \halfopen{T_k}{T_{k+1}} \rightarrow \mathcal{U} \right]}{\delta t}  + \check{\mathbf{P}}_{T_k}^{-1}.
        \end{equation}
        where $\boldsymbol{\Phi}_{T_k, T_{k+1}} = D\left(\boldsymbol{\phi}_{T_{k+1}} \boldsymbol{\phi}^{-1}_{T_{k}}\right)$ is the state transition map from time \(T_k\) to \(T_{k+1}\).\footnote{
            Similar evolution laws relating the estimation means can also be obtained, but we omit their derivation here since they do not influence the evolution of covariances once the linear approximation around $\mathbf{x}^*(t) = \phi_t (\mathbf{x}^*_0)$ is admitted, as in Kalman filtering theory for linear systems.
        }
        Substituting the STLOG approximation in Eq.~\eqref{eq:stlogExpr} into Eq.~\eqref{eq:kalmanlike1} leads to Eq.~\eqref{eq:kalmanlike1stlog}.

    \end{proof}
\end{prop}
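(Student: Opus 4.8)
The plan is to reduce the multi-stage covariance recursion to the single-shot initial-condition estimation result already established in Eq.~\eqref{eq:ICestimation}, by treating each stage $\halfopen{T_k}{T_{k+1}}$ as an independent estimation problem whose ``initial condition'' is the state at time $T_k$. Under Assumption~\ref{assum:piecewiseConstantInput}, the input is constant over the stage and the timescales obey $\delta t \ll \Delta T \ll T$, which are exactly the hypotheses under which the Bayesian analysis of Section~\ref{sec:measurementOfObservability} produced the LOG; under Assumption~\ref{assum:linearization}, the displacement $\delta\mathbf{x}_t = \mathbf{x}(t) - \mathbf{x}^*(t)$ from the deterministic reference trajectory $\mathbf{x}^*(t) = \phi_t(\mathbf{x}^*_0)$ scales as $\epsilon$, so the linearization underlying Eq.~\eqref{eq:ICestimation} remains valid about each $\mathbf{x}^*(T_k)$.

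First I would linearize the log-posterior about the deterministic trajectory and apply Eq.~\eqref{eq:ICestimation} stagewise, with the stage prior mean $\check{\mathbf{x}}(T_k)$ in the role of the prior mean and $\Delta T$ in the role of the observation horizon $T$. The remark following Eq.~\eqref{eq:ICestimation} --- that the prior-to-posterior relation is unchanged when the prior mean differs from $\mathbf{x}^*_0$, provided $\check{\mathbf{x}}(T_k) - \mathbf{x}^*(T_k) \sim \epsilon$ --- licenses this substitution and yields the information-form update $\hat{\mathbf{P}}_{T_k}^{-1} = \delta t^{-1}\mathbf{W}_{\boldsymbol{\Sigma}^{-1}} + \check{\mathbf{P}}_{T_k}^{-1}$, together with the deterministic propagation $\check{\mathbf{P}}_{T_{k+1}} = \boldsymbol{\Phi}_{T_k, T_{k+1}}\hat{\mathbf{P}}_{T_k}\boldsymbol{\Phi}_{T_k, T_{k+1}}^\top$, which is simply the Jacobian transformation of the covariance under the flow. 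Substituting the STLOG of Definition~\ref{def:stlog} (with observation time $\Delta T$ and constant input $\mathbf{u}(T_k)$, per Eq.~\eqref{eq:stlogExpr}) for the exact LOG then delivers Eq.~\eqref{eq:kalmanlike1stlog}.

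For the lower bound Eq.~\eqref{eq:Tkbound}, I would argue exactly as for Eq.~\eqref{eq:icbound}: apply the Weyl inequality $\lambda_{\min}(A+B) \geq \lambda_{\min}(A) + \lambda_{\min}(B)$ for symmetric matrices to the information-form sum $\hat{\mathbf{P}}_{T_k}^{-1}$, and use $\lambda_{\min}(M^{-1}) = \lambda_{\max}(M)^{-1}$ for positive-definite $M$ to rewrite the minimum eigenvalue of each inverse covariance as a reciprocal maximum eigenvalue, recovering the stated bound.

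The main obstacle I anticipate is justifying that the single-shot linearization can be re-centered at each stage consistently: the reference point for the Bayesian quadratic expansion is $\mathbf{x}^*(T_k)$, whereas the actual stagewise prior mean $\check{\mathbf{x}}(T_k)$ drifts from it, and one must verify that this drift stays $O(\epsilon)$ across the horizon so that the quadratic approximation of the log-posterior --- and hence the identification of its Hessian with the STLOG-based precision --- holds uniformly. This is precisely what Assumption~\ref{assum:linearization} secures, so the remaining work is bookkeeping rather than new estimates; the covariance recursion itself then follows from the linear-Gaussian structure once the deterministic reference trajectory $\mathbf{x}^*(t)$ is fixed, as in standard Kalman filtering theory.
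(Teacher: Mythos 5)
Your proposal is correct and follows essentially the same route as the paper's proof: stagewise application of Eq.~\eqref{eq:ICestimation} with linearization about the deterministic reference $\mathbf{x}^*(T_k)$, invoking the remark that a prior mean $\check{\mathbf{x}}(T_k)$ drifting $O(\epsilon)$ from $\mathbf{x}^*(T_k)$ preserves the prior-to-posterior covariance relation, followed by substitution of the STLOG for the exact LOG. Your only addition is spelling out the eigenvalue bound Eq.~\eqref{eq:Tkbound} via Weyl's inequality $\lambda_{\min}(A+B) \geq \lambda_{\min}(A) + \lambda_{\min}(B)$ and $\lambda_{\min}(M^{-1}) = \lambda_{\max}(M)^{-1}$, a step the paper leaves implicit as ``analogous'' to Eq.~\eqref{eq:icbound}.
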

Our observability-aware control principle is to collectively maximize the lower bounds to precision improvements, embodied by the minimum eigenvalue of the STLOG in Eq.~\eqref{eq:Tkbound},
along a trajectory.
This corresponds to a safety-oriented, robustness-driven design: by maximizing the smallest eigenvalue, the controller prioritizes the least observable state direction, guaranteeing a minimum reduction in estimation uncertainty and preventing severe loss of observability that could lead to CLS divergence.
This is represented by the objective:
\begin{equation}
    V(\mathbf{x}^*(T_{0,\ldots,N-1}), \mathbf{u}(T_{0,\ldots,N-1})) = \sum_{k=0}^{N - 1} {\lambda_{\min}\left(\mathbf{W}^{(r)}_{\boldsymbol{\Sigma}^{-1}}\left( \mathbf{x}^*(T_k), \mathbf{u}(T_k) \right)\right)},\label{eq:obsAwareCtrl}
\end{equation}

The \emph{observability-predictive controller (OPC)} optimizes \(V\) under a receding horizon framework with a horizon \(N\):
\begin{align}\label{eq:OPCformula}
    \min_{\mathbf{u}_{0,\ldots,N-1}} & \frac{1}{V(\mathbf{x}_k, \mathbf{u}_k) +c}                                                                 \\
    \st\                             & \begin{cases}
                                           \mathbf{x}_{k+1} = \mathbf{f}_\mathrm{d}(\mathbf{x}_{k}, \mathbf{u}_k),\ \mathbf{x}_{0} = \mathbf{x}_{cl}, \\
                                           u_{\min, i}      \leq \mathbf{u}_{k, i} \leq u_{\max, i},                                                  \\
                                           d_{\min}         \leq d_k \leq d_{\max}.
                                       \end{cases}
\end{align}
where \(\mathbf{x}_0=\mathbf{x}_{cl}\) is the feedback of the current state, consisting of the inter-drone position, orientation, and velocity, estimated by a CLS.\@ \(\mathbf{x}(T_k) = \mathbf{x}_k\) and \(\mathbf{u}(T_k) = \mathbf{u}_k\) are the states and input at time \(T_k\) along the trajectory to be optimized. Following the receding-horizon paradigm, only the first control input \(\mathbf{u}_0\), consisting of thrust and body rates, is applied to the system.
Lastly, \(c\) is a small regularization constant which we take to be \(c = 10^{-6}\).

Unlike classical formation control, Eq.~\eqref{eq:OPCformula} does not penalize deviations from a specific formation or predefined positions between the leader and follower.
Instead, at every step \(k\), the leader-follower distance \(d_k\) is restricted to lie within the bounds
\([d_{\min}, d_{\max}]\) to maintain ranging measurements and connectivity.
In practice, the OPC operates as a high-level receding-horizon controller that receives the current CLS state estimate \(\mathbf{x}_{cl}\) as feedback and solves for optimal thrust and body-rate commands included in \(\mathbf{u}_k\), which are tracked by quadrotor low-level attitude controllers.

\subsection{Satisfaction of local weak observability under OPC control}\label{sub:STLOGranktesting}

Next, we address whether the optimal trajectories produced by the OPC inherently satisfy local weak observability.
We recover the discrete rank-testing notion of observability by proposing that the asymptotics of the summand~$\lambda_{\min}\left(\mathbf{W}^{(r)}_{\boldsymbol{\Sigma}^{-1}}\left(\mathbf{x},\mathbf{u}; \Delta T\right)\right)$ in Eq.~\eqref{eq:obsAwareCtrl} as~$\Delta T \to 0$ \emph{detects} the local observability index~\(r_*(\mathbf{x},\mathbf{u})\) in Eq.~\eqref{eq:rstardef}.
\begin{prop}\label{prop:STLOGasymptotics}
    Considering a fixed operating point \((\mathbf{x},\mathbf{u})\),
    \begin{enumerate}
        \item If $r < r_*(\mathbf{x},\mathbf{u})$, then $\lambda_{\min}(\mathbf{W}^{(r)}_{\boldsymbol{\Sigma}^{-1}}(\mathbf{x},\mathbf{u}; \Delta T)) = 0$,
        \item If $r \geq r_*(\mathbf{x},\mathbf{u})$, then for all sufficiently small $\Delta T$,
              \begin{align}\label{eq:stlogasymptotics}
                  \lambda_{\min}(\mathbf{W}^{(r)}_{\boldsymbol{\Sigma}^{-1}}(\mathbf{x},\mathbf{u}; \Delta T)) = C(\mathbf{x},\mathbf{u}; \Delta T)\Delta T^{2r_* + 1},
              \end{align}
              where $C(\mathbf{x},\mathbf{u}; \Delta T)$ is bounded between two positive quantities $C_1(\mathbf{x},\mathbf{u}), C_2(\mathbf{x},\mathbf{u})$ independent from~$\Delta T$. 
    \end{enumerate}

    \begin{proof}
        This proposition is proved in detail in Appendix~\ref{app:asymptotics}.
    \end{proof}

\end{prop}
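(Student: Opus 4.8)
The plan is to reduce everything to the integral form of the STLOG and then extract the \(\Delta T\to 0\) asymptotics by a blow-up (diagonal rescaling) argument adapted to the filtration that the observability matrix induces on the state space. Writing \(\mathbf{O}_k \triangleq D(L_\mathbf{f}^k\mathbf{h})\) for the \(k\)-th partition of \(\boldsymbol{\mathcal{O}}^{(r)}(\mathbf{x},\mathbf{u})\) and truncating Eq.~\eqref{eq:DAseries} at order \(r\), term-by-term integration shows that Eq.~\eqref{eq:stlogExpr} with \(T=\Delta T\) equals \(\mathbf{W}^{(r)}_{\boldsymbol{\Sigma}^{-1}}(\Delta T) = \int_0^{\Delta T} \left(\sum_{j=0}^r \tfrac{t^j}{j!}\mathbf{O}_j\right)^\top \boldsymbol{\Sigma}^{-1}\left(\sum_{j=0}^r \tfrac{t^j}{j!}\mathbf{O}_j\right) dt\), so that for any \(\mathbf{w}\), \(\mathbf{w}^\top\mathbf{W}^{(r)}_{\boldsymbol{\Sigma}^{-1}}\mathbf{w} = \int_0^{\Delta T}\|\boldsymbol{\Sigma}^{-1/2}\mathbf{g}_\mathbf{w}(t)\|^2\,dt\), where \(\mathbf{g}_\mathbf{w}(t)=\sum_{j=0}^r \tfrac{t^j}{j!}\mathbf{O}_j\mathbf{w}\) is a vector polynomial of degree \(\le r\). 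Part 1 is then immediate: if \(r<r_*(\mathbf{x},\mathbf{u})\), then \(\boldsymbol{\mathcal{O}}^{(r)}\) is rank-deficient, so some nonzero \(\mathbf{w}\) annihilates every \(\mathbf{O}_j\) with \(j\le r\); hence \(\mathbf{g}_\mathbf{w}\equiv 0\), \(\mathbf{w}^\top\mathbf{W}^{(r)}_{\boldsymbol{\Sigma}^{-1}}\mathbf{w}=0\), and since the STLOG is positive semidefinite, \(\lambda_{\min}=0\).

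For Part 2 I would introduce the coobservability flag \(V_k=\ker\boldsymbol{\mathcal{O}}^{(k)}=\bigcap_{j\le k}\ker\mathbf{O}_j\), giving \(\mathbb{R}^n=V_{-1}\supseteq V_0\supseteq\cdots\supseteq V_{r_*}=\{0\}\), and decompose \(\mathbb{R}^n=\bigoplus_{k=0}^{r_*}W_k\) orthogonally with \(W_k=V_{k-1}\ominus V_k\) and projections \(P_k\). The structural fact driving the proof is that \(\mathbf{O}_k\) is injective on \(W_k\): a nonzero \(\mathbf{w}\in W_k\) with \(\mathbf{O}_k\mathbf{w}=0\) would lie in \(V_k\), contradicting \(W_k\cap V_k=\{0\}\). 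Consequently, for \(\mathbf{w}\in W_k\) the polynomial \(\mathbf{g}_\mathbf{w}\) has leading order exactly \(t^k\), and \(\mathbf{O}_jP_k=0\) whenever \(j<k\). I would then define the congruence \(\mathbf{D}(\Delta T)=\sum_{k=0}^{r_*}\Delta T^{\,r_*-k}P_k\), chosen precisely so that all blocks contribute at a common order. Substituting \(t=\Delta T s\) and using \(\mathbf{O}_jP_k=0\) for \(j<k\), a direct computation gives \(\mathbf{D}\,\mathbf{W}^{(r)}_{\boldsymbol{\Sigma}^{-1}}\,\mathbf{D} = \Delta T^{\,2r_*+1}\!\left(\tilde{\mathbf{W}} + O(\Delta T)\right)\), where \(\tilde{\mathbf{W}}=\int_0^1 \tilde{\mathbf{g}}(s)^\top\boldsymbol{\Sigma}^{-1}\tilde{\mathbf{g}}(s)\,ds\) with \(\tilde{\mathbf{g}}_\mathbf{v}(s)=\sum_{k=0}^{r_*}\tfrac{s^k}{k!}\mathbf{O}_kP_k\mathbf{v}\). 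Positive-definiteness of \(\tilde{\mathbf{W}}\) follows because \(\tilde{\mathbf{g}}_\mathbf{v}\equiv 0\) forces every coefficient \(\mathbf{O}_kP_k\mathbf{v}=0\), hence by injectivity \(P_k\mathbf{v}=0\) for all \(k\), hence \(\mathbf{v}=0\).

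The two-sided bound on \(\lambda_{\min}(\mathbf{W}^{(r)}_{\boldsymbol{\Sigma}^{-1}})\) then follows by testing directions. For the upper bound I would take a unit \(\mathbf{w}\in W_{r_*}=V_{r_*-1}\neq\{0\}\), on which \(\mathbf{D}^{-1}\) acts as the identity, so \(\mathbf{w}^\top\mathbf{W}^{(r)}_{\boldsymbol{\Sigma}^{-1}}\mathbf{w}\le \Delta T^{\,2r_*+1}(\lambda_{\max}(\tilde{\mathbf{W}})+O(\Delta T))\). For the lower bound, for any unit \(\mathbf{w}\) set \(\mathbf{v}=\mathbf{D}^{-1}\mathbf{w}\); since every diagonal entry of \(\mathbf{D}^{-1}\) is \(\ge 1\) we have \(\|\mathbf{v}\|\ge\|\mathbf{w}\|=1\), whence \(\mathbf{w}^\top\mathbf{W}^{(r)}_{\boldsymbol{\Sigma}^{-1}}\mathbf{w}=\Delta T^{\,2r_*+1}\,\mathbf{v}^\top(\tilde{\mathbf{W}}+O(\Delta T))\mathbf{v}\ge \tfrac12\lambda_{\min}(\tilde{\mathbf{W}})\,\Delta T^{\,2r_*+1}\) for sufficiently small \(\Delta T\). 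Together these yield Eq.~\eqref{eq:stlogasymptotics} with \(C(\mathbf{x},\mathbf{u};\Delta T)\) trapped between the positive, \(\Delta T\)-independent constants \(C_1=\tfrac12\lambda_{\min}(\tilde{\mathbf{W}})\) and \(C_2=2\lambda_{\max}(\tilde{\mathbf{W}})\).

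The hard part will be the lower bound, i.e.\ making the blow-up rigorous. I expect two delicate points. First, the \(O(\Delta T)\) remainder in \(\mathbf{D}\,\mathbf{W}^{(r)}_{\boldsymbol{\Sigma}^{-1}}\,\mathbf{D}\) must be uniform over the unit sphere in \(\mathbf{v}\); this holds because \(\mathbf{g}\) is a polynomial of fixed degree whose coefficients are continuous in \((\mathbf{x},\mathbf{u})\), so the remainder is a finite sum of monomials with bounded coefficients. Second, and most essential, the norm inequality \(\|\mathbf{D}^{-1}\mathbf{w}\|\ge\|\mathbf{w}\|\) is what prevents the congruence (as opposed to a similarity) from corrupting the eigenvalue scaling; its validity hinges on having chosen the flag-adapted scaling \(\mathbf{D}\) with exponents \(r_*-k\ge 0\). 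By contrast, Part 1 and the upper bound of Part 2 are essentially immediate once the integral factorization and the flag are in place.
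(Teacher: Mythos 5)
Your proposal is correct, and while your Part~1 and your upper bound essentially mirror the paper's argument, your lower bound takes a genuinely different route. Part~1 is the same kernel-vector argument as Proposition~\ref{prop:theoremPart1}. For the upper bound, your test space $W_{r_*}=\ker\boldsymbol{\mathcal{O}}^{(r_*-1)}$ is exactly the paper's $\mathcal{H}_{r_*-1}$ (Proposition~\ref{prop:STLOGUBdone}), and since your congruence $\mathbf{D}(\Delta T)$ acts as the identity on that subspace, the two computations coincide up to bookkeeping. The real divergence is the lower bound: the paper factors the STLOG as $\Delta T\,{\boldsymbol{\mathcal{O}}^{(r)}}^\top\boldsymbol{\Lambda}^{(r)}\boldsymbol{\mathcal{H}}^{(r)}\boldsymbol{\Lambda}^{(r)}\boldsymbol{\mathcal{O}}^{(r)}$ (Eq.~\eqref{eq:STLOGaltHilb}), invokes positive definiteness of the finite Hilbert matrix $\boldsymbol{\mathcal{H}}^{(r)}$, and bounds $\norm{\boldsymbol{\Lambda}^{(r)}\boldsymbol{\mathcal{O}}^{(r)}\mathbf{v}}$ from below by discarding all blocks beyond order $r_*$ (Proposition~\ref{prop:STLOGLBdone}); you instead use the flag-adapted graded congruence $\mathbf{D}(\Delta T)=\sum_{k}\Delta T^{\,r_*-k}P_k$, derive $\mathbf{D}\mathbf{W}^{(r)}_{\boldsymbol{\Sigma}^{-1}}\mathbf{D}=\Delta T^{\,2r_*+1}\bigl(\tilde{\mathbf{W}}+O(\Delta T)\bigr)$ with $\tilde{\mathbf{W}}\succ 0$, and transfer the bound back through $\norm{\mathbf{D}^{-1}\mathbf{w}}\geq\norm{\mathbf{w}}$. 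Both steps you flag as delicate do hold: the rescaled remainder is a matrix polynomial in $\Delta T$ with fixed coefficients, hence $O(\Delta T)$ in operator norm uniformly over the unit sphere, and the norm inequality follows because every exponent $r_*-k$ is nonnegative and $\Delta T\leq 1$, so all eigenvalues of $\mathbf{D}^{-1}$ are at least one. As for what each approach buys: your blow-up identifies the exact asymptotic object $\tilde{\mathbf{W}}$ (indeed $\mathbf{D}\mathbf{W}^{(r)}_{\boldsymbol{\Sigma}^{-1}}\mathbf{D}/\Delta T^{\,2r_*+1}\to\tilde{\mathbf{W}}$), handles both bounds and the general weighting $\boldsymbol{\Sigma}^{-1}$ with a single mechanism, and gives arguably sharper constants; the paper's Hilbert-matrix route is cruder but yields constants $C_1, C_2$ expressed directly through $\sigma_{\min}\left({\boldsymbol{\mathcal{O}}^{(r_*)}}\right)$ and $\sigma_{\min}\left({\boldsymbol{\mathcal{O}}^{(r_*)}}\vert_{\mathcal{H}_{r_*-1}}\right)$, which is precisely what Section~\ref{sub:STLOGranktesting} reuses (Eq.~\eqref{eq:toyOPCprinciple}) to connect the OPC to observability-matrix-based metrics --- a downstream link your $\tilde{\mathbf{W}}$-based constants would not furnish as directly.
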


Proposition~\ref{prop:STLOGasymptotics} suggests that in the OPC, stages yielding low $r_*(\mathbf{x},\mathbf{u})$ make asymptotically larger contributions to the objective compared to stages yielding high $r_*(\mathbf{x},\mathbf{u})$.
Thus, optimized trajectories drive the state \(\mathbf{x}\) to avoid regions with high $r_*$ and favor regions where $r_*$ attains the minimum value.
We derive a lower bound for this minimum $r_*$ in Proposition~\ref{prop:UTdynsys1}, and apply it to our quadrotor CLS in Corollary~\ref{cor:rstar_quadrotorCLS} to obtain $r_* \geq 5$, providing us a principled guideline for selecting \(r\) for the STLOG in the OPC\@.

This result also connects our OPC to previous work optimizing metrics of the observability matrix~\cite{sharma2014observability}.
\(C_1,C_2\) in Proposition~\ref{prop:STLOGasymptotics} are shown by Propositions~\ref{prop:STLOGUBdone} and~\ref{prop:STLOGLBdone} to be:
\begin{align}
    C_1  = \alpha \ \sigma^2_{\min}\left( {\boldsymbol{\mathcal{O}}^{(r_*)}} \right),\quad C_2  = \beta \ \sigma^2_{\min}\left( {\boldsymbol{\mathcal{O}}^{(r_*)}} \vert_{\ker {\boldsymbol{\mathcal{O}}^{(r_* - 1)}}}\right),
\end{align}
where~\(\sigma_{\min}\) denotes the minimum singular value,~\(\vert \) denotes the restriction to a subspace, and~\(\alpha, \beta\) are numerical constants.
We identify that a new metric of observability~\(C^{\prime}\), formed by taking a weighted average of~\(C_1\) and~\(C_2\), can be heuristically substituted into \(C\) in Eq.~\eqref{eq:stlogasymptotics} and the objective formed by summing this quantity over a trajectory:
\begin{align}\label{eq:toyOPCprinciple}
    \mathcal{V}(\mathbf{x}^*(T_{0,\ldots,N-1}), \mathbf{u}(T_{0,\ldots,N-1})) \simeq \sum_{k=0}^{N - 1} C^{\prime}\left( \mathbf{x}^*(T_k), \mathbf{u}(T_k) \right) \Delta T^{2r_*\left( \mathbf{x}^*(T_k), \mathbf{u}(T_k) \right) + 1},
\end{align}
has the same physical meaning as the objective in Eq.~\eqref{eq:obsAwareCtrl}, in that for the same value of~\(r_*\), stages with a higher metric of observability~\(C^{\prime}\) will contribute more to \(\mathcal{V}\), leading to optimal observability trajectories under the metric~\(C^{\prime}\).

Overall, we met our design goals, as the OPC maximizes the worst-case precision improvement along a trajectory.
We also answered the question raised in Section~\ref{sec:observabilityAnalysis}, showing that the OPC is compatible with the implementation of weak local observability, then used the result to tailor the selection of \(r\) in the OPC to our range-only quadrotor CLS\@.

\section{Simulation and Experimental Validation}\label{sec:experimental}

\begin{figure}[htbp]
    \begin{center}
        \begin{tikzpicture}[scale=1, transform shape, node distance=1cm and 2cm, every node/.style={align=center}, arrow/.style={->, >=latex, ultra thick}]

    \node [draw,fill=white] (flightControl) {Geometric tracking\\Controller};
    \node [draw,fill=white,right=2.5cm of flightControl] (leaderDynamics) {Quadrotor\\Dynamics};
    \node [draw,fill=white,below=0.5cm of flightControl] (ekf) {GPS-driven EKF};
    \coordinate [below=of ekf] (t1);
    \coordinate (t2) at ($(flightControl)!0.5!(leaderDynamics)$);
    \node [draw, fill=green] (interDrone) at (t2 |- t1) {Inter-drone\\communication};
    \coordinate [below=of interDrone] (t3);
    \node [draw,fill=Turquoise] (uwb) at (t3 -| leaderDynamics){UWB Sensor};
    \node [draw,fill=white,right=1cm of uwb] (followerDynamics) {Quadrotor\\Dynamics};
    \node [draw,fill=white,below=0.6cm of followerDynamics] (opc) {\textbf{OPC in Eq~\eqref{eq:OPCformula}}\\\textbf{(our contribution)}};
    \node [draw,fill=white] (ekfCL) at (opc -| ekf) {EKF-based CLS\\ From Ref.~\cite{go2024trajectory}};
    \begin{scope}[on background layer]
        \node [fill=blue, opacity=0.3, fit=(opc)(ekfCL)(followerDynamics)(uwb)(interDrone.south), label={below:The follower drone}] {};
        \node [fill=Peach, opacity=0.3, fit=(flightControl)(leaderDynamics)(ekf)(interDrone.north), label={above:The leader drone}] {};
    \end{scope}

    \draw[arrow, Turquoise] (ekf) -- node [left] {\(\hat{\mathbf{p}}_l, \hat{\mathbf{v}}_l\)}(flightControl);
    \draw[arrow, Turquoise] (leaderDynamics) -- (leaderDynamics|-ekf) -- node [above] {\(\mathbf{p}_l, \mathbf{v}_l\) } (ekf);
    \draw[arrow, Turquoise] (leaderDynamics|-ekf) -- node [right] {\(\mathbf{p}_l\)} (uwb);
    \draw[arrow, magenta] ($(flightControl.south east)!0.5!(flightControl.east)$) -| node [right] {\(\mathbf{u}_l\)} (interDrone);
    \draw[arrow, Turquoise] (flightControl) -- node[above, draw=none] {Thrust/rates \(\mathbf{u}_l\)} (leaderDynamics);
    \draw[arrow, magenta] (opc) -- node [right] {\(\mathbf{u}_f\) } (followerDynamics);
    \draw[arrow, magenta] (followerDynamics) -- node [above] {\(\mathbf{p}_f\)} (uwb);
    \draw[arrow, magenta] (followerDynamics.south west) -- ++ (-0.25, -0.25) -| node [above] {\(\mathbf{q}_f\)} ($(ekfCL.north)!0.5!(ekfCL.north east)$);
    \draw[arrow, black] (uwb)  -| node[above, near start] {Inter-drone\\distance \(d\) } (ekfCL);
    \draw[arrow, magenta] (interDrone) -| node [left, near end] {\(\hat{\mathbf{p}}_l\)\\\(\hat{\mathbf{q}}_l\)\\\(\mathbf{u}_l\)} ($(ekfCL.north)!0.5!(ekfCL.north west)$);
    \draw [arrow, magenta] ($(ekf.south west)!0.5!(ekf.south)$) |- node [left, near start] {\(\hat{\mathbf{p}}_l\)\\\(\hat{\mathbf{q}}_l\)} ($(interDrone.north west)!0.5!(interDrone.west)$);

    \draw[arrow, ultra thick, PineGreen] (opc) -- node [below] {Optimized thrust/rates \(\mathbf{u}_f\)} (ekfCL);

    \draw[arrow, magenta] ($(ekfCL.east)!0.5!(ekfCL.north east)$) -- ++ (1, 0) |- node [above] {\(\hat{\mathbf{x}}\)} ($(opc.west)!0.5!(opc.north west)$);
\end{tikzpicture}
    \end{center}
    \caption{The overall flow chart of the system.}%
    \label{fig: overall_flow_chart}
\end{figure}

We evaluate our OPC on CLS-aided quadrotors in two stages.
In Section~\ref{sub:simulations}, we describe our Monte Carlo simulations, modelled after our previous work~\cite{go2024trajectory}, of an EKF-based CLS configured according to Fig.~\ref{fig: overall_flow_chart}.
We evaluate the localization performance of this system when guided by our controller compared to two baseline trajectories.
In Section~\ref{sub:experiments}, we describe our quadrotor flight experiments using the same EKF-based CLS\@.

The computational cost of the OPC is dominated by two factors: evaluating the STLOG and solving the receding-horizon optimal control problem.
To evaluate the STLOG, we compute Lie derivatives by \emph{automatic differentiation} (via \texttt{jax}~\cite{jax2018github}\footnote{Python implementation of the OPC can be found in\url{https://github.com/FSC-Lab/observability_aware_controller}}),
{which is competitive with quadratic Lie-derivative approximations used in Ref.~\cite{yu2015observability} while avoiding truncation errors and simplifying implementation.}
Representative evaluation and solve times are summarized in Table~\ref{tab:stlogEvalTime}.

\begin{table}[htbp]
    \centering
    \caption{STLOG evaluation time}\label{tab:stlogEvalTime}
    \begin{tabularx}{0.8\linewidth}{Xccc} \toprule
        Component            & Setting                     & Typical time & Worst-case time \\ \midrule
        STLOG evaluation     & $r=5$                       & 0.05--0.15 s & $<$ 0.3 s       \\
        Objective evaluation & $r=5$, $N=20$               & 0.1--0.3 s   & $<$ 1.0 s       \\
        Total OPC solve      & $r=5$, $N=20$, 40 iters max & 2.0--4.0 s   & $<$ 6.0 s       \\ \bottomrule
    \end{tabularx}
\end{table}

The optimization cost scales primarily with the horizon length \(N\) and the number of solver iterations.
With the settings reported in Table~\ref{tab:stlogEvalTime}, solve times are compatible with online, receding-horizon execution at guidance-level update rates, but are not yet suitable for real-time thrust and body-rate generation on embedded hardware.
Therefore, online execution of the OPC is evaluated in simulation, while in experiments, the OPC is used as an offline trajectory generator.

\begin{figure}[htb]
    \centering
    \includegraphics{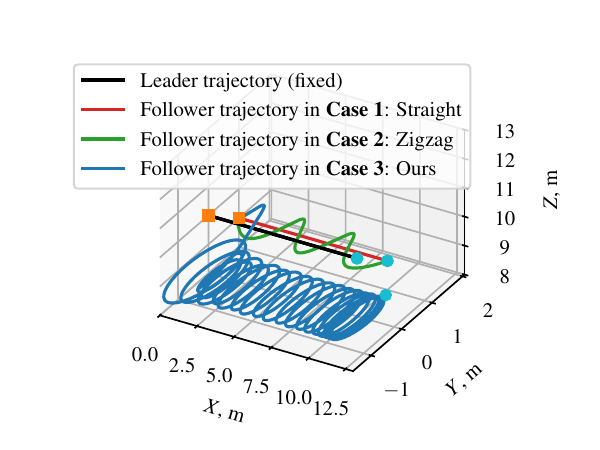}%
    \caption{Simulated trajectories with squares/circles marking starting/end positions, respectively.}%
    \label{fig:ekfSimTrajectory}
\end{figure}

\subsection{Simulation}\label{sub:simulations}

In simulations, the leader follows a straight trajectory, while the follower performs three test cases:
\begin{description}[itemindent=!]
    \item[Case 1] A uniform and straight trajectory, planned ahead-of-time by a minimum-snap trajectory planner,
    \item[Case 2] An artificially perturbed, zigzagging trajectory, likewise planned ahead-of-time, and
    \item[Case 3] An observability-aware trajectory performed with the OPC computing control inputs online.
\end{description}

We use a range-only CLS based on an EKF as configured in Table~\ref{tab:simCLSCfg} to estimate the states of both quadrotors, so that estimation performance can be compared across the three test cases.
We implemented realistic noise models for both control inputs and observations received by the CLS\@.
The simulations are run for 120 seconds, with 50 Monte Carlo trials per test case.
The same noise regime is applied to all trials.

\begin{table}[htbp]
    \caption{Configuration of CLS in simulation}\label{tab:simCLSCfg}
    \begin{center}
        \begin{tabularx}{0.8\linewidth}{Xccc} \toprule
            Noise Type                                  & Parameter                                               & Value                                     & Notes                    \\ \midrule
            \multirow{2}{*}{\textbf{Process noise}}     & \(\sigma^2\left(f^{\{f, l\}}\right)\)                   & \(0.05\  (\mathrm{unitless})\)            & Thrust model uncertainty \\
                                                        & \(\sigma^2\left(\boldsymbol{\omega}^{\{f, l\}}\right)\) & \(1.49\cdot 10^{-5}\ \mathrm{rad}^2/s^2\)                            \\\midrule
            \multirow{2}{*}{\textbf{Observation Noise}} & \(\sigma^2\left(r\right)\)                              & \(0.008\ \mathrm{m}^2\)                   & TOA mode                 \\
                                                        & \(\sigma^2\left(\boldsymbol{\phi}\right)\)              & \(3.594 \cdot 10^{-6}\ \mathrm{rad}^2\)   & Per-rotation axis        \\
            \bottomrule
        \end{tabularx}
    \end{center}
\end{table}

Figure~\ref{fig:ekfSimTrajectory} shows the trajectories trialed in Monte Carlo simulations.
Our optimized trajectories in Case 3 exhibit distinct behaviors to deliberately excite weakly observable directions.
This periodic dynamic formation emerges from the OPC without specific hints or deliberate configuration, reproducing the helical~\cite{li2022three} and orbiting~\cite{boyinine2022observability} behavior of observability-optimal trajectories in previous works.
Furthermore, our optimized trajectories implied moderate accelerations, as summarized in Table~\ref{tab:simCLSAccels}, compatible with quadrotor dynamics and control.

\begin{table}[htb]
    \centering
    \caption{Acceleration implied by different trajectories in simulations}\label{tab:simCLSAccels}
    \begin{tabularx}{\linewidth}{Xccc} \toprule
        Trajectory    & Peak X Acceleration, \(m/s^2\) & Peak Y Acceleration, \(m/s^2\) & Peak Z Acceleration, \(m/s^2\) \\ \midrule
        Case 3 (ours) & 1.737                          & 2.063                          & 1.644                          \\ \bottomrule
    \end{tabularx}
\end{table}

\begin{figure}[htbp]
    \centering
    \subfloat[X-axis position error; Horizontal lines show RMS error\label{fig:ekfSimXResults}.]{\includegraphics{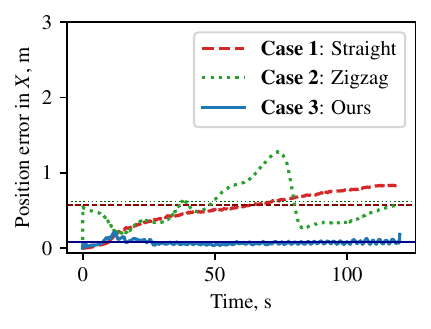}}%
    \subfloat[Estimated X-axis uncertainty (\(3\sigma\) envelope).]{\includegraphics{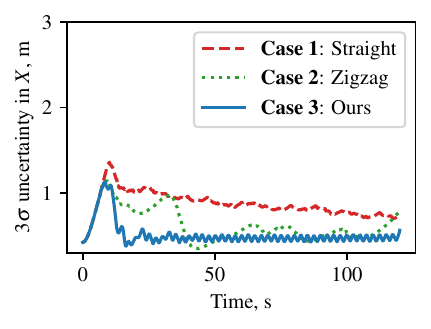}}

    \subfloat[Y-axis position error; Horizontal lines show RMS error.]{\includegraphics{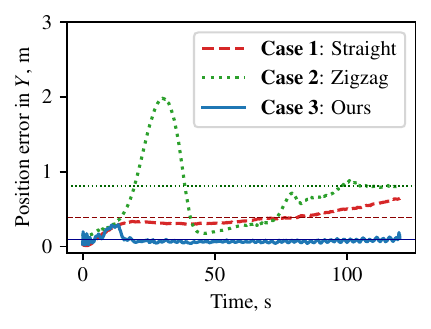}}%
    \subfloat[Estimated Y-axis uncertainty (\(3\sigma\) envelope).]{\includegraphics{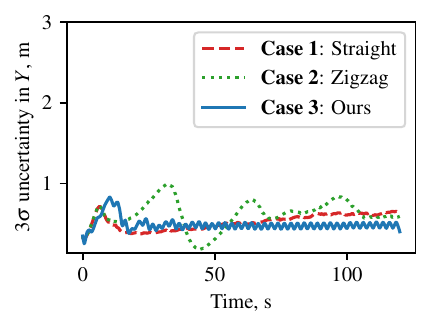}}

    \subfloat[Z-axis position error; Horizontal lines show RMS error.]{\includegraphics{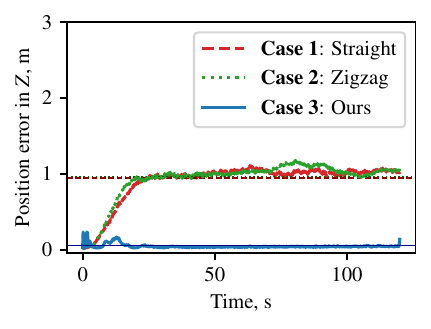}}%
    \subfloat[Estimated Z-axis uncertainty (\(3\sigma\) envelope)\label{fig:ekfSimZVariance}.]{\includegraphics{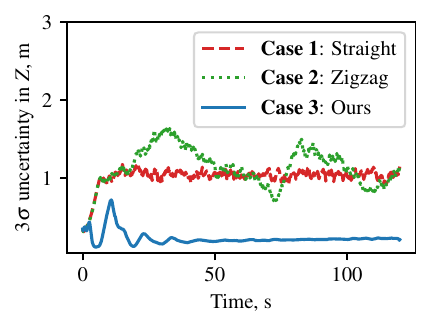}}
    \caption{CLS performance in simulations, quantified by per-axis estimation error, RMSE, and uncertainty.}%

\end{figure}

Figures~\ref{fig:ekfSimXResults} to~\ref{fig:ekfSimZVariance} show the CLS\@ performance in simulations.
The straight trajectory in Case 1 yields a steadily diverging error and high covariance due to a lack of observability-aware excitation, in line with long-standing conclusions about the pathologies of uniform formations~\cite{trawny2004optimized}.
The zigzag trajectory in Case 2 is dynamically rich but estimation-unaware.
It enabled the CLS to occasionally reduce position estimation error at the cost of strong spikes in error that drove up the RMS error.
This confirms that simply exciting position space or desynchronizing follower motion relative to the leader does not guarantee informative sensor observations.
In contrast, the observability-optimal dynamic formation explicitly excites low-observability directions and maintains low positioning error and tight estimator confidence bounds.

\begin{table}[htb]
    \centering
    \caption{Positioning accuracy of EKF-based CLS along different trajectories in simulations}\label{tab:simCLSResults}
    \begin{tabularx}{\linewidth}[c]{Xccc ccc ccc} \toprule
        \multirow{2}{*}{Trajectory} & \multicolumn{3}{c}{X Positioning Error, m} & \multicolumn{3}{c}{Y Positioning Error, m} & \multicolumn{3}{c}{Z Positioning Error, m}                                                                                                                 \\\cmidrule{2-10}
                                    & min                                        & max                                        & RMS                                        & min              & max             & RMS              & min              & max             & RMS              \\\midrule
        Case 1 (straight)           & 7.892e-4                                   & 0.8381                                     & 0.5769                                     & 0.01265          & 0.6452          & 0.3891           & 0.01802          & 1.109           & 0.942            \\
        Case 2 (zigzag)             & 0.1959                                     & 1.284                                      & 0.6168                                     & 0.1419           & 1.983           & 0.8118           & 0.02051          & 1.179           & 0.9636           \\
        Case 3 (ours)               & \textbf{0.004418}                          & \textbf{0.2121}                            & \textbf{0.07992}                           & \textbf{0.02308} & \textbf{0.2953} & \textbf{0.09616} & \textbf{0.02199} & \textbf{0.2339} & \textbf{0.05407} \\\bottomrule
    \end{tabularx}
\end{table}

\begin{table}[htb]
    \centering
    \caption{Positioning uncertainty of EKF-based CLS along different trajectories in simulations}\label{tab:simCLSUncertainty}
    \begin{tabularx}{\linewidth}[c]{Xccc} \toprule
        Trajectory        & X area of 3\(\sigma\) envelope, \(m\cdot s\) & Y Area of 3\(\sigma\) envelope, \(m\cdot s\) & Z Area of 3\(\sigma\) envelope, \(m\cdot s\) \\\midrule
        Case 1 (straight) & 104.789                                      & 62.469                                       & 121.943                                      \\
        Case 2 (zigzag)   & 74.959                                       & 73.587                                       & 135.399                                      \\
        Case 3 (ours)     & \textbf{61.466}                              & \textbf{58.792}                              & \textbf{27.316}                              \\\bottomrule
    \end{tabularx}
\end{table}

Quantitatively, Table~\ref{tab:simCLSResults} shows that the optimized trajectory computed by our OPC in Case 3 yields the lowest position estimation error on all axes and consistently reduces the position estimation RMS error by more than 75\% compared to both Cases 1 and 2.
Table~\ref{tab:simCLSUncertainty} shows that the same trajectory yields the lowest 3\(\sigma\) confidence envelope area along every axis, indicating that it provides the most precise position estimates.
These figures confirm that observability-aware control strategies not only reduce average estimation error but also improve estimator consistency and increase precision.

\subsection{Quadrotor Flight Experiments}%
\label{sub:experiments}

\begin{figure}[htb]
    \centering
    \includegraphics[width=0.3\linewidth]{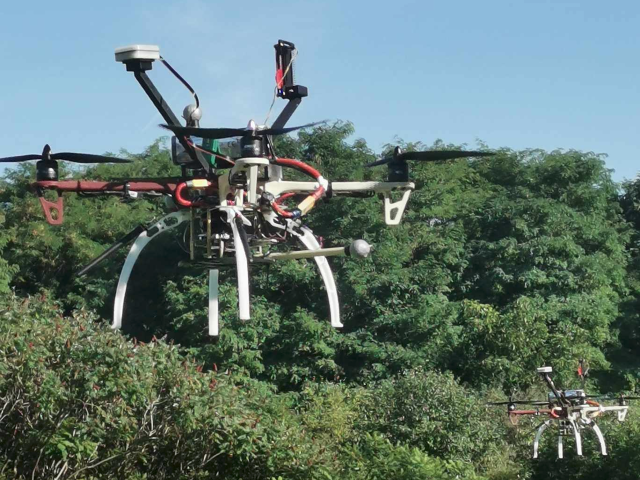}
    \caption{A UWB-ranging based CLS on a pair of F450 quadrotors during flight tests\label{fig:realSystemDepiction}}
\end{figure}

\begin{table}[htb]
    \centering
    \caption{Configuration of F450 Experimental Platforms}%
    \label{tab:quadrotor}

    \begin{threeparttable}
        \begin{tabularx}{\linewidth}{cXcc} \toprule
            Hardware            & Name and Specifications            & Software          & Name/Specifications                              \\ \midrule
            Airframe            & F450 (1.7kg with LiPo, 10`` props) & SDK Framework     & \lstinline!mavros!                               \\
            Autopilot           & Cube Orange (PX4 1.14.5)           & Setpoint Type     & \lstinline!mavros_msgs::AttitudeTarget!\tnote{1} \\
            Onboard Computer    & Jetson TX2 NX (JetPack 4.6)        & Setpoint Rate     & Upsampled to 50 Hz                               \\
            GNSS unit (leader)  & UBlox M8N                          & Leader Controller & Geometric tracking controller                    \\
            UWB unit (follower) & ESP32 UWB DWM3000\tnote{2}         & Leader Estimator  & GPS-driven EKF                                   \\ \bottomrule
        \end{tabularx}
        \begin{tablenotes}
            \footnotesize
            \item[1] Approximately 0.12s settling time in response to step inputs in body-rate control.
            \item[2] The C/C++ interface to the UWB sensor can be found at: \url{https://github.com/FSC_Lab/dwm3000_ros}
        \end{tablenotes}
    \end{threeparttable}
\end{table}

Next, we apply our OPC to formation flight experiments with quadrotors depicted in Fig.~\ref{fig:realSystemDepiction} and configured according to Table~\ref{tab:quadrotor}.
The leader follows a straight trajectory, while the follower performs two test cases, excluding the zigzagging trajectory (Case 2 in simulations), as it contributed to worse positioning in simulations.
\begin{description}[itemindent=!]
    \item[Case 1] A uniform and straight trajectory, planned ahead-of-time by a minimum-snap trajectory planner,
    \item[Case 2] An observability-aware trajectory, planned ahead-of-time by running the OPC as a sliding-window optimizer.
\end{description}
We use a range-only CLS interfaced with a UWB range system to estimate the states of both quadrotors, so that the estimation performance can be compared between the two test cases.

Figures~\ref{fig:straightTrajectory} and~\ref{fig:optimalTrajectories} show straight (Case 1) and observability-aware (Case 2) trajectories during flight tests.
The observability-aware trajectory induces an orbiting motion in the follower, while the leader continues along a uniform straight path.
This motion pattern yields significantly more excitation in the inter-vehicle range.

\begin{figure}[!ht]
    \centering
    \subfloat[Quadrotor trajectories in flight experiment Case 1.\label{fig:straightTrajectory}.]{\includegraphics[]{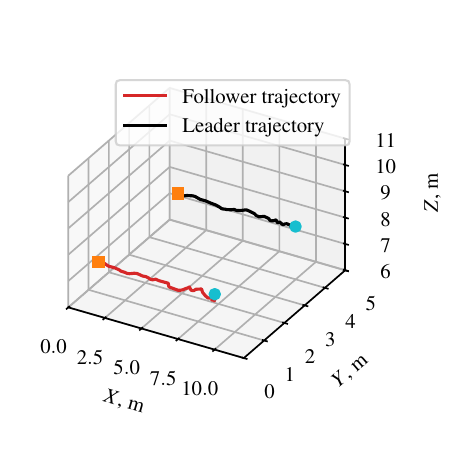}}
    \subfloat[Quadrotor trajectories in flight experiment Case 2.\label{fig:optimalTrajectories}.]{\includegraphics[]{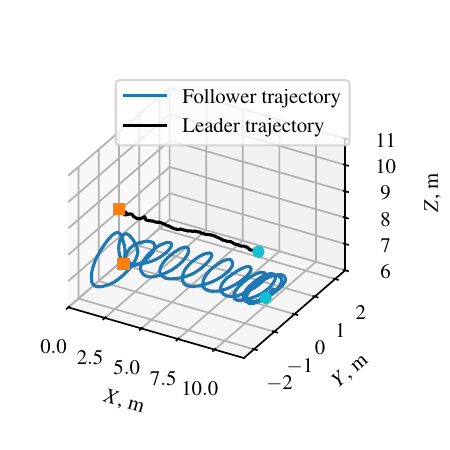}}
    \hskip10pt
    \caption{Experimental trajectories with squares/circles marking starting/end positions, respectively.}%

\end{figure}

Figure~\ref{fig:flightResults} shows the CLS\@ positioning performance in flight tests.
The key performance criterion is the estimation precision: the area of the 3\(\sigma\) confidence envelope is consistently smaller for the optimized trajectory compared to the straight trajectory.
Quantitatively, Table~\ref{tab:flightCLSUncertainty} shows that optimized trajectories yielded more than 50\% better overall positioning uncertainty along every axis.
Furthermore, the maximum 3\(\sigma\) levels never exceeded 2 m for the optimized trajectory, while those of the straight trajectory exceeded 5 m for significant durations.

\begin{table}[!ht]
    \caption{CLS performance in flight, quantified by per-axis estimation error, RMSE, and uncertainty}\label{tab:flightCLSUncertainty}
    \begin{tabularx}{\linewidth}{Xccc} \toprule
        Trajectory        & X area of 3\(\sigma\) envelope, \(m\cdot s\) & Y Area of 3\(\sigma\) envelope, \(m\cdot s\) & Z Area of 3\(\sigma\) envelope, \(m\cdot s\) \\\midrule
        Case 1 (straight) & 119.0                                        & 122.6                                        & 153.5                                        \\
        Case 2 (ours)     & \textbf{53.97}                               & \textbf{49.61}                               & \textbf{59.81}                               \\\bottomrule
    \end{tabularx}
\end{table}

In the absence of ground-truth data, the positioning error in Fig.~\ref{fig:flightResults} is computed relative to a GNSS reference post-processed by an EKF, which gave a variance of approximately 0.3\(m^2\) reflecting the post-fit dispersion of the filtered solution.
Although the positioning error should be interpreted as a relative and qualitative indicator, rather than an absolute measure of accuracy, the results remain informative.
Our optimized trajectory in Case 2 maintained a bounded positioning error evenly on both X and Y axes.
In contrast, the straight trajectory in Case 1 exhibited asymmetric performance: better accuracy on the X-axis but significantly worse on the Y-axis, reflecting poor excitation in that dimension.
This shows that the optimized trajectory may contribute to an even distribution of positioning performance across all axes, which is consistent with our goal of mitigating potentially dangerous worst-case positioning errors.

Together, our simulations and flight experiments demonstrate that our OPC can effectively improve the accuracy and precision of state estimation for the range-only quadrotor CLS.
This improvement is achieved in the presence of realistic noise in simulations and practical model uncertainties in flight tests, even though our observability cost was derived assuming noiseless state evolution.
This validates our approach and reinforces the utility of the OPC in applied settings.

\begin{figure}
    \centering
    \subfloat[X-axis position error.]{\includegraphics[]{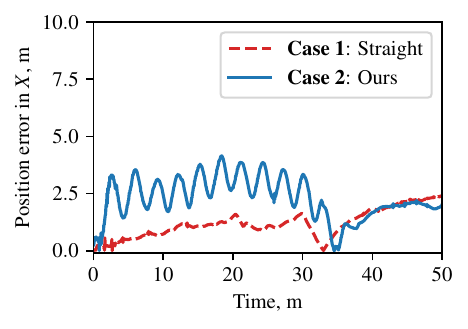}}
    \subfloat[Estimated X-axis uncertainty (\(3\sigma\) envelope).]{\includegraphics[]{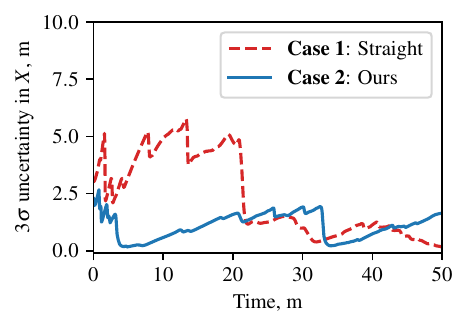}}

    \subfloat[Y-axis position error.]{\includegraphics[]{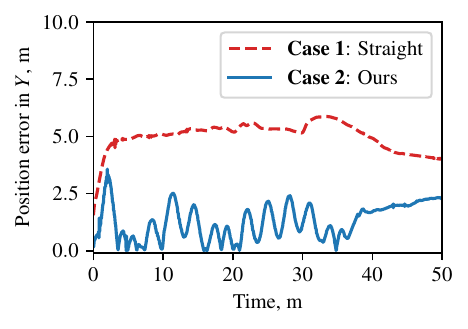}}
    \subfloat[Estimated Y-axis uncertainty (\(3\sigma\) envelope).]{\includegraphics[]{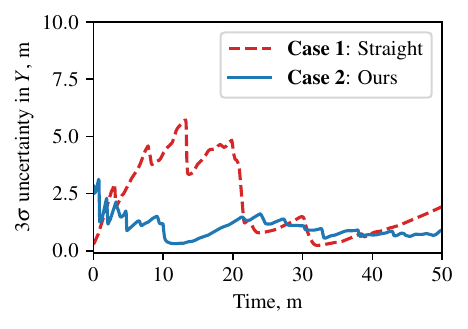}}

    \subfloat[Z-axis position error.]{\includegraphics[]{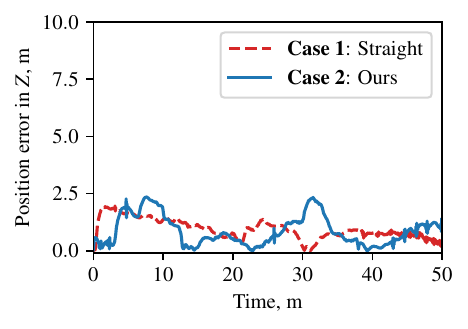}}
    \subfloat[Estimated Z-axis uncertainty (\(3\sigma\) envelope).]{\includegraphics[]{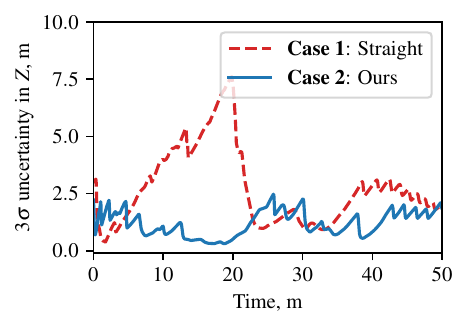}}

    \caption{CLS positioning performance in flight, with squares/circles marking starting/end positions, respectively.}%
    \label{fig:flightResults}
\end{figure}

\section{Conclusion}\label{sec:conclusion}

This work demonstrates that observability-aware control enables robust range-only cooperative localization systems (CLS) for quadrotors.
We addressed the fragility of range-only relative localization, where specific state components become poorly observable without sufficiently rich motion.

The proposed STLOG builds upon the classical LOG, which captures information lost by first-order quantities like the FIM, and enables tractable, point-wise evaluation of observability along a trajectory.
The resulting OPC serves as both a robust controller and safety mechanism: by maximizing the minimum eigenvalue of the STLOG, it generates motions that excite poorly observable directions, guarantees a minimum precision improvement in state estimation over each control stage, and prevents catastrophic uncertainty growth and estimator divergence.

Our simulations and experiments confirm that observability-aware dynamic formations outperform rigid and perturbed baselines in terms of estimation precision.
These results validated the proposed framework for application to quadrotor teams, specifically for leader-follower ferrying missions in GNSS-denied environments.
Our leader-follower configuration is limited to two quadrotors, for which a centralized cooperative localization and control architecture is sufficient for simplicity, but our inter-drone dynamics model can be generalized beyond quadrotors to pairs of multirotors in general.
The richness of this model incurs a high computational cost in evaluating its observability, which currently restricts the OPC to an offline guidance planner role in experiments.
We leave the extension of our approach to distributed estimation and control, and addressing computational bottlenecks, to future work.

\titleformat{\section}
{\normalfont\large\bfseries\centering} 
{\centering Appendix \thesection: #1}      
{1em}                        
{}
\begin{appendices}

    \section{Derivation of closed form posterior uncertainty}\label{app:CameronMartin}

This appendix supports the derivation of the likelihood ratio \(p\left(\frac{\mathbf{y}(t)|\mathbf{x}_0}{\mathbf{y}(t)}\right)\) proposed in Eq.~\eqref{eqn:likelihoodratio1} within Sec.~\ref{sec:measurementOfObservability} to quantify posterior uncertainty through continuous-time measurements.
Likelihood ratios in this form are \emph{Radon--Nikodym densities} between probability measures on an infinite-dimensional
path space.
In the Gaussian setting, they admit a closed form via the Cameron--Martin theorem.
We refer to~\cite{pollard2002user} for background on measure-theoretic probability, and to~\cite{bogachev2015gaussian,hairer2023introductionstochasticpdes} on Gaussian measure theory.

We recall the Cameron--Martin theorem, together with the definitions of the Reproducing Kernel Hilbert Space (RKHS), the Cameron--Martin space, and the associated Riesz representations.
The Cameron--Martin theorem characterizes when two Gaussian measures with the same covariance are absolutely continuous w.r.t.\ each other.

Let $H$ be a separable Hilbert space with inner product $\langle \cdot,\cdot \rangle_H$.
Let \(\mu\) be a \emph{centered Gaussian measure} on \(H\), meaning that for every continuous linear functional $l \in H^*$, the pushforward measure $l_* \mu$ on $\mathbb{R}$ is a one-dimensional Gaussian measure with zero mean.
The mean $m: H^* \rightarrow \mathbb{R}$ and covariance $\mathcal{K}: H^* \times H^* \rightarrow \mathbb{R}$ of $\mu$ are defined as the continuous linear functionals \(m(l) = \int_H l(y) \ d\mu(y)\) and \(\mathcal{K}(l, l^\prime)  = \int_H l(y-m) l^\prime(y-m) \ d\mu(y)\) for all $l, l^\prime \in H^*$.

In the following, we focus on the covariance operator $\mathcal{K}: H \rightarrow H$.
By the Riesz identification \(H\simeq H^*\), we use the same symbol \(\mathcal{K}\) for the induced bilinear form on \(H^*\times H^*\).
We assume $\mathcal{K}$ is a bounded, self-adjoint, and trace-class operator.
Then, there exists a centered Gaussian measure $\mu_{0,\mathcal{K}}$ with covariance \(\mathcal{K}\).
All subsequent constructions are expressed in terms of $\mathcal{K}$.
\begin{defn}[Reproducing kernel Hilbert space]
    Let $\iota: H^* \to L^2(H,\mu_{0,\mathcal{K}})$ denote the canonical embedding that identifies linear functionals with square-integrable functions.
    The closure of $\iota(H^*)$ in $L^2(H, \mu_{0,\mathcal{K}})$ under the Hilbert space norm is the \emph{reproducing kernel Hilbert space (RKHS)} $\mathcal{R}_{\mathcal{K}}$, and is itself a Hilbert space with the inner product \(\langle \iota(l), \iota(l^\prime) \rangle = \int_H l(y)l^\prime(y) \ d\mu_{0,\mathcal{K}}(y)= \mathcal{K}(l, l^\prime)\).
\end{defn}

\begin{defn}[Cameron--Martin space]
    The Cameron--Martin space $\mathcal{H}_{\mathcal{K}}$ is the subset of $H$ defined by \(\mathcal{H}_{\mathcal{K}} = \left\{ h \in H : \sup \{l(h) : l \in H^*, \mathcal{K}(l,l) \leq 1 \} < \infty \right\}\).
\end{defn}

\begin{defn}[Riesz representation in the Cameron--Martin space]
    For $h \in \mathcal{H}_{\mathcal{K}}$, the Riesz representation $\hat{h} \in \mathcal{R}_{\mathcal{K}}$ is defined by:
    \begin{align}\label{eqn:CM_iso_RKHS}
        l(h) = \int_H l(y)\hat{h}(y) \ d \mu_{0,\mathcal{K}}(y) \quad \text{for all $l \in H^*$},
    \end{align}
    and $h \mapsto \hat{h}$ is an isomorphism of Hilbert spaces $\mathcal{H}_{\mathcal{K}} \cong \mathcal{R}_{\mathcal{K}}$ when the Cameron--Martin space is equipped with the following inner product:
    \begin{align}\label{eqn:CM_inner_product}
        \langle h, k \rangle_{\mathcal{H}_{\mathcal{K}}} = \langle \hat{h}, \hat{k} \rangle_{\mathcal{R}_{\mathcal{K}}}.
    \end{align}
\end{defn}

We recall the Cameron–Martin theorem below.
\begin{thm}[Cameron--Martin]
    The Gaussian measure $\mu_{h,\mathcal{K}}$ is absolutely continuous with respect to $\mu_{0,\mathcal{K}}$, if and only if $h \in \mathcal{H}_\mathcal{K}$.
    In this case, the Radon--Nikodym density is:
    \begin{align}\label{eqn:CMtheorem1}
        \frac{d \mu_{h,\mathcal{K}}}{d \mu_{0,\mathcal{K}}}(y) = \exp\left( \hat{h}(y) - \frac{1}{2} \langle h, h \rangle_{\mathcal{H}_\mathcal{K}} \right),
    \end{align}
\end{thm}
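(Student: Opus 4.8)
The plan is to reduce the infinite-dimensional statement to a countable family of one-dimensional Gaussians by diagonalizing the covariance operator, and then to build the Radon--Nikodym density as an $L^1$-limit of finite-dimensional densities. Since $\mathcal{K}$ is self-adjoint, positive, and trace-class, I would invoke the spectral theorem to obtain an orthonormal eigenbasis $\{e_n\}$ of $H$ with $\mathcal{K} e_n = \lambda_n e_n$ and $\sum_n \lambda_n < \infty$. Writing $y_n = \langle y, e_n\rangle_H$, the coordinates are independent under $\mu_{0,\mathcal{K}}$, each distributed as $\mathcal{N}(0,\lambda_n)$, so $\mu_{0,\mathcal{K}}$ is realized as the product $\bigotimes_n \mathcal{N}(0,\lambda_n)$, while $\mu_{h,\mathcal{K}}$ is the corresponding shifted product $\bigotimes_n \mathcal{N}(h_n,\lambda_n)$ with $h_n = \langle h, e_n\rangle_H$. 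In this representation the Cameron--Martin norm reads $\langle h,h\rangle_{\mathcal{H}_{\mathcal{K}}} = \sum_n h_n^2/\lambda_n$, and membership $h \in \mathcal{H}_{\mathcal{K}}$ is exactly the condition that this sum converges.

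Next I would construct the candidate density. For each truncation to the first $N$ coordinates, the ratio of product Gaussians is the explicit finite-dimensional density
\[
D_N(y) = \exp\!\left( \sum_{n=1}^N \frac{h_n y_n}{\lambda_n} - \frac{1}{2}\sum_{n=1}^N \frac{h_n^2}{\lambda_n}\right),
\]
whose linear part $\hat{h}_N(y) = \sum_{n=1}^N h_n y_n/\lambda_n$ is a partial sum of independent mean-zero Gaussians with variances $h_n^2/\lambda_n$. When $h \in \mathcal{H}_{\mathcal{K}}$ these variances are summable, so $\hat{h}_N \to \hat{h}$ in $L^2(\mu_{0,\mathcal{K}})$, and the limit coincides with the Riesz representation in Eq.~\eqref{eqn:CM_iso_RKHS}. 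The sequence $D_N$ is a nonnegative martingale with $\int_H D_N \, d\mu_{0,\mathcal{K}} = 1$; I would establish uniform integrability (e.g.\ via a uniform $L^{1+\delta}$ bound from the Gaussian moment generating function, finite precisely because $\sum_n h_n^2/\lambda_n < \infty$) and conclude by martingale convergence that $D_N \to \exp(\hat{h} - \tfrac{1}{2}\langle h,h\rangle_{\mathcal{H}_{\mathcal{K}}})$ in $L^1(\mu_{0,\mathcal{K}})$. Identifying this limit with $d\mu_{h,\mathcal{K}}/d\mu_{0,\mathcal{K}}$ on the generating cylinder $\sigma$-algebra and extending by a monotone-class argument yields Eq.~\eqref{eqn:CMtheorem1} and the sufficiency direction.

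For necessity I would appeal to the Kakutani dichotomy for infinite products of probability measures: $\bigotimes_n \mathcal{N}(h_n,\lambda_n)$ is either equivalent to or mutually singular with $\bigotimes_n \mathcal{N}(0,\lambda_n)$, according to whether the product of Hellinger affinities $\prod_n \exp(-h_n^2/(8\lambda_n))$ is positive or zero, i.e.\ according to whether $\sum_n h_n^2/\lambda_n$ converges or diverges. Divergence, which is exactly $h \notin \mathcal{H}_{\mathcal{K}}$, forces mutual singularity and rules out absolute continuity.

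The main obstacle I anticipate is the limit passage of the second paragraph: controlling the exponential martingale $D_N$ so that it converges in $L^1$ --- not merely in probability, and not to a degenerate smaller limit --- and confirming that its limit is genuinely the density. This is where the hypothesis $h \in \mathcal{H}_{\mathcal{K}}$ does the essential work, since the single summability condition $\sum_n h_n^2/\lambda_n < \infty$ simultaneously makes $\hat{h}$ well-defined in $L^2$, keeps the normalizing constant $\langle h,h\rangle_{\mathcal{H}_{\mathcal{K}}}$ finite, and supplies the uniform integrability; the remaining effort is the bookkeeping that matches this coordinate-wise construction to the intrinsic RKHS objects defined in Eqs.~\eqref{eqn:CM_iso_RKHS} and~\eqref{eqn:CM_inner_product}.
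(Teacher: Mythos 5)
Your proposal is correct, but there is nothing in the paper to compare it against: the paper does not prove this theorem. It states the Cameron--Martin theorem as a classical result of Gaussian measure theory and defers its proof to the cited references (Hairer's lecture notes and Bogachev's monograph). What you have reconstructed is essentially the standard textbook proof, and its key steps check out. The spectral decomposition of the trace-class covariance realizes $\mu_{0,\mathcal{K}}$ and $\mu_{h,\mathcal{K}}$ as product measures $\bigotimes_n \mathcal{N}(0,\lambda_n)$ and $\bigotimes_n \mathcal{N}(h_n,\lambda_n)$ (supported on $H$ since $\sum_n \lambda_n < \infty$); the truncated densities $D_N$ form a unit-mass nonnegative martingale; your uniform-integrability estimate is exactly $\int_H D_N^{1+\delta}\, d\mu_{0,\mathcal{K}} = \exp\bigl(\tfrac{1}{2}\delta(1+\delta)\, s_N\bigr)$ with $s_N = \sum_{n \leq N} h_n^2/\lambda_n \leq \langle h, h\rangle_{\mathcal{H}_{\mathcal{K}}}$, which is uniformly bounded precisely when $h \in \mathcal{H}_{\mathcal{K}}$; and the cylinder sets form a $\pi$-system generating the Borel $\sigma$-algebra of the separable space $H$, so the monotone-class identification of the limit density is legitimate. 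The necessity direction via Kakutani's dichotomy, with per-coordinate Hellinger affinity $\exp\left(-h_n^2/(8\lambda_n)\right)$, is likewise the standard and correct route. Two minor points to tighten: first, your argument implicitly assumes $\lambda_n > 0$ for all $n$ (otherwise $h_n^2/\lambda_n$ is ill-formed); this matches the paper's standing restriction to positive-definite $\mathcal{K}$, but the theorem as stated allows degenerate $\mathcal{K}$, in which case the zero-eigenvalue coordinates carry Dirac masses and membership $h \in \mathcal{H}_{\mathcal{K}}$ forces $h_n = 0$ there, after which your argument applies to the remaining coordinates. Second, to identify the almost-sure martingale limit with $\exp\left(\hat{h} - \tfrac{1}{2}\langle h,h\rangle_{\mathcal{H}_{\mathcal{K}}}\right)$ you should invoke that $L^2$-convergent sums of independent random variables converge almost surely (L\'{e}vy/It\^{o}--Nisio), not merely along a subsequence; with that remark the identification is complete. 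The value of your approach is that it makes the appendix self-contained at the cost of a page of probability; the paper's choice to cite instead is defensible since the result is entirely classical and none of its proof machinery is reused elsewhere in the paper.
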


Let ${\{e_i\}}_{i=1}^\infty$ be an orthonormal eigenbasis for $\mathcal{K}$ with corresponding eigenvalues $\{\lambda_i\}_{i=1}^\infty$.
If $\lambda_i > 0$ for all $i$, the following characterization holds:
\begin{enumerate}
    \item $\mathcal{H}_{\mathcal{K}}$ is the set of elements $h$ satisfying $\sum_{i=1}^\infty \langle h, e_i \rangle^2/\lambda_i < \infty$;
    \item The inner product in Eq.~\eqref{eqn:CM_inner_product} is given explicitly as $\langle h, k \rangle_{\mathcal{H}_{\mathcal{K}}} = \langle \mathcal{K}^{-1/2} h, \mathcal{K}^{-1/2} k \rangle_{H}$;
    \item The map $h \mapsto \hat{h}$ in Eq.~\eqref{eqn:CM_iso_RKHS} is given explicitly as $\hat{h} = \mathcal{K}^{-1}h$.
\end{enumerate}

\textbf{To obtain a likelihood ratio that depends continuously on both the observation trajectory and the finite-dimensional quantity \(\mathbf{x}_0\), we focus on the case where $\mathcal{K}$ is positive definite, and only consider mean displacements in a smaller subspace $\mathcal{H}_{\mathcal{K}^2} \subset \mathcal{H}_\mathcal{K}$.}
In the context of Sec.~\ref{sec:measurementOfObservability}, \(\mathbf{x}_0\in \mathcal{X}\) corresponds to the initial condition, while \(\mathcal{A}(\mathbf{x})\) denotes the induced observation trajectory.
Since $\mathcal{H}_{\mathcal{K}^2} \hookrightarrow \mathcal{H}_\mathcal{K} \hookrightarrow H$ and $\mathcal{H}_{\mathcal{K}^2} \xrightarrow[]{\mathcal{\mathcal{K}}^{-1/2}} \mathcal{H}_\mathcal{K} \xrightarrow[]{\mathcal{K}^{-1/2}} H$ are continuous, we have:
\begin{cor}
    Let $\mathcal{K}$ be a positive definite trace-class operator on a separable Hilbert space $H$, and let $\mathcal{H}_{\mathcal{K}^2}$ be the Cameron--Martin space of $\mathcal{K}^2$.
    For $h \in \mathcal{H}_{\mathcal{K}^2}$, the Radon--Nikodym density of $\mu_{h,\mathcal{K}}$ with respect to $\mu_{0,\mathcal{K}}$ exists, and moreover is a continuous function on $\mathcal{H}_{K^2} \times H$ given by:
    \begin{align}\label{eqn:CMtheorem2}
        \frac{d \mu_{h,\mathcal{K}}}{d \mu_{0,\mathcal{K}}}(y) = F(h, y) = \exp\left( \langle y, \mathcal{K}^{-1}h\rangle_H - \frac{1}{2}\langle h, \mathcal{\mathcal{K}}^{-1}h \rangle_H \right).
    \end{align}
\end{cor}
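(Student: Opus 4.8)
The plan is to deduce the corollary directly from the Cameron--Martin theorem by restricting the admissible mean displacements from $\mathcal{H}_{\mathcal{K}}$ to the smaller space $\mathcal{H}_{\mathcal{K}^2}$, on which the almost-everywhere-defined Paley--Wiener functional $\hat{h}$ acquires a genuine, everywhere-defined continuous representative.

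First I would verify the inclusion $\mathcal{H}_{\mathcal{K}^2} \subset \mathcal{H}_{\mathcal{K}}$, so that the theorem even applies. Because $\mathcal{K}$ is trace-class its eigenvalues obey $\lambda_i \le \|\mathcal{K}\|$, whence $\sum_i \langle h, e_i\rangle^2/\lambda_i \le \|\mathcal{K}\|\sum_i \langle h, e_i\rangle^2/\lambda_i^2 < \infty$ for every $h \in \mathcal{H}_{\mathcal{K}^2}$. The Cameron--Martin theorem then gives $\mu_{h,\mathcal{K}} \ll \mu_{0,\mathcal{K}}$ with density $\exp(\hat{h}(y) - \frac{1}{2}\langle h, h\rangle_{\mathcal{H}_\mathcal{K}})$ as in Eq.~\eqref{eqn:CMtheorem1}, while the spectral characterization of $\mathcal{H}_\mathcal{K}$ identifies $\hat{h} = \mathcal{K}^{-1}h$ together with $\langle h, h\rangle_{\mathcal{H}_\mathcal{K}} = \|\mathcal{K}^{-1/2}h\|_H^2 = \langle h, \mathcal{K}^{-1}h\rangle_H$.

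The crux, which I expect to be the main obstacle, is upgrading $\hat{h}$ from a mere $L^2(H, \mu_{0,\mathcal{K}})$-class to a pointwise object. For a general $h \in \mathcal{H}_{\mathcal{K}}$ the element $\mathcal{K}^{-1}h$ need not lie in $H$, so $\hat{h} \in \mathcal{R}_\mathcal{K}$ is only defined $\mu_{0,\mathcal{K}}$-a.e.\ and carries no pointwise value. The defining condition of $\mathcal{H}_{\mathcal{K}^2}$, namely $\sum_i \langle h, e_i\rangle^2/\lambda_i^2 < \infty$, is precisely what forces $\mathcal{K}^{-1}h \in H$; the honest functional $y \mapsto \langle y, \mathcal{K}^{-1}h\rangle_H$ is then an element of $H^* \subset L^2(H, \mu_{0,\mathcal{K}})$ and serves as a canonical representative of the class $\hat{h}$. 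Substituting this representative, and $\langle h, \mathcal{K}^{-1}h\rangle_H$ for the quadratic term, converts Eq.~\eqref{eqn:CMtheorem1} into the pointwise formula of Eq.~\eqref{eqn:CMtheorem2}.

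It remains to establish joint continuity of $F$ on $\mathcal{H}_{\mathcal{K}^2}\times H$. Factoring $\mathcal{K}^{-1} = \mathcal{K}^{-1/2}\mathcal{K}^{-1/2}$ through the continuous maps $\mathcal{H}_{\mathcal{K}^2}\xrightarrow{\mathcal{K}^{-1/2}}\mathcal{H}_\mathcal{K}\xrightarrow{\mathcal{K}^{-1/2}}H$, the operator $\mathcal{K}^{-1}:\mathcal{H}_{\mathcal{K}^2}\to H$ is bounded, indeed isometric since $\|\mathcal{K}^{-1}h\|_H = \|h\|_{\mathcal{H}_{\mathcal{K}^2}}$. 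Cauchy--Schwarz then yields $|\langle y, \mathcal{K}^{-1}h\rangle_H| \le \|y\|_H\,\|h\|_{\mathcal{H}_{\mathcal{K}^2}}$, so the linear term is jointly continuous, while $\langle h, \mathcal{K}^{-1}h\rangle_H = \|\mathcal{K}^{-1/2}h\|_H^2 \le \|\mathcal{K}\|\,\|h\|_{\mathcal{H}_{\mathcal{K}^2}}^2$ is a bounded, hence continuous, quadratic form in $h$. Composing with the continuous exponential gives the claimed joint continuity of $F$, and the remaining steps are routine estimates; the only genuinely delicate point is the measurable-to-continuous upgrade above, which motivates the passage to $\mathcal{H}_{\mathcal{K}^2}$.
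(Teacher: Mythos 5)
Your proposal is correct and follows essentially the same route as the paper: restrict the mean displacement to $\mathcal{H}_{\mathcal{K}^2}$ so that $\mathcal{K}^{-1}h$ is an honest element of $H$, identify the Cameron--Martin functional $\hat{h}$ with the continuous linear functional $y \mapsto \langle y, \mathcal{K}^{-1}h\rangle_H$ via the spectral characterization, and obtain joint continuity from the boundedness of the factorization $\mathcal{H}_{\mathcal{K}^2}\xrightarrow{\mathcal{K}^{-1/2}}\mathcal{H}_{\mathcal{K}}\xrightarrow{\mathcal{K}^{-1/2}}H$. The paper compresses all of this into one sentence about continuous embeddings; your write-up supplies the same argument with the verifications (inclusion $\mathcal{H}_{\mathcal{K}^2}\subset\mathcal{H}_{\mathcal{K}}$, isometry of $\mathcal{K}^{-1}$, Cauchy--Schwarz bounds) made explicit.
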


Now consider a finite-dimensional space $X \subset \mathbb{R}^n$ and a continuous function $\mathcal{A}: X \rightarrow \mathcal{H}_{\mathcal{K}^2}$, and also
let $\nu$ be a Borel probability measure on $X$.
The space $X$ should be considered as a finite-dimensional parametrization of possible means for the Gaussian measure on $H$, and $\nu$ our uncertainty of this parameter.
We define the joint probability distribution on $X\times H$ to be the measure
\begin{align}\label{def:jointmeasure1}
    \gamma(x,y) = F(\mathcal{A}(x), y) (\nu \otimes \mu_{0,\mathcal{K}})(x, y),
\end{align}
where $F$ is the positive function given by Eq.~\eqref{eqn:CMtheorem2} and \(\otimes\) denotes the product measure.
Since $(x,y) \mapsto F(\mathcal{A}(x), y)$ is continuous, it is Borel measurable and $\nu \otimes \mu_{0,\mathcal{K}}$-integrable.

By Tonelli's theorem, the following normalization ensures that \(\gamma\) defines a valid joint distribution compatible with Bayes’ rule in Eq.~\eqref{eqn:likelihoodratio1}.
\begin{align*}
    \int_{X \times H} F(\mathcal{A}(x), y) \ d (\nu \otimes \mu_{0,\mathcal{K}})(x, y) = & \int_{X}\left( \int_H F(\mathcal{A}(x), y) \ d\mu_{0,\mathcal{K}}(y)\right) d\nu(x) = \int_{X}\left( \int_H d\mu_{\mathcal{A}(x),\mathcal{K}}(y)\right) d\nu(x) \\
    =                                                                                    & \int_{X} 1 \ d\nu = 1,
\end{align*}

Using the Cameron--Martin formula in Eq.~\eqref{eqn:CMtheorem1} in the penultimate step, we verify that the conditional measure $\gamma|_x$ of $\gamma$ in Eq.~\eqref{def:jointmeasure1} at a given $x \in X$ and the marginal measure $\bar{\gamma}$, both measures on $H$, are given respectively by:
\begin{align}
    \gamma|_x (y) = F(\mathcal{A}(x), y) \mu_{0,\mathcal{K}}(y) = \mu_{\mathcal{A}(x), \mathcal{K}}(y), \quad \bar{\gamma} (y) = \left(\int_X F(\mathcal{A}(x), y) \ d\nu(x) \right) \mu_{0,\mathcal{K}}(y),
\end{align}
from which it follows that both $\gamma|_x$ and $\bar{\gamma}$ have positive Radon--Nikodym densities with respect to $\mu_{0,\mathcal{K}}$, and therefore are absolutely continuous with respect to each other.
The following result provides the likelihood ratio between the conditional and marginal observation measures required by Eq.~\eqref{eqn:likelihoodratio1}.

\begin{prop}\label{prop:main_likelihood_ratio}
    Consider a finite-dimensional parameter space $X \subset \mathbb{R}^n$ and a Borel measure $\nu$ on $X$.
    Let $\mu_{\mathcal{A}(x), \mathcal{K}}$ be a family of Gaussian measures on a Hilbert space $H$, with the same positive-definite trace-class covariance $\mathcal{K}$, and mean parametrized by a continuous map $\mathcal{A}: X \rightarrow \mathcal{H}_{\mathcal{K}^2}$.
    Consider the following two distributions on $H$:
    \begin{enumerate}
        \item $\gamma|_x = \mu_{\mathcal{A}(x), \mathcal{K}}$, the conditional distribution of $y\in H$ for a given $x\in X$,
        \item $\bar{\gamma}$, the marginal distribution of $y\in H$ obtained from the marginalizing the joint distribution on $X \times H$,
    \end{enumerate}
    then $\gamma|_x$ and $\bar{\gamma}$ are absolutely continuous with respect to each other, with Radon--Nikodym density
    \begin{align}
        \frac{d\gamma|_x }{d\bar{\gamma} }(y) & = C(y) F(\mathcal{A}(x), y),\quad C(y) = \left(\int_X F(\mathcal{A}(x), y) \ d\nu(x) \right)^{-1},
    \end{align}
    where $F$ is given by Eq.~\eqref{eqn:CMtheorem2} and $C(y)$ is independent of $x\in X$.
\end{prop}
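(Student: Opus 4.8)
The plan is to make the marginalization structure fully explicit by constructing the joint distribution on $X \times H$ and reading off both the conditional and the marginal measures from its density, then obtaining the claimed density via the chain rule for Radon--Nikodym derivatives. First I would invoke the Cameron--Martin corollary (Eq.~\eqref{eqn:CMtheorem2}), which supplies the strictly positive function $F(\mathcal{A}(x), y)$ that serves as the density of $\mu_{\mathcal{A}(x),\mathcal{K}}$ against the reference measure $\mu_{0,\mathcal{K}}$. Because that corollary asserts $F$ is \emph{continuous} on $\mathcal{H}_{\mathcal{K}^2} \times H$ and $\mathcal{A}$ is continuous by hypothesis, the composite $(x,y) \mapsto F(\mathcal{A}(x), y)$ is jointly Borel measurable, which legitimizes defining the joint measure $\gamma = F(\mathcal{A}(x),y)\,(\nu \otimes \mu_{0,\mathcal{K}})$ as in Eq.~\eqref{def:jointmeasure1}.

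Next I would confirm that $\gamma$ is a probability measure. Since $F$ is positive (being an exponential), Tonelli's theorem applies to the nonnegative integrand; integrating out $y$ first and using the normalization $\int_H F(\mathcal{A}(x),y)\,d\mu_{0,\mathcal{K}}(y) = \mu_{\mathcal{A}(x),\mathcal{K}}(H) = 1$ from the Cameron--Martin theorem, then integrating the resulting constant $1$ against the probability measure $\nu$, yields total mass $1$. This same Tonelli computation shows, as a byproduct, that the marginal density $\int_X F(\mathcal{A}(x),y)\,d\nu(x)$ is finite for $\mu_{0,\mathcal{K}}$-almost every $y$.

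With the joint measure in hand I would identify the two marginalizations directly from the density. Freezing $x$ and using that the $y$-density already integrates to $1$, the conditional measure is $\gamma|_x(y) = F(\mathcal{A}(x),y)\,\mu_{0,\mathcal{K}}(y) = \mu_{\mathcal{A}(x),\mathcal{K}}(y)$, matching the first claimed distribution. Marginalizing over $x$ gives $\bar{\gamma}(y) = \bigl(\int_X F(\mathcal{A}(x),y)\,d\nu(x)\bigr)\mu_{0,\mathcal{K}}(y)$. Both measures thus carry \emph{strictly positive} densities against $\mu_{0,\mathcal{K}}$, so $\mu_{0,\mathcal{K}}$ is itself absolutely continuous with respect to each, giving mutual absolute continuity of $\gamma|_x$ and $\bar{\gamma}$. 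The desired expression then follows from the chain rule, $\frac{d\gamma|_x}{d\bar{\gamma}}(y) = F(\mathcal{A}(x),y)\bigl(\int_X F(\mathcal{A}(x'),y)\,d\nu(x')\bigr)^{-1} = C(y)F(\mathcal{A}(x),y)$, where $C(y)$ is the reciprocal of the marginal density and is manifestly independent of $x$.

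The hard part will be the disintegration step --- rigorously justifying that the conditional measure of $\gamma$ at a fixed $x$ is genuinely obtained by freezing the first coordinate in $F(\mathcal{A}(x),\cdot)$. Although intuitively clear for a product reference measure with a jointly measurable density, this requires invoking existence of regular conditional distributions (automatic here, since $X \subset \mathbb{R}^n$ and the separable $H$ are Polish and the density is jointly measurable) and, crucially, that the per-$x$ normalization equals exactly $1$ rather than an $x$-dependent constant --- a point secured precisely by the Cameron--Martin normalization established above. The only remaining technical care is ensuring $C(y)$ is finite and positive for $\mu_{0,\mathcal{K}}$-a.e.\ $y$, which again falls out of the same Tonelli bookkeeping together with the positivity of $F$.
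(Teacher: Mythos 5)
Your proposal is correct and follows essentially the same route as the paper: both construct the joint measure $\gamma = F(\mathcal{A}(x),y)\,(\nu \otimes \mu_{0,\mathcal{K}})$ from the Cameron--Martin density, verify unit mass via Tonelli and the per-$x$ normalization, read off the conditional and marginal measures as densities against $\mu_{0,\mathcal{K}}$, and conclude mutual absolute continuity and the quotient formula from positivity of $F$. Your added attention to the disintegration step (regular conditional distributions on Polish spaces) and to the a.e.\ finiteness of $C(y)$ makes explicit what the paper's ``we can verify'' glosses over, but it is a refinement of the same argument, not a different one.
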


A useful special case is when $H = L^2([0,T], \mathbb{R}^p)$, the Hilbert space of square-integrable $\mathbb{R}^p$-valued functions on the closed interval $[0,T]$.
Our convention for the Hilbert space inner product is
\begin{align}\label{eqn:InnerProductDef}
    \left\langle\mathbf{f}(t), \mathbf{g}(t) \right\rangle_{L^2([0,T], \mathbb{R}^p)} = \frac{1}{T} \int_0^T \mathbf{f}^\top(t) \mathbf{g}(t) \ dt.
\end{align}

To obtain sufficient conditions in the continuous-time trajectory space used in this work, we specialize Proposition~\ref{prop:main_likelihood_ratio} to an intermediate Sobolev space $H^s([0,T], \mathbb{R}^p)$.
\begin{cor}\label{cor:main_likelihood_ratio}
    Consider a finite-dimensional parameter space $X \subset \mathbb{R}^n$ and a Borel measure $\nu$ on $X$.
    Let $\mathcal{K}$ be a positive-definite trace-class operator in the Hilbert space $L^2([0,T], \mathbb{R}^p)$, with unbounded inverse $\mathcal{K}^{-1}$.
    Suppose that there exists a Sobolev space $H^s([0,T], \mathbb{R}^p)$ for some $s \in \mathbb{Z}^{+}$ such that:
    \begin{enumerate}
        \item The map $\mathcal{A}: X \rightarrow H^s([0,T], \mathbb{R}^p)$ is continuous;
        \item There exists a constant $C > 0$ such that \begin{align}\label{eqn:SobolevBound}
                  \norm{\mathcal{K}^{-1} \mathbf{h}}_{L^2([0,T], \mathbb{R}^p)} \leq C \norm{\mathbf{h}}_{H^s([0,T], \mathbb{R}^p)} \quad \text{for all $\mathbf{h} \in H^s([0,T], \mathbb{R}^p)$}
              \end{align}
              where $\norm{\cdot}_{H^s([0,T], \mathbb{R}^p)}$ denotes the corresponding Sobolev norm;
    \end{enumerate}
    Then for all $x\in X$, the measures
    \begin{enumerate}
        \item $\gamma|_x$, the Gaussian distribution on $L^2([0,T], \mathbb{R}^p)$ with mean $\mathcal{A}(x)$ and covariance $\mathcal{K}$, and
        \item $\bar{\gamma}$, the marginal distribution on $L^2([0,T], \mathbb{R}^p)$ obtained from marginalizing $x$ in the joint distribution in Eq.~\eqref{def:jointmeasure1},
    \end{enumerate}
    are absolutely continuous with respect to each other, with Radon--Nikodym density at $\mathbf{y}(t)$
    \begin{align}
        \frac{d\gamma|_x }{d\bar{\gamma} }(\mathbf{y}(t)) & = C(\mathbf{y}(t)) \exp\left(
        \frac{1}{T}\int_0^T \left(\mathbf{y}^\top(t)\left(\mathcal{K}^{-1}\mathcal{A}(x)\right)(t) - \frac{1}{2} \mathcal{A}(x)^\top(t)\left(\mathcal{K}^{-1}\mathcal{A}(x)\right)(t)\right) \ dt
        \right),
    \end{align}
    where $C(\mathbf{y}(t))$ is a constant independent of $x$.
\end{cor}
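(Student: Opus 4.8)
The plan is to reduce the corollary to Proposition~\ref{prop:main_likelihood_ratio} by verifying its one nontrivial hypothesis: that the mean map $\mathcal{A}$ lands \emph{continuously} in the Cameron--Martin space $\mathcal{H}_{\mathcal{K}^2}$ of $\mathcal{K}^2$. The remaining hypotheses of that proposition ($\mathcal{K}$ positive-definite and trace-class, $\nu$ a Borel measure on the finite-dimensional $X$) are assumed verbatim in the corollary, and the closed-form density will then follow by substituting the convention in Eq.~\eqref{eqn:InnerProductDef} into the abstract formula Eq.~\eqref{eqn:CMtheorem2}.

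First I would record an explicit description of $\mathcal{H}_{\mathcal{K}^2}$ and its norm. Since $\mathcal{K}$ is positive definite, so is $\mathcal{K}^2$, which therefore has no zero eigenvalues; the expedient spectral characterization stated just before the Cameron--Martin theorem thus applies with covariance $\mathcal{K}^2$ (eigenvalues $\lambda_i^2$). Its Cameron--Martin norm is $\norm{h}_{\mathcal{H}_{\mathcal{K}^2}} = \norm{(\mathcal{K}^2)^{-1/2} h}_{L^2([0,T], \mathbb{R}^p)} = \norm{\mathcal{K}^{-1} h}_{L^2([0,T], \mathbb{R}^p)}$, using $(\mathcal{K}^2)^{-1/2} = \mathcal{K}^{-1}$ for the self-adjoint positive operator $\mathcal{K}$. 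In particular, $h \in \mathcal{H}_{\mathcal{K}^2}$ precisely when $\mathcal{K}^{-1} h \in L^2([0,T], \mathbb{R}^p)$, with the two quantities coinciding in norm.

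With this identification in hand, hypothesis (2) of the corollary is exactly the assertion that the inclusion $H^s([0,T], \mathbb{R}^p) \hookrightarrow \mathcal{H}_{\mathcal{K}^2}$ is a bounded linear map: for every $\mathbf{h} \in H^s$ the Sobolev bound Eq.~\eqref{eqn:SobolevBound} reads $\norm{\mathbf{h}}_{\mathcal{H}_{\mathcal{K}^2}} = \norm{\mathcal{K}^{-1}\mathbf{h}}_{L^2([0,T], \mathbb{R}^p)} \leq C \norm{\mathbf{h}}_{H^s([0,T], \mathbb{R}^p)}$, so $H^s \subset \mathcal{H}_{\mathcal{K}^2}$ with continuous embedding. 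Composing this embedding with the continuity of $\mathcal{A}: X \to H^s$ from hypothesis (1) yields continuity of $\mathcal{A}: X \to \mathcal{H}_{\mathcal{K}^2}$. I expect this to be the step carrying the real weight, since the bound (2) is precisely what bridges the easily checked smoothness scale $H^s$—on which $\mathcal{A}$ is manifestly continuous—and the operator-dependent scale $\mathcal{H}_{\mathcal{K}^2}$ demanded by the abstract Gaussian-measure theory; the unboundedness of $\mathcal{K}^{-1}$ on all of $L^2$ is exactly why one cannot skip this and must restrict to a smoother domain.

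Finally, having met every hypothesis of Proposition~\ref{prop:main_likelihood_ratio}, I would invoke it to obtain the mutual absolute continuity of $\gamma|_x$ and $\bar{\gamma}$ together with the density $C(y)\,F(\mathcal{A}(x), y)$, where $F$ is given by Eq.~\eqref{eqn:CMtheorem2}. It then remains only to make $F$ concrete: set $h = \mathcal{A}(x)$ and expand the Hilbert-space inner products $\langle y, \mathcal{K}^{-1} h \rangle_H$ and $\langle h, \mathcal{K}^{-1} h \rangle_H$ using the convention Eq.~\eqref{eqn:InnerProductDef}, which converts them into the $\frac{1}{T}\int_0^T(\cdots)\,dt$ expressions appearing in the stated density. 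The constant $C(\mathbf{y}(t))$ inherits its independence of $x$ directly from Proposition~\ref{prop:main_likelihood_ratio}, completing the argument.
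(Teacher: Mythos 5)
Your proof is correct and takes essentially the same route as the paper's: both reduce the corollary to Proposition~\ref{prop:main_likelihood_ratio} by reading the Sobolev bound in Eq.~\eqref{eqn:SobolevBound} as continuity of the embedding $H^s([0,T], \mathbb{R}^p) \hookrightarrow \mathcal{H}_{\mathcal{K}^2}$, composing with the continuity of $\mathcal{A}: X \rightarrow H^s([0,T], \mathbb{R}^p)$, and then specializing the density $F$ via the inner product convention in Eq.~\eqref{eqn:InnerProductDef}. The only difference is that you justify the embedding claim explicitly through the identification $\norm{h}_{\mathcal{H}_{\mathcal{K}^2}} = \norm{\mathcal{K}^{-1}h}_{L^2([0,T], \mathbb{R}^p)}$ (using $(\mathcal{K}^2)^{-1/2} = \mathcal{K}^{-1}$), a step the paper asserts as an unproved ``equivalence.''
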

\begin{proof}
    The condition in Eq.~\eqref{eqn:SobolevBound} is equivalent to the continuity of the embedding $\iota: H^s([0,T], \mathbb{R}^p) \hookrightarrow \mathcal{H}_{\mathcal{K}^2}$.
    Then $\mathcal{A}: X \rightarrow H^s([0,T], \mathbb{R}^p) \hookrightarrow \mathcal{H}_{\mathcal{K}^2}$ is continuous, and the assumptions of Proposition~\ref{prop:main_likelihood_ratio} are met.
\end{proof}

Finally, we construct a covariance operator $\mathcal{K}$ for scalar-valued functions, satisfying condition 2 of Corollary~\ref{cor:main_likelihood_ratio}, as a base case that is generalizable to vector-valued functions such as our observation trajectories in Sec.~\ref{sec:measurementOfObservability}.
\begin{rem}\label{rem:main_likelihood_legendre}
    Let ${\left\{e_n(t) \right\}}_{n=0}^\infty$ be the orthonormal Legendre polynomials on $[0, T]$ under the inner product in Eq.~\eqref{eqn:InnerProductDef}.
    For a given function $h \in L^2([0,T])$, let $h_n = \langle h, e_n\rangle_{L^2([0,T])}$ be its $n$-th Fourier--Legendre coefficient.
    From~\cite{canuto2007spectral}, inequality (5.4.12), it follows that:
    \begin{align}\label{eqn:rapid_decay_of_legendre_coef}
        h \in H^s([0,T]) \Rightarrow \lvert h_n \rvert  \leq A n^{-s} \norm{h}_{H^s([0,T])},
    \end{align}
    where $A$ is a constant independent of $h$ and $n$.
    In particular, if $h(t)$ is smooth, then $h_n = {O}(n^{-s})$ for all $s > 0$.

    Now consider the operator $\mathcal{K}$, defined spectrally as the operator with eigenbasis ${\left\{e_n(t) \right\}}_{n=0}^\infty$ and eigenvalues ${\left\{\lambda_n \right\}}_{n=0}^\infty$, so that
    \begin{enumerate}
        \item $\lambda_n > 0$ for all $n$,
        \item $\tr(\mathcal{K}) = \sum_{n=0}^{\infty} \lambda_n < \infty$,
        \item For some $a > 0$, $\alpha > 1$ and $N > 0$, we have, for all $n > N$, that $\lambda_n \geq a n^{-\alpha}$,
    \end{enumerate}
    Then $\mathcal{K}$ is a valid covariance operator, and moreover is positive definite.
    We also have:
    \begin{align}\label{eqn:explicit_K_bound_1}
        \norm{\mathcal{K}^{-1}h}_{L^2([0,T])} \leq \max(\lambda_0, \ldots, \lambda_N) \norm{h}_{L^2([0,T])} + \frac{1}{a} \sum_{n = N+1}^{\infty} n^\alpha \lvert h_n \rvert,
    \end{align}
    whenever $\mathcal{K}^{-1}h \in L^2([0,T])$.
    Comparing Eqs.~\eqref{eqn:rapid_decay_of_legendre_coef} and~\eqref{eqn:explicit_K_bound_1} shows that, for any integer $s > \alpha + 1$, the condition $h \in \norm{h}_{H^s([0,T])}$ is sufficient for the right-hand side of Eq.~\eqref{eqn:explicit_K_bound_1} to converge, and in such a case an inequality of the form Eq.~\eqref{eqn:SobolevBound} holds.
\end{rem}

\section{Derivation of the asymptotic size of the minimum eigenvalue of the STLOG}\label{app:asymptotics}

This appendix provides the derivation of Proposition~\ref{prop:STLOGasymptotics}.
This asymptotic result relates the STLOG in Eq.~\eqref{eq:stlogExpr} to the local observability index~\(r_*\)~\eqref{eq:rstardef} defined as a bridge to rank-testing observability in Sec.~\ref{sec:observabilityAnalysis}.
We consider a fixed operating point of the system \((\mathbf{x}, \mathbf{u})\), and adopt several inessential simplifications.
Specifically, we set $\boldsymbol{\Sigma} = \mathbf{1}$, and simply denote the order-$r$ STLOG from Eq.~\eqref{eq:stlogExpr} by $\mathbf{W}^{(r)}$ without the subscript $\boldsymbol{\Sigma}^{-1}$.

We first state the main theorem of this appendix giving bounds on the minimum eigenvalue of the STLOG, then we give our proposition that bounds on this minimum eigenvalue influence local observability. Finally, we derive the relevant bounds.
\begin{thm}\label{thm:STLOGbound}
    Let~\((\mathbf{x},\mathbf{u})\) be given and~\(r_*(\mathbf{x},\mathbf{u})\) be its local observability index.
    Consider the minimum eigenvalue of the order-$r$ STLOG~\(\mathbf{W}^{(r)}(\mathbf{x},\mathbf{u};\Delta T)\).
    \begin{enumerate}
        \item\label{it:part1} If $r < r_*(\mathbf{x},\mathbf{u})$, then $\lambda_{\min}(\mathbf{W}^{(r)}(\mathbf{x},\mathbf{u};\Delta T)) = 0$ .
        \item\label{it:part2} If $r \geq r_*(\mathbf{x},\mathbf{u})$, then for all sufficiently small $\Delta T$, there exists positive constants $C_1(\mathbf{x},\mathbf{u}), C_2(\mathbf{x},\mathbf{u})$ independent of $\Delta T$ such that
        \begin{equation}\label{eq:STLOGublb}
            C_1(\mathbf{x},\mathbf{u})\Delta T^{2r_*(\mathbf{x},\mathbf{u}) + 1} \leq \lambda_{\min}(\mathbf{W}^{(r)}(\mathbf{x},\mathbf{u};\Delta T)) \leq C_2(\mathbf{x},\mathbf{u})\Delta T^{2r_*(\mathbf{x},\mathbf{u}) + 1}.
        \end{equation}
        In particular, there exists some $C \in [C_1 , C_2]$ and a sequence $\Delta T_k \to 0$ such that \(\lim_{k \to \infty} \frac{\lambda_{\min}(\mathbf{W}^{(r)}(\mathbf{x},\mathbf{u};\Delta T_k))}{\Delta T_k^{2r_*(\mathbf{x},\mathbf{u}) + 1}} = C\).
    \end{enumerate}
    \begin{proof}
        We prove Part~\ref{it:part1} in Proposition~\ref{prop:theoremPart1}, and derive $C_1(\mathbf{x},\mathbf{u}), C_2(\mathbf{x},\mathbf{u})$ in Propositions~\ref{prop:STLOGUBdone} and~\ref{prop:STLOGLBdone} respectively.
    \end{proof}
\end{thm}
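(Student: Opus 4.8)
The plan is to recast the STLOG as a genuine continuous-time Gramian of a truncated Taylor polynomial and then read off its eigenvalue asymptotics by a change of variables $t = \Delta T\, s$. Writing $\mathbf{G}_i = D(L_\mathbf{f}^i\mathbf{h})$ and using $\int_0^{\Delta T} t^{i+j}\,dt = \Delta T^{i+j+1}/(i+j+1)$, the definition in Eq.~\eqref{eq:stlogExpr} with $\boldsymbol{\Sigma}=\mathbf{1}$ is exactly
\[
    \mathbf{W}^{(r)}(\mathbf{x},\mathbf{u};\Delta T) = \int_0^{\Delta T} \mathbf{M}(t)^\top \mathbf{M}(t)\,dt, \qquad \mathbf{M}(t) = \sum_{i=0}^{r} \frac{t^i}{i!}\,\mathbf{G}_i,
\]
so that $\mathbf{M}(t)$ is the order-$r$ truncation of the series for $D\mathcal{A}_t$ in Eq.~\eqref{eq:DAseries}. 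By the Rayleigh characterization $\lambda_{\min}(\mathbf{W}^{(r)}) = \min_{\|\mathbf{w}\|=1}\int_0^{\Delta T}\|\mathbf{M}(t)\mathbf{w}\|^2\,dt$, the whole analysis reduces to controlling the $L^2([0,\Delta T])$-norm of the vector polynomial $t\mapsto \mathbf{M}(t)\mathbf{w}$, whose coefficients are $\mathbf{G}_i\mathbf{w}/i!$.

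Part~\ref{it:part1} is then immediate: if $r<r_*$ then $\boldsymbol{\mathcal{O}}^{(r)}$ is column-rank-deficient, so some unit $\mathbf{w}\in\ker\boldsymbol{\mathcal{O}}^{(r)}$ has $\mathbf{G}_i\mathbf{w}=0$ for every $i\le r$; hence $\mathbf{M}(t)\mathbf{w}\equiv 0$ and the Rayleigh quotient vanishes (this is Proposition~\ref{prop:theoremPart1}). For Part~\ref{it:part2} I substitute $t=\Delta T\,s$ to obtain
\[
    \mathbf{w}^\top\mathbf{W}^{(r)}\mathbf{w} = \Delta T\int_0^1\Big\|\sum_{i=0}^{r}\Delta T^{i}\,\frac{s^i}{i!}\,\mathbf{G}_i\mathbf{w}\Big\|^2\,ds,
\]
which exposes the powers of $\Delta T$. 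The exponent is governed by the first index $i$ at which $\mathbf{G}_i\mathbf{w}\neq 0$; since $\boldsymbol{\mathcal{O}}^{(r_*)}$ has trivial kernel, every unit $\mathbf{w}$ has such an index $\le r_*$, while vectors in $\ker\boldsymbol{\mathcal{O}}^{(r_*-1)}$ first become nonzero exactly at $i=r_*$. This is the source of the $\Delta T^{2r_*+1}$ scaling.

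For the upper bound (Proposition~\ref{prop:STLOGUBdone}) I test the Rayleigh quotient against the unit vector $\mathbf{w}_*\in\ker\boldsymbol{\mathcal{O}}^{(r_*-1)}$ minimizing $\|\mathbf{G}_{r_*}\mathbf{w}\|$: its low-order coefficients vanish, so factoring out $\Delta T^{r_*}$ leaves an integrand converging as $\Delta T\to 0$ to $s^{2r_*}\|\mathbf{G}_{r_*}\mathbf{w}_*\|^2/(r_*!)^2$. Integrating $s^{2r_*}$ and bounding the subleading terms for small $\Delta T$ gives $C_2 = \beta\,\sigma_{\min}^2(\boldsymbol{\mathcal{O}}^{(r_*)}\vert_{\ker\boldsymbol{\mathcal{O}}^{(r_*-1)}})$, using that $\|\boldsymbol{\mathcal{O}}^{(r_*)}\mathbf{w}\|=\|\mathbf{G}_{r_*}\mathbf{w}\|$ on $\ker\boldsymbol{\mathcal{O}}^{(r_*-1)}$. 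I expect the \emph{lower bound} (Proposition~\ref{prop:STLOGLBdone}) to be the main obstacle, since uniformity over all $\mathbf{w}$ must be secured: a vector nearly aligned with $\ker\boldsymbol{\mathcal{O}}^{(r_*-1)}$ has tiny low-order coefficients, and one must rule out cancellations that could depress the scaling below $\Delta T^{2r_*+1}$. I handle this by noting that the moment matrix with entries $1/(i+j+1)$ is a Hilbert matrix, hence positive definite with least eigenvalue $\lambda_H>0$, so
\[
    \int_0^1\Big\|\sum_i \Delta T^i \tfrac{s^i}{i!}\mathbf{G}_i\mathbf{w}\Big\|^2 ds \;\ge\; \lambda_H \sum_{i=0}^{r}\Delta T^{2i}\frac{\|\mathbf{G}_i\mathbf{w}\|^2}{(i!)^2}.
\]
Keeping only the terms $i\le r_*$, using $\Delta T^{2i}\ge \Delta T^{2r_*}$ and $1/(i!)^2\ge 1/(r_*!)^2$ for $\Delta T\le 1$, and finally the full-rank estimate $\sum_{i=0}^{r_*}\|\mathbf{G}_i\mathbf{w}\|^2=\|\boldsymbol{\mathcal{O}}^{(r_*)}\mathbf{w}\|^2\ge\sigma_{\min}^2(\boldsymbol{\mathcal{O}}^{(r_*)})$, yields $C_1 = \alpha\,\sigma_{\min}^2(\boldsymbol{\mathcal{O}}^{(r_*)})$. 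The closing ``in particular'' statement then follows by squeezing the bounded ratio $\lambda_{\min}(\mathbf{W}^{(r)})/\Delta T^{2r_*+1}\in[C_1,C_2]$ and extracting a convergent subsequence as $\Delta T\to 0$.
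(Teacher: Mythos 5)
Your proposal is correct and takes essentially the same route as the paper's proof in Appendix~B: the same Gramian/Rayleigh-quotient representation of the STLOG, the same kernel vector argument for Part~1, the same test vector in $\ker\boldsymbol{\mathcal{O}}^{(r_*-1)}$ minimizing $\norm{D(L_\mathbf{f}^{r_*}\mathbf{h})\mathbf{v}}$ for the upper bound, the same Hilbert-matrix positive-definiteness step for the lower bound (yielding identical constants $C_1, C_2$), and the same compactness argument for the closing subsequence claim. Your change of variables $t=\Delta T\,s$ is only a cosmetic repackaging of the paper's block-diagonal scaling matrix $\boldsymbol{\Lambda}^{(r)}$.
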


While Theorem~\ref{thm:STLOGbound} gave bounds on \(\lambda_{\min}(\mathbf{W}^{(r)})\), we separately prove how such bounds can influence local weak observability of a system.
Recall in Sec.~\ref{sec:observabilityAnalysis} we stated that the system is \emph{locally weakly observable} at order $r$ if~\(\text{rank }\boldsymbol{\mathcal{O}}^{(r)} = \dim \mathcal{X}\). We introduce the following subspace:
\begin{align}
    \mathcal{H}_k = \bigcap_{i=0}^k \ker \left(DL_\mathbf{f}^i \mathbf{h}\right) = \ker \boldsymbol{\mathcal{O}}^{(k)}.
\end{align}
Since $\mathcal{H}_{k-1} \supset \mathcal{H}_k$, this decreasing sequence of subspaces reaches $\{0\}$ precisely at the local observability index~$r_*$.
This definition leads to the following Lemma:
\begin{lem}\label{lem:Hrspace}
    \
    \begin{enumerate}
        \item $\mathcal{H}_{r}=\{0\}$ $\Leftrightarrow$ the system is locally observable at order~$r$.
        \item The local observability index $r_*(\mathbf{x},\mathbf{u})$ is the smallest value of $r$ such that $\mathcal{H}_{r} = \{0\}$.
        \item In particular, if $r_* > 0$, then $\mathcal{H}_{r_* - 1}$ is nonzero.
    \end{enumerate}
\end{lem}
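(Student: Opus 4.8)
The plan is to reduce all three claims to a single structural observation: the kernel of a vertically stacked matrix equals the intersection of the kernels of its blocks. Since the order-$r$ observability matrix $\boldsymbol{\mathcal{O}}^{(r)}(\mathbf{x},\mathbf{u})$ in Eq.~\eqref{eq:observabilityMatrixControl} is precisely the vertical stack of the blocks $DL_\mathbf{f}^{(k)}\mathbf{h}$ for $k=0,\ldots,r$, a vector $\mathbf{z}$ lies in $\ker\boldsymbol{\mathcal{O}}^{(r)}$ if and only if it is annihilated by every block, i.e.\ $\ker\boldsymbol{\mathcal{O}}^{(r)} = \bigcap_{i=0}^r \ker(DL_\mathbf{f}^i\mathbf{h}) = \mathcal{H}_r$. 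This identity is exactly the defining equation of $\mathcal{H}_r$, so the whole lemma becomes an exercise in translating statements about $\mathrm{rank}\,\boldsymbol{\mathcal{O}}^{(r)}$ into statements about $\mathcal{H}_r$ via rank--nullity.

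For Part~1, I would invoke rank--nullity applied to $\boldsymbol{\mathcal{O}}^{(r)}(\mathbf{x},\mathbf{u})$, whose column count is $\dim\mathcal{X} = n$: we have $\mathrm{rank}\,\boldsymbol{\mathcal{O}}^{(r)} + \dim\ker\boldsymbol{\mathcal{O}}^{(r)} = n$. Hence weak local observability at order $r$, namely $\mathrm{rank}\,\boldsymbol{\mathcal{O}}^{(r)} = n$, is equivalent to $\dim\ker\boldsymbol{\mathcal{O}}^{(r)} = 0$, which by the identity above is $\mathcal{H}_r = \{0\}$. For Part~2, I would compare the defining set of $r_*$ in Eq.~\eqref{eq:rstardef}, namely $\{r : \mathrm{rank}\,\boldsymbol{\mathcal{O}}^{(r)} = \dim\mathcal{X}\}$, with $\{r : \mathcal{H}_r = \{0\}\}$; Part~1 shows these two sets of non-negative integers coincide, so their infimum agrees. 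To promote this infimum to a genuine minimum I would use the monotonicity $\mathcal{H}_{k-1}\supset\mathcal{H}_k$ noted just before the lemma: once $\mathcal{H}_r = \{0\}$ it remains $\{0\}$ for all larger indices, so $\{r : \mathcal{H}_r = \{0\}\}$ is an upper set of $\mathbb{Z}_{\geq 0}$ and its infimum, when attained, is its least element. Thus $r_*$ is the smallest $r$ with $\mathcal{H}_r = \{0\}$.

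Part~3 then follows immediately from the minimality established in Part~2: if $r_* > 0$, then $r_*-1$ is a legitimate non-negative index strictly below $r_*$, and by definition of $r_*$ as the \emph{smallest} value killing $\mathcal{H}$, we must have $\mathcal{H}_{r_*-1}\neq\{0\}$. I do not anticipate a genuine obstacle here, as every step is a direct consequence of rank--nullity and the nesting of the $\mathcal{H}_k$; the only point requiring a moment's care is the infimum-versus-minimum distinction in Part~2, which is precisely where the monotonicity of the $\mathcal{H}_k$ is needed to guarantee that the infimum in Eq.~\eqref{eq:rstardef} is actually attained and behaves as a minimum.
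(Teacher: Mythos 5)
Your proof is correct and takes essentially the same route the paper intends: the paper states this lemma without proof as ``elementary consequences'' of the definition $\mathcal{H}_k = \bigcap_{i=0}^k \ker\left(DL_\mathbf{f}^i\mathbf{h}\right) = \ker\boldsymbol{\mathcal{O}}^{(k)}$ and the nesting $\mathcal{H}_{k-1}\supset\mathcal{H}_k$ noted immediately before it, and your argument (kernel-of-a-stacked-matrix identity, rank--nullity for Part~1, minimality for Parts~2 and~3) is exactly that elementary route written out. The only minor redundancy is using monotonicity to promote the infimum to a minimum in Part~2, since well-ordering of the non-negative integers already guarantees a nonempty set attains its infimum; this is harmless.
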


Before incorporating the STLOG into the derivation, we rewrite it in terms of \(\boldsymbol{\mathcal{O}}^{(r)}\).
For any $\mathbf{v}, \tilde{\mathbf{v}} \in \mathcal{X}$:\footnote{In Eq.~\eqref{eq:STLOGalt}, if instead of the standard inner product on $\mathcal{Y}$ we use the inner product induced by $\boldsymbol{\Sigma}^{-1}$, we recover the expressions in Eqs.~\eqref{eq:gramiansigmadef} and~\eqref{eq:stlogExpr}.}
\begin{align}\label{eq:STLOGalt}
    \mathbf{v}^\top \mathbf{W}^{(r)} \tilde{\mathbf{v}} = \int_0^{\Delta T} & {\left( \begin{bmatrix} \mathbf{1} & t \mathbf{1} & \cdots & \frac{t^r}{r!} \mathbf{1} \end{bmatrix} {\boldsymbol{\mathcal{O}}^{(r)}} \mathbf{v}\right)}^\top \left( \begin{bmatrix} \mathbf{1} & t \mathbf{1} & \cdots & \frac{t^r}{r!} \mathbf{1} \end{bmatrix} {\boldsymbol{\mathcal{O}}^{(r)}} \tilde{\mathbf{v}} \right) \ dt,
\end{align}
making $\mathbf{W}^{(r)}$ positive semi-definite by construction.
\begin{prop}\label{prop:theoremPart1}
    The system is not locally observable at order $r$ if and only if $\lambda_{\min}(\mathbf{W}^{(r)} ) = 0$.
    \begin{proof}
        Suppose the system is not locally observable at order $r$.
        By Lemma~\ref{lem:Hrspace}, there exists some nonzero $\mathbf{v} \in \mathcal{H}_{r}$.
        In particular, ${\boldsymbol{\mathcal{O}}^{(r)}} \mathbf{v} = 0$.
        Substituting such \(\mathbf{v}\) into Eq.~\eqref{eq:STLOGalt} gives $\mathbf{v}^\top\mathbf{W}^{(r)} \mathbf{v} = 0$, so $\lambda_{\min}(\mathbf{W}^{(r)}) = 0$.

        Conversely, if $\lambda_{\min}(\mathbf{W}^{(r)}) = 0$, then there exists some nonzero $\mathbf{v} \in \mathcal{X}$ such that $\mathbf{v}^\top\mathbf{W}^{(r)} \mathbf{v} = 0$.
        Rewriting Eq.~\eqref{eq:STLOGalt}:
        \begin{align}
            \int_0^{\Delta T} \norm{\begin{bmatrix} \mathbf{1} & t \mathbf{1} & \cdots & \frac{t^r}{r!} \mathbf{1} \end{bmatrix} {\boldsymbol{\mathcal{O}}^{(r)}} {\mathbf{v}}}^2 \ dt = 0,
        \end{align}
        from which we deduce \(
        \begin{bmatrix} \mathbf{1} & t \mathbf{1} & \cdots & \frac{t^r}{r!} \mathbf{1} \end{bmatrix} {\boldsymbol{\mathcal{O}}^{(r)}} {\mathbf{v}} = 0\) for all \(t \in [0, \Delta T]\), which implies ${\boldsymbol{\mathcal{O}}^{(r)}} {\mathbf{v}} = 0$, thus we conclude the system is not locally observable at order $r$.
    \end{proof}
\end{prop}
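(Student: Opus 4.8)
The plan is to read everything off the quadratic-form representation in Eq.~\eqref{eq:STLOGalt}. Writing the block row \(\mathbf{M}(t) = \begin{bmatrix} \mathbf{1} & t\mathbf{1} & \cdots & \frac{t^r}{r!}\mathbf{1}\end{bmatrix}\) as a local shorthand, that identity specializes on the diagonal to \(\mathbf{v}^\top \mathbf{W}^{(r)}\mathbf{v} = \int_0^{\Delta T} \norm{\mathbf{M}(t)\boldsymbol{\mathcal{O}}^{(r)}\mathbf{v}}^2\,dt \geq 0\), so \(\mathbf{W}^{(r)}\) is positive semi-definite and its minimum eigenvalue vanishes precisely when some nonzero \(\mathbf{v}\) lies in its kernel, i.e.\ \(\mathbf{v}^\top\mathbf{W}^{(r)}\mathbf{v} = 0\). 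The proposition then reduces to showing that this kernel coincides with \(\ker\boldsymbol{\mathcal{O}}^{(r)} = \mathcal{H}_r\), which by Lemma~\ref{lem:Hrspace} is nontrivial exactly when the system fails to be observable at order \(r\).

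For the easy direction I would suppose the system is not observable at order \(r\). Then \(\mathcal{H}_r \neq \{0\}\), so I can pick a nonzero \(\mathbf{v}\) with \(\boldsymbol{\mathcal{O}}^{(r)}\mathbf{v} = 0\); this annihilates \(\mathbf{M}(t)\boldsymbol{\mathcal{O}}^{(r)}\mathbf{v}\) pointwise, forcing \(\mathbf{v}^\top\mathbf{W}^{(r)}\mathbf{v} = 0\), and hence \(\lambda_{\min}(\mathbf{W}^{(r)}) = 0\) since the form is positive semi-definite and admits a nonzero null vector.

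For the converse I would begin from \(\lambda_{\min}(\mathbf{W}^{(r)}) = 0\), extract a nonzero null vector \(\mathbf{v}\), and use \(\int_0^{\Delta T}\norm{\mathbf{M}(t)\boldsymbol{\mathcal{O}}^{(r)}\mathbf{v}}^2\,dt = 0\). Because the integrand is continuous and nonnegative, it must vanish identically on \([0,\Delta T]\), giving \(\mathbf{M}(t)\boldsymbol{\mathcal{O}}^{(r)}\mathbf{v} = \sum_{k=0}^r \frac{t^k}{k!}\bigl(DL_\mathbf{f}^k\mathbf{h}\bigr)\mathbf{v} = 0\) for every \(t\) in the interval. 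The step I expect to treat most carefully — and the only nonroutine one — is inferring \(\boldsymbol{\mathcal{O}}^{(r)}\mathbf{v} = 0\) from this: the left-hand side is an \(\mathbb{R}^p\)-valued polynomial in \(t\) of degree at most \(r\), and a polynomial vanishing on a nondegenerate interval has all coefficients zero, so each block \(\bigl(DL_\mathbf{f}^k\mathbf{h}\bigr)\mathbf{v}\) vanishes for \(k = 0,\ldots,r\). This places the nonzero \(\mathbf{v}\) in \(\ker\boldsymbol{\mathcal{O}}^{(r)} = \mathcal{H}_r\), so \(\rank\boldsymbol{\mathcal{O}}^{(r)} < \dim\mathcal{X}\) and the system is not observable at order \(r\), which closes the equivalence.
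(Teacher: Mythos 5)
Your proposal is correct and follows essentially the same route as the paper: both directions rest on the quadratic-form identity in Eq.~\eqref{eq:STLOGalt}, with the forward direction using Lemma~\ref{lem:Hrspace} to produce a null vector of \(\boldsymbol{\mathcal{O}}^{(r)}\) and the converse extracting \(\boldsymbol{\mathcal{O}}^{(r)}\mathbf{v} = 0\) from the vanishing integral. The only difference is that you spell out the step the paper leaves implicit --- that a vector-valued polynomial of degree at most \(r\) vanishing on \([0,\Delta T]\) has all coefficients zero --- which is a welcome clarification but not a different argument.
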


We first derive the upper bound in Eq.~\eqref{thm:STLOGbound}.
Let $r_*$ be the local observability index.
We only consider $\mathbf{W}^{(r)}$ with $r \geq r_*$ in the following, and assume $r_* > 0$ without loss of generality.

We use the identity on analytical minima of a quadratic form over a unit sphere \(\lambda_{\min}(\mathbf{W}^{(r)}) = \min_{\mathbf{v} \in \mathcal{X}, \ \norm{\mathbf{v}}= 1} \mathbf{v}^\top\mathbf{W}^{(r)} \mathbf{v}\). We thus seek an appropriate unit vector $\mathbf{v}$ that demonstrates the upper bound in Eq.~\eqref{thm:STLOGbound}.
Any unit vector~$\mathbf{v} \in \mathcal{H}_{r_* - 1}$ satisfies
\begin{align}\label{eq:ubcond1}
    \left(DL_\mathbf{f}^{r_*}\mathbf{h}\right) \mathbf{v} \neq 0, \  \left(DL_\mathbf{f}^{k}\mathbf{h} \right) \mathbf{v} = 0 \ \text{for all } k < r_*.
\end{align}
In particular,~$\norm{(DL_\mathbf{f}^{r_*}\mathbf{h}) \mathbf{v}} = \norm{{\boldsymbol{\mathcal{O}}^{(r_*)}} {\mathbf{v}}}$.
We pick a unit vector~$\mathbf{v}$ that minimizes~$\norm{(DL_\mathbf{f}^{r_*}\mathbf{h}) \mathbf{v}}$ and obtain:
\begin{align}\label{eq:ubcond2}
    \norm{(DL_\mathbf{f}^{r_*}\mathbf{h}) \mathbf{v}} = \sigma_{\min}\left({\boldsymbol{\mathcal{O}}^{(r_*)}}\vert_{\mathcal{H}_{r_* - 1}}\right) > 0,
\end{align}
where~$\vert_{\mathcal{H}_{r_* - 1}}$ denotes the restriction to the subspace~$\mathcal{H}_{r_* - 1}$.
Now we evaluate $\mathbf{v}^\top\mathbf{W}^{(r)} \mathbf{v}$ using our selected $\mathbf{v}$:
\begin{equation}\label{eq:ubcalc1}
    \begin{bmatrix} \mathbf{1} & t \mathbf{1} & \cdots & \frac{t^r}{r!} \mathbf{1} \end{bmatrix} {\boldsymbol{\mathcal{O}}^{(r)}} {\mathbf{v}} = \frac{t^{r_*}}{r_*!} \left(DL_\mathbf{f}^{r_*}\mathbf{h}\right) \mathbf{v} + t^{r_*+ 1}\mathbf{a}(\mathbf{v}, t),
\end{equation}
for some polynomial $\mathbf{a}(\mathbf{v}, t)$.
Substituting Eq.~\eqref{eq:ubcalc1} into Eq.~\eqref{eq:STLOGalt} and evaluating the integral gives
\begin{equation}
    \mathbf{v}^\top\mathbf{W}^{(r)} \mathbf{v} = \frac{\Delta T^{2r_*+1}\norm{\left(DL_\mathbf{f}^{r_*}\mathbf{h}\right) \mathbf{v}}^2}{{(r_*!)}^2(2r_*+1)} + \Delta T^{2r_*+2}\alpha(\mathbf{v}, \Delta T),
\end{equation}
for some polynomial $\alpha(\mathbf{v}, \Delta T)$.
Let $\abs{\alpha(\mathbf{v}, \Delta T)} \leq A$ for all sufficiently small $\Delta t$ and all unit vectors $\mathbf{v}$, and $B = \frac{\sigma^2_{\min}\left({\boldsymbol{\mathcal{O}}^{(r_*)}}\vert_{\mathcal{H}_{r_* - 1}}\right) }{{{(r_*!)}^2}(2r_*+1)}$.
We have:
\begin{align}
    \lambda_{\min}(\mathbf{W}^{(r)}) \leq \mathbf{v}^\top\mathbf{W}^{(r)} \mathbf{v} \leq \Delta T^{2r_*+1}\left(B + A\Delta T\right).
\end{align}
For any $\epsilon >0$, choosing $\Delta T < \epsilon B/A$ gives:
\begin{prop}\label{prop:STLOGUBdone}
    For all $\epsilon > 0$, there exists $\delta > 0$ such that
    \begin{align}\label{eq:STLOGUBdone}
        \lambda_{\min}(\mathbf{W}^{(r)}) \leq \frac{(1+\epsilon) \Delta T^{2r_*+1} \sigma^2_{\min}\left({\boldsymbol{\mathcal{O}}^{(r_*)}}\vert_{\mathcal{H}_{r_* - 1}}\right) }{{(r_*!)}^2(2r_*+1)},
    \end{align}
    for all $\Delta T < \delta $.
\end{prop}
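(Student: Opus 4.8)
The plan is to exploit the variational characterization $\lambda_{\min}(\mathbf{W}^{(r)}) = \min_{\norm{\mathbf{v}}=1} \mathbf{v}^\top \mathbf{W}^{(r)} \mathbf{v}$, so that \emph{any} admissible unit test vector immediately furnishes an upper bound on the minimum eigenvalue. Since we work in the regime $r \geq r_*$ with $r_* > 0$, Lemma~\ref{lem:Hrspace} guarantees that $\mathcal{H}_{r_*-1}$ is nonzero, so test vectors are available. The decisive structural fact is that a unit vector $\mathbf{v} \in \mathcal{H}_{r_*-1}$ annihilates every Lie-derivative block $DL_\mathbf{f}^{k}\mathbf{h}$ for $k < r_*$ while leaving $DL_\mathbf{f}^{r_*}\mathbf{h}$ acting nontrivially, exactly as recorded in Eq.~\eqref{eq:ubcond1}. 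This forces the vector-valued integrand in Eq.~\eqref{eq:STLOGalt} to vanish to order $t^{r_*}$, delaying the leading contribution to the quadratic form to order $\Delta T^{2r_*+1}$ after integration, which is the target exponent.

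Next I would make this leading contribution as small as possible, so that the test-vector bound is tight. Among unit vectors in $\mathcal{H}_{r_*-1}$ I choose the minimizer of $\norm{(DL_\mathbf{f}^{r_*}\mathbf{h})\mathbf{v}}$; by Eq.~\eqref{eq:ubcond2} this minimum value is $\sigma_{\min}(\boldsymbol{\mathcal{O}}^{(r_*)}\vert_{\mathcal{H}_{r_*-1}})$, which is strictly positive because any vector of $\mathcal{H}_{r_*-1}$ annihilated by $DL_\mathbf{f}^{r_*}\mathbf{h}$ would lie in $\mathcal{H}_{r_*} = \{0\}$. Inserting the Taylor expansion Eq.~\eqref{eq:ubcalc1} of the integrand into Eq.~\eqref{eq:STLOGalt} and integrating term by term then separates the quadratic form into a leading piece $\frac{\Delta T^{2r_*+1}}{(r_*!)^2(2r_*+1)}\norm{(DL_\mathbf{f}^{r_*}\mathbf{h})\mathbf{v}}^2$ and a remainder of order $\Delta T^{2r_*+2}$, collected in the polynomial $\alpha(\mathbf{v},\Delta T)$.

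The final step converts this into the clean $(1+\epsilon)$ estimate. Writing $B = \sigma_{\min}^2(\boldsymbol{\mathcal{O}}^{(r_*)}\vert_{\mathcal{H}_{r_*-1}})/((r_*!)^2(2r_*+1))$ for the leading coefficient and $A$ for a bound on $\abs{\alpha(\mathbf{v},\Delta T)}$, one obtains $\lambda_{\min}(\mathbf{W}^{(r)}) \leq \mathbf{v}^\top \mathbf{W}^{(r)}\mathbf{v} \leq \Delta T^{2r_*+1}(B + A\Delta T)$. Setting $\delta = \epsilon B/A$ then forces $A\Delta T \leq \epsilon B$, hence $B + A\Delta T \leq (1+\epsilon)B$, for all $\Delta T < \delta$, which is precisely Eq.~\eqref{eq:STLOGUBdone}.

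The main obstacle I anticipate is justifying that the remainder admits a bound $A$ that is \emph{uniform} over the chosen test vector and over all sufficiently small $\Delta T$, since it is this uniformity that lets a single threshold $\delta$ work. As $\alpha(\mathbf{v},\Delta T)$ is a polynomial in $\Delta T$ whose coefficients depend continuously on $\mathbf{v}$, and the unit sphere is compact, the bound follows from a standard extreme-value argument; this is the only point requiring genuine care, the remainder being the bookkeeping of the Taylor coefficients already displayed in Eqs.~\eqref{eq:ubcalc1} and~\eqref{eq:STLOGalt}.
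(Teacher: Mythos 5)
Your proposal is correct and follows essentially the same route as the paper's own proof: the variational characterization with a unit test vector chosen to minimize $\norm{(DL_\mathbf{f}^{r_*}\mathbf{h})\mathbf{v}}$ over $\mathcal{H}_{r_*-1}$, the expansion in Eq.~\eqref{eq:ubcalc1} producing the leading $\Delta T^{2r_*+1}$ term plus an $O(\Delta T^{2r_*+2})$ remainder, and the threshold $\delta = \epsilon B/A$. Your closing compactness argument for the uniform bound $A$ on $\alpha(\mathbf{v},\Delta T)$ merely makes explicit what the paper assumes implicitly (and is not even strictly needed, since the chosen $\mathbf{v}$ is fixed), so the two proofs coincide.
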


Next, we derive the lower bound in Eq.~\eqref{eq:STLOGublb}.
We rewrite the STLOG into block matrix form
\begin{align}\label{eq:STLOGaltHilb}
    \mathbf{W}^{(r)} = \Delta T {\boldsymbol{\mathcal{O}}^{(r)}}^\top \boldsymbol{\Lambda}^{(r)} \boldsymbol{\mathcal{H}}^{(r)} \boldsymbol{\Lambda}^{(r)} {\boldsymbol{\mathcal{O}}^{(r)}},
\end{align}
where $\boldsymbol{\Lambda}^{(r)}$ is the block diagonal matrix
\begin{equation}\label{eq:Lambdadef}
    \boldsymbol{\Lambda}^{(r)} =
    \diag\begin{pmatrix}
        \mathbf{1} & \Delta T\mathbf{1} & \cdots & \frac{\Delta T^r}{r!} \mathbf{1}
    \end{pmatrix},
\end{equation}
and $\boldsymbol{\mathcal{H}}^{(r)}$ is the block matrix whose entries correspond to a finite-order Hilbert matrix~\cite{todd1961computational}:
\begin{equation}\label{eq:Hilbertmatdef}
    \boldsymbol{\mathcal{H}}^{(r)} =
    \begin{bmatrix}
        \mathbf{1}            & \frac{1}{2}\mathbf{1}   & \cdots & \frac{1}{r}\mathbf{1}    \\
        \frac{1}{2}\mathbf{1} & \frac{1}{3}\mathbf{1}   & \cdots & \frac{1}{r+1}\mathbf{1}  \\
        \vdots                & \vdots                  & \ddots & \vdots                   \\
        \frac{1}{r}\mathbf{1} & \frac{1}{r+1}\mathbf{1} & \cdots & \frac{1}{2r+1}\mathbf{1}
    \end{bmatrix}.
\end{equation}
It is well-known~\cite{todd1961computational} that the Hilbert matrix $\boldsymbol{\mathcal{H}}^{(r)}$ for each $r$ is positive definite.
This allows us to prove the lower bound in~\eqref{eq:STLOGublb} with a lower bound on $\boldsymbol{\Lambda}^{(r)} {\boldsymbol{\mathcal{O}}^{(r)}} \mathbf{v}$.

\begin{lem}
    Let $\mathbf{v} \in \mathcal{X}$ be a unit vector. Then for $\Delta T \leq 1$,
    \begin{align}\label{eq:LOlb1}
        \norm{\boldsymbol{\Lambda}^{(k)} {\boldsymbol{\mathcal{O}}^{(k)}} \mathbf{v}} \geq \frac{\Delta T^k}{k!} \sigma_{\min}\left( {\boldsymbol{\mathcal{O}}^{(k)}} \right).
    \end{align}
\end{lem}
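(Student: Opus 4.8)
The plan is to pass to the squared norm, since the block-diagonal structure of $\boldsymbol{\Lambda}^{(k)}$ from Eq.~\eqref{eq:Lambdadef} decouples cleanly there. First I would observe that $\boldsymbol{\mathcal{O}}^{(k)}\mathbf{v}$ is the stacked vector with blocks $\left(DL_\mathbf{f}^i \mathbf{h}\right)\mathbf{v}$ for $i = 0,\ldots,k$, so that applying $\boldsymbol{\Lambda}^{(k)}$ scales the $i$-th block by $\Delta T^i/i!$, giving
\[
    \norm{\boldsymbol{\Lambda}^{(k)} {\boldsymbol{\mathcal{O}}^{(k)}} \mathbf{v}}^2 = \sum_{i=0}^k \frac{\Delta T^{2i}}{{(i!)}^2} \norm{\left(DL_\mathbf{f}^i \mathbf{h}\right)\mathbf{v}}^2 .
\]

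The crux is a uniform lower bound on the block-scaling coefficients. I claim each coefficient $\Delta T^{2i}/{(i!)}^2$ with $0 \le i \le k$ is bounded below by the \emph{final} one, $\Delta T^{2k}/{(k!)}^2$. This holds because both factors are monotone non-increasing in $i$: $i! \le k!$ yields $1/{(i!)}^2 \ge 1/{(k!)}^2$, while the hypothesis $\Delta T \le 1$ yields $\Delta T^{2i} \ge \Delta T^{2k}$ for $i \le k$. This is precisely where the assumption $\Delta T \le 1$ enters, and is the only step requiring any care.

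Pulling out the minimal coefficient, I would then bound
\[
    \norm{\boldsymbol{\Lambda}^{(k)} {\boldsymbol{\mathcal{O}}^{(k)}} \mathbf{v}}^2 \ge \frac{\Delta T^{2k}}{{(k!)}^2} \sum_{i=0}^k \norm{\left(DL_\mathbf{f}^i \mathbf{h}\right)\mathbf{v}}^2 = \frac{\Delta T^{2k}}{{(k!)}^2} \norm{{\boldsymbol{\mathcal{O}}^{(k)}} \mathbf{v}}^2 ,
\]
thereby reconstituting the full observability matrix. Finally, for the unit vector $\mathbf{v}$ the minimum-singular-value bound $\norm{{\boldsymbol{\mathcal{O}}^{(k)}} \mathbf{v}} \ge \sigma_{\min}\left({\boldsymbol{\mathcal{O}}^{(k)}}\right)$ applies, so taking square roots of the chain above delivers Eq.~\eqref{eq:LOlb1}. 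The argument is elementary throughout; the main (and essentially only) obstacle is recognizing the coefficient monotonicity, so no substantive difficulty remains once that observation is in place.
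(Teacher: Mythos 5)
Your proposal is correct and takes essentially the same route as the paper: the paper compresses your blockwise computation into the one-line singular-value chain $\norm{\boldsymbol{\Lambda}^{(k)} \boldsymbol{\mathcal{O}}^{(k)} \mathbf{v}} \geq \sigma_{\min}\left(\boldsymbol{\Lambda}^{(k)}\right)\sigma_{\min}\left(\boldsymbol{\mathcal{O}}^{(k)}\right)\norm{\mathbf{v}}$, where the identification $\sigma_{\min}\left(\boldsymbol{\Lambda}^{(k)}\right) = \Delta T^k / k!$ for $\Delta T \leq 1$ is exactly your coefficient-monotonicity observation. Your squared-norm expansion is simply an explicit, elementary verification of that same fact, so the two arguments coincide in substance.
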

\begin{proof}
    By substituting the value of $\sigma_{\min}\left(\boldsymbol{\Lambda}^{(r)}\right)$ from~\eqref{eq:Lambdadef} and $\norm{\mathbf{v}} = 1$ in the last step, we get
    \begin{equation*}
        \norm{\boldsymbol{\Lambda}^{(k)} {\boldsymbol{\mathcal{O}}^{(k)}} \mathbf{v}} \geq \sigma_{\min}\left( \boldsymbol{\Lambda}^{(k)} {\boldsymbol{\mathcal{O}}^{(k)}} \right)\norm{\mathbf{v}}  \geq \sigma_{\min}\left( \boldsymbol{\Lambda}^{(k)}\right) \sigma_{\min}\left({\boldsymbol{\mathcal{O}}^{(k)}} \right)\norm{\mathbf{v}} = \frac{\Delta T^k}{k!} \sigma_{\min}\left( {\boldsymbol{\mathcal{O}}^{(k)}} \right),
    \end{equation*}
\end{proof}
The bound~\eqref{eq:LOlb1} admits a straightforward but important improvement for $r \geq r_*$:
\begin{lem}
    Let $\mathbf{v} \in \mathcal{X}$ be a unit vector, and let $r \geq r_*$. Then for $\Delta T \leq 1$,
    \begin{equation}\label{eq:LOlb2}
        \norm{\boldsymbol{\Lambda}^{(r)} {\boldsymbol{\mathcal{O}}^{(r)}} \mathbf{v} } \geq \frac{\Delta T^{r_*}}{r_*!} \sigma_{\min}\left( {\boldsymbol{\mathcal{O}}^{(r_*)}} \right),
    \end{equation}
    where $\sigma_{\min}({\boldsymbol{\mathcal{O}}^{(r_*)}})$ is positive and independent of $\Delta T$.
\end{lem}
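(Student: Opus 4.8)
The plan is to exploit a nesting structure: for $r \geq r_*$, the order-$r_*$ observability matrix $\boldsymbol{\mathcal{O}}^{(r_*)}$ occupies the leading block rows of $\boldsymbol{\mathcal{O}}^{(r)}$, and the diagonal scaling $\boldsymbol{\Lambda}^{(r)}$ from \eqref{eq:Lambdadef} restricts to exactly $\boldsymbol{\Lambda}^{(r_*)}$ on those same blocks. This lets me dominate the full scaled vector by its leading part and then invoke the already-established bound \eqref{eq:LOlb1} at $k = r_*$, which scales as $\Delta T^{r_*}$ rather than the weaker $\Delta T^{r}$ one would get from applying \eqref{eq:LOlb1} directly at order $r$.

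First I would spell out the block structure. Writing $\boldsymbol{\mathcal{O}}^{(r)}$ as the stack of the Jacobians $DL_\mathbf{f}^{k}\mathbf{h}$ for $k = 0,\ldots,r$, as in the block form preceding \eqref{eq:STLOGaltHilb}, the product $\boldsymbol{\Lambda}^{(r)}\boldsymbol{\mathcal{O}}^{(r)}\mathbf{v}$ is the stacked vector whose $k$-th block is $\tfrac{\Delta T^{k}}{k!}\left(DL_\mathbf{f}^{k}\mathbf{h}\right)\mathbf{v}$. Its first $r_* + 1$ blocks are precisely the blocks of $\boldsymbol{\Lambda}^{(r_*)}\boldsymbol{\mathcal{O}}^{(r_*)}\mathbf{v}$, so splitting the squared Euclidean norm gives
\begin{equation*}
    \norm{\boldsymbol{\Lambda}^{(r)}\boldsymbol{\mathcal{O}}^{(r)}\mathbf{v}}^2 = \norm{\boldsymbol{\Lambda}^{(r_*)}\boldsymbol{\mathcal{O}}^{(r_*)}\mathbf{v}}^2 + \sum_{k=r_*+1}^{r}\frac{\Delta T^{2k}}{(k!)^2}\norm{\left(DL_\mathbf{f}^{k}\mathbf{h}\right)\mathbf{v}}^2 \geq \norm{\boldsymbol{\Lambda}^{(r_*)}\boldsymbol{\mathcal{O}}^{(r_*)}\mathbf{v}}^2,
\end{equation*}
where the inequality follows simply by discarding the nonnegative tail sum. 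Taking square roots yields the monotonicity estimate $\norm{\boldsymbol{\Lambda}^{(r)}\boldsymbol{\mathcal{O}}^{(r)}\mathbf{v}} \geq \norm{\boldsymbol{\Lambda}^{(r_*)}\boldsymbol{\mathcal{O}}^{(r_*)}\mathbf{v}}$.

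The final step applies the preceding lemma, Eq.~\eqref{eq:LOlb1}, at $k = r_*$, giving $\norm{\boldsymbol{\Lambda}^{(r_*)}\boldsymbol{\mathcal{O}}^{(r_*)}\mathbf{v}} \geq \tfrac{\Delta T^{r_*}}{r_*!}\sigma_{\min}(\boldsymbol{\mathcal{O}}^{(r_*)})$; chaining this with the monotonicity estimate delivers \eqref{eq:LOlb2}. That $\sigma_{\min}(\boldsymbol{\mathcal{O}}^{(r_*)})$ is positive and independent of $\Delta T$ is immediate: $\boldsymbol{\mathcal{O}}^{(r_*)}$ has full column rank by the very definition of the local observability index (Lemma~\ref{lem:Hrspace}), so its smallest singular value is strictly positive, and the matrix itself carries no dependence on $\Delta T$. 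I expect no genuine analytic difficulty here; the only care required is the block bookkeeping---verifying that the leading $r_* + 1$ diagonal blocks of $\boldsymbol{\Lambda}^{(r)}$ coincide with those of $\boldsymbol{\Lambda}^{(r_*)}$---which is transparent from the diagonal form in \eqref{eq:Lambdadef}.
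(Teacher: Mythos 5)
Your proposal is correct and follows essentially the same route as the paper's proof: both drop the nonnegative tail blocks to reduce $\norm{\boldsymbol{\Lambda}^{(r)}\boldsymbol{\mathcal{O}}^{(r)}\mathbf{v}}$ to $\norm{\boldsymbol{\Lambda}^{(r_*)}\boldsymbol{\mathcal{O}}^{(r_*)}\mathbf{v}}$ via the nested block structure, then invoke the bound in Eq.~\eqref{eq:LOlb1} at $k=r_*$. Writing the tail explicitly as a sum of blocks, rather than as a single residual vector $\mathbf{w}$ as the paper does, is an immaterial presentational difference.
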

\begin{proof}
    Write the vector $\boldsymbol{\Lambda}^{(r)} {\boldsymbol{\mathcal{O}}^{(r)}} \mathbf{v}$ as a block of two vectors
    \[
        \boldsymbol{\Lambda}^{(r)} {\boldsymbol{\mathcal{O}}^{(r)}} \mathbf{v} =
        \begin{bmatrix}
            \boldsymbol{\Lambda}^{(r_*)} {\boldsymbol{\mathcal{O}}^{(r_*)}} \mathbf{v} \\
            \mathbf{w}
        \end{bmatrix},
    \]
    where $\mathbf{w}$ is a $(r-r_*)\dim\mathcal{Y} \times 1$ column vector.
    Now
    \[
        \norm{\boldsymbol{\Lambda}^{(r)} {\boldsymbol{\mathcal{O}}^{(r)}} \mathbf{v}} = \sqrt{\norm{\boldsymbol{\Lambda}^{(r_*)} {\boldsymbol{\mathcal{O}}^{(r_*)}} \mathbf{v}}^2 + \norm{\mathbf{w}}^2} \geq \norm{\boldsymbol{\Lambda}^{(r_*)} {\boldsymbol{\mathcal{O}}^{(r_*)}} \mathbf{v}} \geq \frac{\Delta T^{r_*}}{r_*!} \sigma_{\min}\left( {\boldsymbol{\mathcal{O}}^{(r_*)}} \right),
    \]
    where we used the bound~\eqref{eq:LOlb1} for $k=r_*$ in the last step.
\end{proof}
\begin{prop}\label{prop:STLOGLBdone}
    For all $\Delta T \leq 1$, we have
    \begin{equation}\label{eq:STLOGLBdone}
        \lambda_{\min}(\mathbf{W}^{(r)}) \geq \frac{a^{(r)} \Delta T^{2r_* + 1} \sigma^2_{\min}\left( {\boldsymbol{\mathcal{O}}^{(r_*)}} \right)}{{(r_*!)}^2},
    \end{equation}
    where $a^{(r)} > 0$ is the minimum eigenvalue of the Hilbert matrix of order $r$~\eqref{eq:Hilbertmatdef}.
\end{prop}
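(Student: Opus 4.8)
The plan is to combine the block-Hilbert factorization of the STLOG in Eq.~\eqref{eq:STLOGaltHilb} with the variational characterization $\lambda_{\min}(\mathbf{W}^{(r)}) = \min_{\mathbf{v} \in \mathcal{X},\ \norm{\mathbf{v}} = 1} \mathbf{v}^\top \mathbf{W}^{(r)} \mathbf{v}$ already exploited in Proposition~\ref{prop:STLOGUBdone}. For any unit vector $\mathbf{v} \in \mathcal{X}$, substituting Eq.~\eqref{eq:STLOGaltHilb} yields $\mathbf{v}^\top \mathbf{W}^{(r)} \mathbf{v} = \Delta T\, \mathbf{w}^\top \boldsymbol{\mathcal{H}}^{(r)} \mathbf{w}$, where $\mathbf{w} = \boldsymbol{\Lambda}^{(r)} {\boldsymbol{\mathcal{O}}^{(r)}} \mathbf{v}$. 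The proof then reduces to two independent lower bounds that multiply together: one on the quadratic form of the Hilbert matrix $\boldsymbol{\mathcal{H}}^{(r)}$ in terms of $\norm{\mathbf{w}}$, and one on $\norm{\mathbf{w}}$ itself, the latter being exactly the content of the already-established Lemma giving Eq.~\eqref{eq:LOlb2}.

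First I would invoke positive definiteness of $\boldsymbol{\mathcal{H}}^{(r)}$: since $a^{(r)} > 0$ is its minimum eigenvalue, $\mathbf{w}^\top \boldsymbol{\mathcal{H}}^{(r)} \mathbf{w} \geq a^{(r)} \norm{\mathbf{w}}^2$ for every $\mathbf{w}$. Combining this with Eq.~\eqref{eq:LOlb2}, which furnishes $\norm{\mathbf{w}} \geq \frac{\Delta T^{r_*}}{r_*!}\sigma_{\min}({\boldsymbol{\mathcal{O}}^{(r_*)}})$ whenever $\Delta T \leq 1$, gives
\[
    \mathbf{v}^\top \mathbf{W}^{(r)} \mathbf{v} \geq \Delta T\, a^{(r)} \norm{\mathbf{w}}^2 \geq \frac{a^{(r)} \Delta T^{2r_* + 1} \sigma^2_{\min}\left( {\boldsymbol{\mathcal{O}}^{(r_*)}} \right)}{{(r_*!)}^2}.
\]
Because the right-hand side is independent of $\mathbf{v}$, taking the minimum over all unit vectors on the left yields Eq.~\eqref{eq:STLOGLBdone}. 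Notably, unlike the upper bound in Proposition~\ref{prop:STLOGUBdone}, this chain is valid for all $\Delta T \leq 1$ rather than only for sufficiently small $\Delta T$, since no higher-order polynomial remainder needs to be absorbed — every inequality in the chain is exact.

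The one point requiring care — though not a genuine obstacle — is pinning down $a^{(r)}$ as the minimum eigenvalue of the scalar Hilbert matrix of order $r$. The block matrix $\boldsymbol{\mathcal{H}}^{(r)}$ in Eq.~\eqref{eq:Hilbertmatdef} is the Kronecker product of the ordinary $r$-th order Hilbert matrix $H^{(r)}$, with entries $1/(i+j+1)$, and the $\dim\mathcal{Y} \times \dim\mathcal{Y}$ identity. Its spectrum is therefore that of $H^{(r)}$ with each eigenvalue repeated $\dim\mathcal{Y}$ times, so $\lambda_{\min}(\boldsymbol{\mathcal{H}}^{(r)}) = \lambda_{\min}(H^{(r)}) = a^{(r)}$, which the classical references~\cite{choi1983tricks, todd1961computational} confirm to be strictly positive. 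This identification is the only place where the block structure enters; everything else follows directly from the factorization and the previously proved norm bound.
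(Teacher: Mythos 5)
Your proposal is correct and follows essentially the same route as the paper's own proof: both apply the factorization in Eq.~\eqref{eq:STLOGaltHilb}, bound the Hilbert-matrix quadratic form below by $a^{(r)}\norm{\mathbf{w}}^2$, invoke the lemma giving Eq.~\eqref{eq:LOlb2} to bound $\norm{\boldsymbol{\Lambda}^{(r)}{\boldsymbol{\mathcal{O}}^{(r)}}\mathbf{v}}$, and minimize over unit vectors. Your added remark that $\boldsymbol{\mathcal{H}}^{(r)}$ is a Kronecker product of the scalar Hilbert matrix with the identity, so $a^{(r)}$ is unambiguously the scalar Hilbert matrix's minimum eigenvalue, is a welcome clarification the paper leaves implicit, but it does not change the argument.
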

\begin{proof}
    Let $\mathbf{v}$ be a unit vector in $\mathcal{X}$. Using~\eqref{eq:STLOGaltHilb}, we apply the factorization \(\mathbf{v}^\top \mathbf{W}^{(r)} \mathbf{v} = \Delta T\langle \boldsymbol{\mathcal{H}}^{(r)} \boldsymbol{\Lambda}^{(r)} {\boldsymbol{\mathcal{O}}^{(r)}} \mathbf{v},  \boldsymbol{\Lambda}^{(r)} {\boldsymbol{\mathcal{O}}^{(r)}} \mathbf{v}\rangle,\) where $\boldsymbol{\mathcal{H}}^{(r)}$ is the Hilbert matrix~\eqref{eq:Hilbertmatdef}.
    Therefore,
    \begin{equation*}
        \mathbf{v}^\top \mathbf{W}^{(r)} \mathbf{v} \geq \Delta T a^{(r)} \norm{\boldsymbol{\Lambda}^{(r)} {\boldsymbol{\mathcal{O}}^{(r)}} \mathbf{v} }^2 \geq \Delta T a^{(r)} {\left( \frac{\Delta T^{r_*}}{r_*!} \sigma_{\min}\left( {\boldsymbol{\mathcal{O}}^{(r_*)}} \right) \right)}^2,
    \end{equation*}
    using~\eqref{eq:LOlb2} in the last step.
    Simplifying and minimizing over unit vectors $\mathbf{v} \in \mathcal{X}$ gives the inequality~\eqref{eq:STLOGLBdone}.
\end{proof}
Combining Propositions~\ref{prop:STLOGUBdone},~\ref{prop:STLOGLBdone}, we have, for all sufficiently small $\Delta T$,
\begin{align}\label{eq:STLOGboundOr}
    \alpha  \sigma^2_{\min}\left( {\boldsymbol{\mathcal{O}}^{(r_*)}} \right) \leq \frac{\lambda_{\min}(\mathbf{W}^{(r)})}{\Delta T^{2r_* + 1}} \leq \beta  \sigma^2_{\min}\left({\boldsymbol{\mathcal{O}}^{(r_*)}}\vert_{\mathcal{H}_{r_* - 1}}\right),
\end{align}
where $\alpha, \beta$ are constants.
Thus, we proved the inequality~\eqref{eq:STLOGublb} in Theorem~\ref{thm:STLOGbound} by explicitly constructing $C_1(\mathbf{x},\mathbf{u}), C_2(\mathbf{x},\mathbf{u})$ in terms of $\boldsymbol{\mathcal{O}}^{(r_*)}$ only.
The last part of Theorem~\ref{thm:STLOGbound} follows from the Bolzano--Weierstrass theorem.

\section{Derivation of the choice of STLOG order for quadrotor CLS}\label{app:choiceOfR}

This appendix provides the derivation that justifies the choice of STLOG order \(r = 5\) for the quadrotor CLS.\@

\begin{prop}\label{prop:UTdynsys1}
    Consider a dynamical system in Eqs.~\eqref{eq:dynamicalSystem} and~\eqref{eq:observation}, whose state space can be partitioned into $\mathcal{X} = \mathcal{Z} \times \mathcal{Q}$, with state $\mathbf{x} = (\mathbf{z}, \mathbf{q})$, \(\mathbf{z}\in \mathcal{Z}\), \(\mathbf{q}\in \mathcal{Q}\), such that the dynamics and the observations can be decomposed into:
    \begin{align}\label{eq:UTdynsys1}
        \begin{bmatrix}
            \dot{\mathbf{z}} \\
            \dot{\mathbf{q}}
        \end{bmatrix} & = \begin{bmatrix}
                              \mathbf{f}_{z}(\mathbf{z}, \mathbf{q}, \mathbf{u}) \\
                              \mathbf{f}_{q}(\mathbf{q}, \mathbf{u})
                          \end{bmatrix}, \quad \mathbf{y} = \mathbf{h}(\mathbf{z}, \mathbf{q}) =
        \begin{bmatrix}
            \mathbf{d}(\mathbf{z}, \mathbf{q}) \\ \mathbf{a}(\mathbf{q})
        \end{bmatrix},
    \end{align}
    where $\mathbf{f}_{z}$ and $\mathbf{f}_{q}$ are components of the dynamics $\mathbf{f} = (\mathbf{f}_z, \mathbf{f}_q)$, and $\mathbf{d}: \mathcal{Z} \times \mathcal{Q} \rightarrow \mathbb{R}^s$ and $\mathbf{a}: \mathcal{Q} \rightarrow \mathbb{R}^{p-s}$ are components for the observation map $\mathbf{h}: \mathcal{X} \rightarrow\mathcal{Y} = \mathbb{R}^p$.
    For all $(\mathbf{x}, \mathbf{u})$, the local observability index $r_*$ in Eq.~\eqref{eq:rstardef} for the system in Eq.~\eqref{eq:UTdynsys1} satisfies $s(r_*(\mathbf{x}, \mathbf{u}) + 1) \geq \dim(\mathcal{Z})$ .
\end{prop}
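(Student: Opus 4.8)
The plan is to exploit the triangular structure of Eq.~\eqref{eq:UTdynsys1}: because $\mathbf{f}_q$ is independent of $\mathbf{z}$ and $\mathbf{a}$ depends only on $\mathbf{q}$, the $\mathbf{a}$-block of the observation carries no first-order information about $\mathbf{z}$ at \emph{any} differential order. The whole burden of observing $\mathbf{z}$ therefore falls on the $\mathbf{d}$-block, which supplies only $s$ rows per Lie-derivative order. Counting these rows against $\dim\mathcal{Z}$ will yield the bound. Throughout I freeze the input $\mathbf{u}$ and treat $L_\mathbf{f}$ as a state-only differential operator, so that the iterated derivatives $L_\mathbf{f}^k\mathbf{h}$ assembled into $\boldsymbol{\mathcal{O}}^{(r)}(\mathbf{x},\mathbf{u})$ in Eq.~\eqref{eq:observabilityMatrixControl} are well-defined.

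First I would establish by induction on $k$ that $L_\mathbf{f}^k\mathbf{a}$ is a function of $\mathbf{q}$ (and the frozen $\mathbf{u}$) alone. The base case holds since $\mathbf{a}=\mathbf{a}(\mathbf{q})$. For the inductive step I write $L_\mathbf{f} g = (\nabla_\mathbf{z} g)\,\mathbf{f}_z + (\nabla_\mathbf{q} g)\,\mathbf{f}_q$; assuming $\nabla_\mathbf{z} L_\mathbf{f}^k\mathbf{a}=0$, the $\mathbf{f}_z$ term vanishes and the surviving term $(\nabla_\mathbf{q} L_\mathbf{f}^k\mathbf{a})\,\mathbf{f}_q(\mathbf{q},\mathbf{u})$ is again $\mathbf{z}$-free precisely because $\mathbf{f}_q$ does not depend on $\mathbf{z}$. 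Hence $D_\mathbf{z}\!\left(L_\mathbf{f}^k\mathbf{a}\right)=0$ for every $k$, so partitioning the columns of each block $DL_\mathbf{f}^k\mathbf{h}$ as $(\mathbf{z},\mathbf{q})$, the $\mathbf{z}$-columns of that block are supported entirely in its $\mathbf{d}$-rows.

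Next I would assemble the $\mathbf{z}$-column submatrix $\mathbf{M} = D_\mathbf{z}\,\boldsymbol{\mathcal{O}}^{(r)}$, which has $\dim\mathcal{Z}$ columns. By the previous step all its rows vanish except the $s$ rows coming from $D_\mathbf{z} L_\mathbf{f}^k\mathbf{d}$ in each of the blocks $k=0,\ldots,r$, giving at most $s(r+1)$ potentially nonzero rows, so $\rank\mathbf{M}\le s(r+1)$. Now set $r=r_*$. Full rank of $\boldsymbol{\mathcal{O}}^{(r_*)}$ as in Eq.~\eqref{eq:rstardef} means $\ker\boldsymbol{\mathcal{O}}^{(r_*)}=\{0\}$; any nonzero $\mathbf{v}_z$ with $\mathbf{M}\mathbf{v}_z=0$ would produce a nonzero kernel vector $(\mathbf{v}_z,\mathbf{0})$, a contradiction, so $\mathbf{M}$ is injective and $\rank\mathbf{M}=\dim\mathcal{Z}$. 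Combining the two inequalities gives $\dim\mathcal{Z}\le s(r_*+1)$, which is the claimed bound.

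I expect the only delicate point to be the bookkeeping in the inductive step --- making explicit that freezing $\mathbf{u}$ legitimately reduces $L_\mathbf{f}$ to a state-space operator, and that the structural hypothesis $\mathbf{f}_q=\mathbf{f}_q(\mathbf{q},\mathbf{u})$ (rather than a dependence on $\mathbf{z}$) is exactly what propagates $\mathbf{z}$-independence from $\mathbf{a}$ to all of its Lie derivatives. Everything after that is a dimension count via the rank--nullity argument, so no genuine analytic difficulty remains.
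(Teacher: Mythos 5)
Your proof is correct and takes essentially the same route as the paper's: an induction showing that every iterated Lie derivative of $\mathbf{a}$ stays $\mathbf{z}$-free (because $\mathbf{f}_q$ and $\mathbf{a}$ do not depend on $\mathbf{z}$), followed by counting the at most $s(r+1)$ nonzero rows in the first $\dim\mathcal{Z}$ columns of $\boldsymbol{\mathcal{O}}^{(r)}$. Your explicit kernel-vector construction $(\mathbf{v}_z,\mathbf{0})$ merely spells out what the paper states as rank deficiency of that column submatrix, so the two arguments coincide.
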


\begin{proof}
    The iterated Lie derivatives of \(\mathbf{h}\) along \(\mathbf{f}\) can be evaluated inductively as
    \begin{equation}
        L_\mathbf{f} \mathbf{h} (\mathbf{z},\mathbf{q}) = L_\mathbf{f}(\mathbf{d}(\mathbf{z},\mathbf{q}), \mathbf{a}(\mathbf{q})) = \left(L_{\mathbf{f}_{z}} + L_{\mathbf{f}_{q}}\right)\begin{bmatrix}
            \mathbf{d}(\mathbf{z}, \mathbf{q}) \\ \mathbf{a}(\mathbf{q})
        \end{bmatrix} =       \begin{bmatrix}
            \mathbf{d}^{(1)}(\mathbf{z}, \mathbf{q}) \\ \mathbf{a}^{(1)}(\mathbf{q})
        \end{bmatrix},\quad L^i_\mathbf{f} \mathbf{h} (\mathbf{z},\mathbf{q}) =
        \begin{bmatrix}
            \mathbf{d}^{(i)}(\mathbf{z}, \mathbf{q}) \\ \mathbf{a}^{(i)}(\mathbf{q})
        \end{bmatrix},
    \end{equation}
    for some functions $\mathbf{d}^{(i)}$ and $\mathbf{a}^{(i)}$, where $\mathbf{a}^{(i)}$ has no $\mathbf{z}$-dependence.
    We next evaluate the gradients of the Lie derivatives:
    \begin{align}
        D L^i_\mathbf{f} \mathbf{h} = \begin{bmatrix}
                                          D_\mathbf{z} \mathbf{d}^{(i)} & D_\mathbf{q} \mathbf{d}^{(i)} \\
                                          D_\mathbf{z} \mathbf{a}^{(i)} & D_\mathbf{q} \mathbf{a}^{(i)} \\
                                      \end{bmatrix} = \begin{bmatrix}
                                                          D_\mathbf{z} \mathbf{d}^{(i)} & D_\mathbf{q} \mathbf{d}^{(i)} \\
                                                          \mathbf{0}                    & D_\mathbf{q} \mathbf{a}^{(i)} \\
                                                      \end{bmatrix}.
    \end{align}

    Finally, we build the observability matrix:
    \begin{align}
        \boldsymbol{\mathcal{O}}^{(r)} = \begin{bmatrix}
                                             D_\mathbf{z} \mathbf{d}       & D_\mathbf{q} \mathbf{d}       \\
                                             \mathbf{0}                    & D_\mathbf{q} \mathbf{a}       \\
                                             \vdots                        & \vdots                        \\
                                             D_\mathbf{z} \mathbf{d}^{(r)} & D_\mathbf{q} d^{(r)}          \\
                                             \mathbf{0}                    & D_\mathbf{q} \mathbf{a}^{(r)} \\
                                         \end{bmatrix}.
    \end{align}
    The submatrix of $\boldsymbol{\mathcal{O}}^{(r)}$ consisting of the first $\dim \mathcal{Z}$ columns has no more than $s(r+1)$ nonzero rows, so the row span of $\boldsymbol{\mathcal{O}}^{(r)}$ will always be deficient if $s(r+1) < \dim \mathcal{Z}$.

\end{proof}

\begin{cor}\label{cor:rstar_quadrotorCLS}
    For the quadrotor CLS system in Eqs.~\eqref{eq:dynamicsModel} and~\eqref{eq:observationModel}, $r_* \geq 5$.
\end{cor}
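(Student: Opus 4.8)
The plan is to apply Proposition~\ref{prop:UTdynsys1} directly, after exhibiting the correct partition of the state space of the quadrotor CLS. I would identify the ``$\mathbf{q}$''-block of Proposition~\ref{prop:UTdynsys1} with the relative quaternion $\mathbf{q} \in \mathcal{S}^3$ of the quadrotor model and collect the remaining state into $\mathbf{z} = (\mathbf{r}, \mathbf{v})$, so that $\mathcal{Z}$ is the $6$-dimensional relative position--velocity space and $\mathcal{Q} = \mathcal{S}^3$. Under this partition $\dim(\mathcal{Z}) = 6$.

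The key verification step is to confirm that this partition satisfies the triangular structure required by Eq.~\eqref{eq:UTdynsys1}. First I would check the dynamics: inspecting the middle block of Eq.~\eqref{eq:dynamicsModel}, the quaternion derivative $\dot{\mathbf{q}} = -\frac{1}{2}\mathcal{I}^\star(\boldsymbol{\omega}^f)\otimes\mathbf{q} + \frac{1}{2}\mathbf{q}\otimes\mathcal{I}^\star(\boldsymbol{\omega}^l)$ involves only $\mathbf{q}$ and the body rates $\boldsymbol{\omega}^l, \boldsymbol{\omega}^f$ (which are components of $\mathbf{u}$), so it is of the autonomous form $\mathbf{f}_q(\mathbf{q}, \mathbf{u})$ with no dependence on $\mathbf{r}$ or $\mathbf{v}$. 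The remaining two blocks then furnish $\mathbf{f}_z(\mathbf{z}, \mathbf{q}, \mathbf{u})$; note that $\dot{\mathbf{v}}$ does depend on $\mathbf{q}$ through $f^l\mathbf{R}(\mathbf{q})\mathbf{1}_3$, but this is permitted, since $\mathbf{f}_z$ may depend on $\mathbf{q}$.

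Next I would check the observation in Eq.~\eqref{eq:observationModel}: the quaternion entries form the $\mathbf{q}$-only block $\mathbf{a}(\mathbf{q}) = \mathbf{q}$ of dimension $p - s = 4$, while the scalar $\frac{1}{2}d^2 = \frac{1}{2}\mathbf{r}^\top\mathbf{r}$ is the $\mathbf{z}$-coupled block $\mathbf{d}(\mathbf{z}, \mathbf{q})$ of dimension $s = 1$. With $s = 1$ and $\dim(\mathcal{Z}) = 6$, the inequality $s(r_* + 1) \geq \dim(\mathcal{Z})$ of Proposition~\ref{prop:UTdynsys1} becomes $1 \cdot (r_* + 1) \geq 6$, hence $r_* \geq 5$, as claimed.

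I do not anticipate a genuine obstacle here, since the result reduces to a structural verification followed by a single substitution into the inequality of Proposition~\ref{prop:UTdynsys1}. The only point requiring care is the bookkeeping of dimensions --- in particular, recognizing that \emph{both} $\mathbf{r}$ and $\mathbf{v}$ must be placed in $\mathcal{Z}$ (so that $\dim\mathcal{Z} = 6$ rather than $3$), and that only the \emph{single} scalar range observation, rather than a full three-dimensional position measurement, couples to $\mathcal{Z}$ (so that $s = 1$). This small value of $s$ against the comparatively large $\dim\mathcal{Z}$ is precisely what forces the high index $r_* \geq 5$.
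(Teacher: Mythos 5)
Your proposal is correct and follows exactly the same route as the paper's own proof: partition the state as $\mathbf{z} = (\mathbf{r}, \mathbf{v})$ and $\mathbf{q}$, identify $s = 1$ and $\dim\mathcal{Z} = 6$, and substitute into the inequality of Proposition~\ref{prop:UTdynsys1}. Your explicit verification of the triangular structure of the dynamics and observation model is a slightly more careful write-up of what the paper leaves implicit, but it is not a different argument.
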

\begin{proof}
    Applying Proposition~\ref{prop:UTdynsys1} to our quadrotor CLS gives $1 \times (r_*+1) \geq \dim \mathcal{Z} = 6$ as required, since:
    \begin{enumerate}
        \item $\mathcal{Z}$ is the space for the translational state, including position and velocity $\mathbf{z} = (\mathbf{r}, \mathbf{v})$, giving $\dim(\mathcal{Z}) = 6$, and
        \item $\mathbf{d}$ is the $1$-dimensional range-only measurement, giving $s=1$.
    \end{enumerate}
    by the nature of how position and velocity are partially observed through range-only measurements.
\end{proof}

\end{appendices}

\section*{Acknowledgements}%
\label{sec:acknowledgments}
We would like to acknowledge the sponsorship by the Natural Sciences and Engineering Research Council of Canada (NSERC) under the Discovery Grant RGPIN-2023-05148.

\bibliography{refs}

\end{document}